\newcommand{\markupdraft}[2]{
    \ifthenelse{\equal{#1}{display}}{#2}{}
    \ifthenelse{\equal{#1}{color}}{\color{#2}}{}
}
\newcommand{\newcolored}[3][]{{\markupdraft{color}{#2}#3}
    \ifthenelse{\equal{#1}{}}{}{\markupdraft{display}{{\color{yellow!70!black}[#1]}}}} 
\newcommand{\newalg}[1]{\,\text{\colorbox{red!15!white}{\ensuremath{\displaystyle#1}}}\,}
\newtheorem{theorem}{Theorem}[section]
\theoremstyle{definition}
\DeclareMathOperator{\Tr}{Tr}     
\DeclareMathOperator{\Cond}{Cond} 
\DeclareMathOperator{\diag}{diag} 
\providecommand{\R}{\mathbb{R}} 
\providecommand{\E}{\mathbb{E}} 
\providecommand{\T}{\mathsf{T}} 
\renewcommand{\geq}{\geqslant} 
\renewcommand{\leq}{\leqslant} 
\newcommand{\ve}[1]{\ensuremath{\mathchoice%
  {\mbox{\boldmath$\displaystyle#1$}}
  {\mbox{\boldmath$\textstyle#1$}} {\mbox{\boldmath$\scriptstyle#1$}}
  {\mbox{\boldmath$\scriptscriptstyle#1$}}}}
\DeclarePairedDelimiterX{\inner}[2]{\langle}{\rangle}{#1, #2}
\DeclarePairedDelimiter{\norm}{\lVert}{\rVert}
\DeclarePairedDelimiter{\abs}{\lvert}{\rvert}
\providecommand{\m}{\ensuremath{\ve{m}}}
\providecommand{\C}{\ensuremath{\mathbf{C}}}
\providecommand{\Ccorr}{\ensuremath{\mathrm{corr}(\C)}}
\providecommand{\Ccorr}{\ensuremath{\C_{\mathrm{corr}}}}
\providecommand{\Lam}{\ensuremath{\mathbf{\Lambda}}}
\providecommand{\pc}{\ensuremath{\ve{p}_{c}}}
\providecommand{\ps}{\ensuremath{\ve{p}_{\sigma}}}
\providecommand{\cm}{\ensuremath{c_m}}
\providecommand{\cone}{\ensuremath{c_1}}
\providecommand{\cmu}{\ensuremath{c_\mu}}
\providecommand{\cc}{\ensuremath{c_c}}
\providecommand{\cs}{\ensuremath{c_\sigma}}
\providecommand{\ds}{\ensuremath{d_\sigma}}
\providecommand{\mueff}{\ensuremath{\mu_{w}}}
\providecommand{\hsig}{h_\sigma}
\providecommand{\gs}{\gamma_\sigma}
\providecommand{\gc}{\gamma_c}
\newcommand{\bw}{w}
\providecommand{\NN}{\mathcal{N}}
\providecommand{\xx}{\ve{x}}
\providecommand{\yy}{\ve{y}}
\providecommand{\zz}{\ve{z}}
\providecommand{\D}{\ensuremath{\mathbf{D}}}
\providecommand{\eye}{\ensuremath{\mathbf{I}}}
\newcommand{\teig}{\ensuremath{t_\text{eig}}}
\newcommand{\DD}{dd-}
\newcommand{\DDCMA}{\DD{}CMA}
\begin{document}

\newcommand{\mytitle}{Diagonal Acceleration for Covariance Matrix Adaptation Evolution Strategies}
\title{\bf \mytitle}  

\author{\name{\bf Y. Akimoto} \hfill \addr{akimoto@cs.tsukuba.ac.jp}\\ 
        \addr{University of Tsukuba, 1-1-1 Tennodai, Tsukuba, JAPAN}
\AND
       \name{\bf N. Hansen} \hfill \addr{nikolaus.hansen@inria.fr}\\
        \addr{Inria, RandOpt Team, CMAP, Ecole Polytechnique, Palaiseau, FRANCE}
}

\maketitle

\begin{abstract}%
We introduce an acceleration for covariance matrix adaptation evolution strategies (CMA-ES) by means of \emph{adaptive diagonal decoding} (\DDCMA). This diagonal acceleration endows the default CMA-ES with the advantages of separable CMA-ES without inheriting its drawbacks.
Technically, we introduce a diagonal matrix $D$ that expresses coordinate-wise variances of the sampling distribution in $DCD$ form.
The diagonal matrix can learn a rescaling of the problem in the coordinates within linear number of function evaluations.
Diagonal decoding can also exploit separability of the problem, but, crucially, does not compromise the performance on non-separable problems.
The latter is accomplished by modulating the learning rate for the diagonal matrix based on the condition number of the underlying correlation matrix.
\DDCMA-ES not only combines the advantages of default and separable CMA-ES, but may achieve overadditive speedup: it improves the performance, and even the scaling, of the better of default and separable CMA-ES on classes of non-separable test functions that reflect, arguably, a landscape feature commonly observed in practice.

The paper makes two further secondary contributions:
we introduce two different approaches to guarantee positive definiteness of the covariance matrix with active CMA, which is valuable in particular with large population size;
we revise the default parameter setting in CMA-ES, proposing accelerated settings in particular for large dimension.

All our contributions can be viewed as independent improvements of CMA-ES, yet they are also complementary and can be seamlessly combined. In numerical experiments with \DDCMA-ES up to dimension 5120, we observe remarkable improvements over the original covariance matrix adaptation on functions with coordinate-wise ill-conditioning. The improvement is observed also for large population sizes up to about dimension squared.
\end{abstract}

\begin{keywords}
  Evolution strategies,
  covariance matrix adaptation,
  adaptive diagonal decoding,
  active covariance matrix update,
  default strategy parameters.
\end{keywords}


\section{Introduction}

In real world applications of continuous optimization involving simulations such as physics or chemical simulations, the input-output relation between a candidate solution and its objective function value is barely expressible in explicit mathematical formula. The objective function value is computed through a complex simulation with a candidate solution as an input. In such scenarios, we gain the information of the problem only through the evaluation of the objective function value of a given candidate solution. A continuous optimization of $f:\R^{n} \to \R$ is referred to as black-box continuous optimization if we gain the information of the problem only through the evaluation $\xx \mapsto f(\xx)$ of a given candidate solution $\xx \in \R^{n}$.

Black-box continuous optimization arises widely in real world applications such as model parameter calibration and design of robot controller. It often involves computationally expensive simulation to evaluate the quality of candidate solutions. The search cost of black-box continuous optimization is therefore the number of simulations, i.e., the number of objective function calls, and a search algorithm is desired to locate good quality solutions with as few $f$-calls as possible. Practitioners need to choose one or a few search algorithms to solve their problems and tune their hyper-parameters based on the prior knowledge into their problems. However, prior knowledge is often limited in the black-box situation due to the black-box relation between $\xx$ and $f(\xx)$. Hence, algorithm selection, as well as hyper-parameter tuning, is a tedious task for practitioners who are typically not experts in search algorithms.

Covariance Matrix Adaptation Evolution Strategy (CMA-ES), developed by \citet{Hansen2001ec,Hansen2003ec,Hansen2004ppsn,Jastrebski2006cec}, is recognized as a state-of-the-art derivative-free search algorithm for \emph{difficult} continuous optimization problems \citep{rios2013derivative}. CMA-ES is a stochastic and comparison-based search algorithm that maintains the multivariate normal distribution as a sampling distribution of candidate solutions. The distribution parameters such as the mean vector and the covariance matrix are updated at each iteration based on the candidate solutions and their objective value ranking, so that the sampling distribution will produce promising candidate solutions more likely in the next algorithmic iteration. The update of the distribution parameters is partially found as the natural gradient ascent on the manifold of the distribution parameter space equipped with the Fisher metric \citep{Akimoto2010ppsn,Glasmachers2010gecco,Ollivier2017jmlr},
thereby revealing the connection to natural evolution strategies \citet{Wierstra2008cec,Sun2009gecco,Glasmachers2010gecco,wierstra2014jmlr}, whose parameter update is derived explicitly from the natural gradient principle.

Invariance \citep{Hansen2000ppsn,Hansen2011impacts} is one of the governing principles of the design of CMA-ES and the essence of its success on \emph{difficult} continuous optimization problems consisting of ruggedness, ill-conditioning, and non-separability. CMA-ES exhibits several invariance properties such as invariance to order preserving transformation of the objective function, invariance to translation, rotation and coordinate-wise scaling of the search space \citep{Hansen:2013vt}. Invariance guarantees the algorithm to work identically on an original problem and its transformed version.
Thanks to its invariance properties, CMA-ES works, after an adaptation phase, equally well on separable and well-conditioned functions, which are easy for most search algorithms, and on non-separable and ill-conditioned functions produced by an affine coordinate transformation of the former, which are considered difficult for many other search algorithms. This also contributes to allow default hyper-parameter values to depend solely on the search space dimension and the population size, whereas many other search algorithms need tuning depending on problem difficulties to make the algorithms efficient \citep{Karafotias2015}.

On the other hand, exploiting problem structure, if possible, is beneficial for optimization speed. Different variants of CMA-ES have been proposed to exploit problem structure such as separability and limited variable dependency. They aim to achieve a better scaling with the dimension \citep{Knight:2007,Ros2008ppsn,Akimoto2014a,akimoto2016gecco,Loshchilov:2017}. However, they lose some of the invariance properties that CMA-ES has and compromise the performance on problems where their specific, more or less restrictive assumptions on the problem structure do not hold. For instance, the separable CMA-ES \citep{Ros2008ppsn} reduces the degrees of freedom of the covariance matrix from $(n^2 + n) / 2$ to $n$ by adapting only the diagonal elements of the covariance matrix. It scales better in terms of internal time and space complexity and in terms of number of function evaluations to adapt the coordinate-wise variance of the search distribution.
Good results are hence observed on functions with weak variable dependencies. However, unsurprisingly, the convergence speed of the separable CMA is significantly deteriorated on non-separable and ill-conditioned functions, where the shape of the level sets of the objective function can not be reasonably well approximated by the equi-probability ellipsoid defined by the normal distribution with diagonal (co)variance matrix.

In this paper, we aim to improve the performance of CMA-ES on a relatively wide class of problems
by exploiting problem structure, however crucially, without compromising the performance on more difficult problems without this structure.\footnote{%
Any covariance matrix, $\bm{\Sigma}$, can be uniquely decomposed into $\bm{\Sigma}=\D\C\D$, where $\D$ is a diagonal matrix and $\C$ is a correlation matrix.
The addressed problem class can be characterized in that for the best problem approximation $\bm{\Sigma}=\D\C\D$ \emph{both matrices, \C\ and $\D$, have non-negligible condition number, say no less than $100$.}
}%

The first mechanism we are concerned with is the so-called active covariance matrix update, which was originally proposed for the $(\mu, \lambda)$-CMA-ES with intermediate recombination \citep{Jastrebski2006cec}, and later incorporated into the $(1+1)$-CMA-ES \citep{Arnold2010gecco}.
It utilizes unpromising solutions to actively decrease the eigenvalues of the covariance matrix.
The active update consistently improves the adaptation speed of the covariance matrix in particular on functions where a low dimensional subspace dominates the function value. The positive definiteness of the covariance matrix is however not guaranteed when the active update is utilized. Practically, a small enough learning rate of the covariance matrix is sufficient to keep the covariance matrix positive definite with overwhelming probability, however, we would like to increase the learning rate when the population size is large. We propose two novel schemes that guarantee the positive definiteness of the covariance matrix, so that we take advantage of the active update even when a large population size is desired, e.g., when the objective function evaluations are distributed on many CPUs or when the objective function is rugged.

%
The main contribution of this paper is
the diagonal acceleration of CMA by means of \emph{adaptive diagonal decoding}, referred to as \DDCMA.
We introduce a coordinate-wise variance matrix, $\D^2$, of the sampling distribution alongside the positive definite symmetric matrix $\C$, such that the resulting covariance matrix of the sampling distribution is represented by $\D \C \D$. We call $\D$ the diagonal decoding matrix. We update $\C$ with the original CMA, whereas $\D$ is updated similarly to separable CMA. An important point is that we want to update $\D$ faster than $\C$, by setting higher learning rates for the $\D$ update.
However, when $\C$ contains non-zero covariances, the update of $\D$ can result in a drastic change of the sampling distribution and disturb the adaptation of $\C$.
To resolve this issue, we introduce an adaptive damping mechanism for the $\D$ update, so that the difference (e.g., Kullback-Leibler divergence) between the distributions before and after the update remains sufficiently small. With this damping, $\D$ is updated as fast as by separable CMA on a separable function if the correlation matrix of the sampling distribution is close to the identity, and it suppresses the $\D$ update when the correlation matrix is away from the identity, i.e., variable dependencies have been learned.

The update of $\D$ breaks the rotation invariance of the original CMA, hence we lose a mathematical guarantee of the original CMA that it performs identical on functions in an equivalence group defined by this invariance property.
The ultimate aim of this work is to gain significant speed-up in some situations and to preserve the performance of the original CMA in the worst case.
Functions where we expect that
the diagonally accelerated CMA outperforms the original one have variables with different sensitivities, i.e., coordinate-wise ill-conditioning. Such functions may often appear in practice, since variables in a black-box continuous optimization problem can have distinct meanings. Diagonal acceleration however can even be superior to the optimal additive portfolio of the original CMA and the separable CMA. We demonstrate in numerical experiments that \DDCMA{} outperforms the original CMA not only on separable functions but also on non-separable ill-conditioned functions with coordinate-wise ill-conditioning that separable CMA can not efficiently solve.

The last contribution is a set of improved and simplified default parameter settings for the covariance matrix adaptation and for the cumulation factor for the so-called evolution path. These learning rates, whose default values have been previously expressed with independent formulas, are reformulated so that their dependencies are clearer.
The new default learning rates also improve the adaptation speed of the covariance matrix on high dimensional problems without compromising stability.

The rest of this paper is organized as follows. We introduce the original and separable CMA-ES in Section~\ref{sec:cmaes}. The active update of the covariance matrix with positive definiteness guarantee is proposed in Section~\ref{sec:active}. The adaptive diagonal decoding mechanism is introduced in Section~\ref{sec:add}. Section~\ref{sec:param} is devoted to explain the renewed default hyper-parameter values for the covariance matrix adaptation and provides an algorithm summary of \DDCMA-ES and a link to publicly available Python code. Numerical experiments are conducted in Section~\ref{sec:exp} to see how effective each component of CMA with diagonal acceleration works in different situations. We conclude the paper in Section~\ref{sec:conc}.

\section{Evolution Strategy with Covariance Matrix Adaptation}\label{sec:cmaes}

We summon up the ($\mueff$, $\lambda$)-CMA-ES consisting of weighted recombination, cumulative step-size adaptation, and rank-one and rank-$\mu$ covariance matrix adaptation.


The CMA-ES maintains the multivariate normal distribution, $\mathcal{N}(\m, \sigma^2 \D\C\D)$, where $\m \in \R^n$ is the mean vector that represents the center of the search distribution, $\sigma \in \R_{+}$ is the so-called step-size that represents the scaling factor of the distribution spread, and $\C\in \R^{n\times n} $ is a positive definite symmetric matrix that represents the shape of the distribution ellipsoid. Though the covariance matrix of the sampling distribution is $\sigma^2 \D \C \D$, we often call $\C$ the covariance matrix in the context of CMA-ES. In this paper, we apply this terminology, and we will call $\sigma^2\D\C\D$ the covariance matrix of the sampling distribution to distinguish them when necessary. The positive definite diagonal matrix  $\D \in \R^{n\times n}$ is regarded as a diagonal decoding matrix, which represents the scaling factor of each design variable. It is fixed during the optimization, usually $\D = \eye$, and does not appear in the standard terminology. However, it plays an important role in this paper, and we define the CMA-ES with $\D$ for the later modification.

\subsection{Original CMA-ES}\label{sec:cma}

The CMA-ES repeats the following steps until it meets one or more termination criteria:
\begin{enumerate}
\item Sample $\lambda$ candidate solutions, $\xx_{i}$, independently from $\mathcal{N}(\m, \sigma^2 \D\C\D)$; 
\item Evaluate candidate solutions on the objective, $f(\xx_{i})$, and sort them in ascending order\footnote{Ties, $f(\xx_{i:\lambda}) = \cdots = f(\xx_{i+k-1:\lambda})$, are treated by redistributing the averaged recombination weights $\sum_{l=0}^{k-1} w_{i+l} / k$ to tied solutions $\xx_{i:\lambda}, \dots, \xx_{i+k-1:\lambda}$.}, $f(\xx_{1:\lambda}) \le \cdots \le f(\xx_{\lambda:\lambda})$, where $i:\lambda$ is the index of the $i$-th best candidate solution among $\xx_1, \dots, \xx_\lambda$; 
\item Update the distribution parameters, $\m$, $\sigma$, and $\C$.
\end{enumerate}

\paragraph{Sampling and Recombination} To generate candidate solutions, we compute the (unique, symmetric) square root of the covariance matrix $\sqrt{\C}$ obeying $\C^{(t)} = \big(\sqrt{\C}\big)^2$. The candidate solutions are the affine transformation of independent and standard normally distributed random vectors,
\begin{equation}
 \begin{split}
   \zz_{i} &\sim \mathcal{N}(\bm{0}, \eye) \enspace, \\
   \yy_{i} &= \sqrt{\C} \zz_{i} \sim \mathcal{N}(\bm{0}, \C) \enspace,\\
   \xx_i &= \m^{(t)} + \sigma^{(t)} \D \yy_{i}  \sim \mathcal{N}(\m^{(t)}, (\sigma^{(t)})^2 \D\C\D) \enspace.
 \end{split}
 \label{eq:sampling}
 \end{equation}
 The default population size is $\lambda = 4 + \lfloor 3 \ln n \rfloor$. To reduce the time complexity per $f$-call without compromising the performance, we compute the matrix decomposition $\C^{(t)} = \big(\sqrt{\C}\big)^2$ every $t_\text{eig} = \max\big(1, \big\lfloor (\beta_\mathrm{eig}  (\cone + \cmu) )^{-1} \big\rfloor\big)$ iteration, where $\cone$ and $\cmu$ are the learning rates for the covariance matrix adaptation that appear later, and $\beta_\mathrm{eig} = 10 n$. If the learning rates are small enough ($\cone + \cmu \leq (2 \beta_\mathrm{eig})^{-1}$), the covariance matrix is regarded as insignificantly changing in each iteration and we stall the decomposition. 

The mean vector $\m$ is updated by taking the weighted average of the promising candidate directions,
 \begin{equation}
   \m^{(t+1)} = \m^{(t)} + \cm \sum_{i=1}^{\mu} w_i (\xx_{i:\lambda} - \m) \enspace, \label{eq:m}
 \end{equation}
 where $\cm$ is the learning rate for the mean vector update, usually $\cm = 1$.
 The number of promising candidate solutions are denoted by $\mu$, and $\big(w_i\big)_{i=1}^{\mu}$ are recombination weights satisfying $w_i > 0$ for $i \leq \mu$. A standard choice is $w_i \propto \ln\frac{\lambda+1}{2} - \ln i$ for $i = 1, \dots, \mu$ and $\mu = \lfloor \lambda / 2 \rfloor$
 and $\sum_{i=1}^\mu w_i = 1$.

\paragraph{Step-Size Adaptation} The cumulative step-size adaptation (CSA)
updates the step-size $\sigma$ based on the length of the evolution path that accumulates the mean shift normalized by the current distribution covariance, i.e.,%
\footnote{When $\D$ is not the identity, \eqref{eq:ps} is not exactly equivalent to the original CSA \citep{Hansen2001ec}: $\zz_{i:\lambda} = (\D\sqrt{\C})^{-1}(\xx_{i:\lambda} - \m) / \sigma$ in this paper whereas originally $\zz_{i:\lambda} = \sqrt{\D\C\D}^{-1}(\xx_{i:\lambda} - \m) / \sigma$. This difference results in rotating the second term of the RHS of \eqref{eq:ps} at each iteration with a different orthogonal matrix, and ends up in a different $\norm{\ps}$. \citet{Krause2016nips} have theoretically studied the effect of this difference and argued that this systematic difference becomes small if the parameterization of the covariance matrix of the sampling distribution is unique and it converges up to scale. If $\D$ is fixed, the parameterization is unique. Later in this paper, we update both $\D$ and $\C$ but we force the parameterization to be unique by \eqref{eq:dc1} and \eqref{eq:dc2}. Hence the systematic difference is expected to be small. See \citet{Krause2016nips} for details.}
 \begin{align}
   \ps^{(t+1)} &= (1 - \cs) \ps^{(t)} + \sqrt{\cs (2 - \cs) \mueff} \sum_{i=1}^{\mu} w_i \zz_{i:\lambda} \enspace,\label{eq:ps}\\
   \gs^{(t+1)} &= (1 - \cs)^2 \gs^{(t)} + \cs (2 - \cs)  \enspace,\label{eq:gs}
   \end{align}
where $\mueff = 1/ \sum_{i=1}^{\mu} w_i^2$ is the so-called variance effective selection mass, and $\cs$ is the inverse time horizon for the $\ps$ update, aka cumulation factor. The scalar $\gs^{(t+1)}$ is a correction factor for small $t$ and converges to $1$, where $\gs^{(0)} = \|\ps^{(0)}\|^2/n = 0$.\footnote{%
    An elegant alternative to introducing $\gs$ is to use $\cs^{(t)} = \max(\cs, 1/t)$ in place of \cs\ in \eqref{eq:ps}, assuming the first $t=1$.
    This resembles a simple average while $t\le1/\cs$ and only afterwards discounts older information by the original decay of $1-\cs$.
}
The log step-size is updated as
 \begin{equation}
   \ln \sigma^{(t+1)} = \ln \sigma^{(t)} + \frac{\cs}{\ds}\left(\frac{\norm{\ps^{(t+1)}}}{\chi_{n}} - \sqrt{\gs^{(t+1)}}\right)
   \enspace,
   \label{eq:sigma}
 \end{equation}
 where $\ds$ is the damping factor for the $\sigma$ update and $\chi_{n} = \sqrt{n}\big(1 - \frac{1}{4n} + \frac{1}{21n^2}\big)$ is an approximation of the expected value of the norm of an $n$-dimensional standard normally distributed random vector, $\sqrt{2} \Gamma\big(\frac{n+1}{2}\big) / \Gamma\big(\frac{n}{2}\big)$. The default values for $\cs$ and $\ds$ are
 \begin{align}
   \cs &= \frac{\mueff + 2}{n + \mueff + 5} \label{eq:cs}\\
   \ds &= 1 + \cs + 2 \max\left( 0,\ \sqrt{\frac{\mueff - 1}{n + 1}} - 1\right) \enspace. \label{eq:ds} 
 \end{align}
The damping parameter $\ds$ is introduced to stabilize the step-size adaptation when the population size is large 
\citep{Hansen2004ppsn}. When the step-size becomes too small by accident or is initialized so, the norm of the evolution path will become $O\big(\sqrt{\mueff / n}\big)$, which results in a quick increase of $\sigma$ if $\ds=1$ \citep{Akimoto2008gecco}. For large $\mueff$, the chosen damping factor $\ds$ prevents an unreasonable increase of $\sigma$ at the price of a reduced convergence speed. In case of $\mueff \gg n$, the covariance matrix adaptation is the main component decreasing the overall variance of the sampling distribution, while the CSA is still effective to increase $\sigma$ when necessary.


\paragraph{Covariance Matrix Adaptation}

 The covariance matrix $\C$ is updated by the following formula that combines rank-one and rank-$\mu$ update
\begin{multline}
  \C^{(t+1)} = \left(1 - \cone\gc^{(t+1)} - \cmu \sum_{i=1}^{\mu} w_{i}\right) \C^{(t)} \\
    + \cone \left(\D^{-1}\pc^{(t+1)}\right) \left(\D^{-1}\pc^{(t+1)}\right)^\T + \cmu\sum_{i=1}^{\mu} w_{i} \yy_{i:\lambda} \yy_{i:\lambda}^\T \enspace,
  \label{eq:cma}
\end{multline}
where $\cone$ and $\cmu$ are the learning rates for the rank-one update (2nd term) and the rank-$\mu$ update (3rd term), respectively, $\pc$ is the evolution path that accumulates the successive mean movements and $\gc$ is the correction factor for the rank-one update, which are updated as
\begin{align}
  \pc^{(t+1)} &= (1 - \cc) \pc^{(t)} + \hsig^{(t+1)} \sqrt{\cc (2 - \cc) \mueff} \sum_{i=1}^{\mu} w_i \D \yy_{i:\lambda} \enspace,\label{eq:pc}\\
  \gc^{(t+1)} &= (1 - \cc)^2 \gc^{(t)} + \hsig^{(t+1)} \cc (2 - \cc)  \enspace,\label{eq:gc}
\end{align}
where $\cc$ is the inverse time horizon for the $\pc$ update. The Heaviside function $\hsig^{(t+1)}$ is introduced to stall the update of $\pc$ if $\norm{\ps}$ is large, i.e., when the step-size is rapidly increasing. It is defined as
\begin{equation}
  \hsig^{(t+1)} = \begin{cases} 1 & \text{if } \norm{\ps^{(t+1)}}^2 / \gs^{(t+1)} < \big(2 + \frac{4}{n+1}\big) n \\
    0 & \text{otherwise}
  \end{cases}.
  \label{eq:hs}
\end{equation}
The default parameters for $\cone$, $\cmu$, and $\cc$ are smaller than one and presented later. 

\subsection{Separable Covariance Matrix Adaptation}\label{sec:sep}

The separable covariance matrix adaptation (sep-CMA, \citet{Ros2008ppsn}) adapts only the coordinate-wise variance of the sampling distribution, i.e., the diagonal elements of the covariance matrix in the same way as \eqref{eq:cma}, but with larger learning rates.
In our notation scheme, we keep \C\ to be the identity and describe sep-CMA by updating \D.
The update of coordinate $k$ follows
\begin{multline}
  [\D^{(t+1)}]_{k,k}^2 = [\D^{(t)}]_{k,k}^2\Bigg( 1 - \cone\gc^{(t+1)} - \cmu \sum_{i=1}^{\mu} w_{i} \\ + \cone [\pc^{(t+1)}]_{k}^2/[\D^{(t)}]_{k,k}^2  + \cmu \sum_{i=1}^{\mu} w_{i} \left[\zz_{i:\lambda}\right]_{k}^2\Bigg) \enspace.
  \label{eq:sep}
\end{multline}
The learning rates $\cone$ and $\cmu$ are set differently from those used for the $\C$ update.

One advantage of the separable CMA is that all operations can be done in linear time per $f$-call. Therefore, it is promising if $f$-calls are cheap. The other advantage is that one can set the learning rate greater than those used for the $\C$ update, since the degrees of freedom of the covariance matrix of the sampling distribution is $n$, rather than $n(n+1)/2$. The adaptation speed of the covariance matrix is faster than for the original CMA. However, if the problem is non-separable and has strong variable dependencies, adapting the coordinate-wise scaling is not enough to make the search efficient. More concisely, if the inverse Hessian of the objective function, $\text{Hess}(f)^{-1}$, is not well approximated by a diagonal matrix, the convergence speed will be $O(1/\Cond(\D^2 \text{Hess}(f)))$, which is empirically observed on convex quadratic functions as well as theoretically deduced in \citet{AAH2017TCS}.
In practice, it is rarely known in advance whether the separable CMA is appropriate or not. \citet{Ros2008ppsn} propose to use the separable CMA for hundred times dimension function evaluations and then switch to the original CMA afterwards. Such an algorithm has been benchmarked in \citet{Ros:2009:BSB}, where the first $1 + 100 n /\sqrt{\lambda}$ iterations are used for the separable CMA.

\section{Active covariance matrix update with guarantee of positive definiteness}\label{sec:active}

\begin{figure}[t]
  \centering
  \begin{subfigure}{0.33\textwidth}%
    \centering%
    \includegraphics[trim={2cm 2cm 1.5cm 1.5cm},clip,width=\hsize]{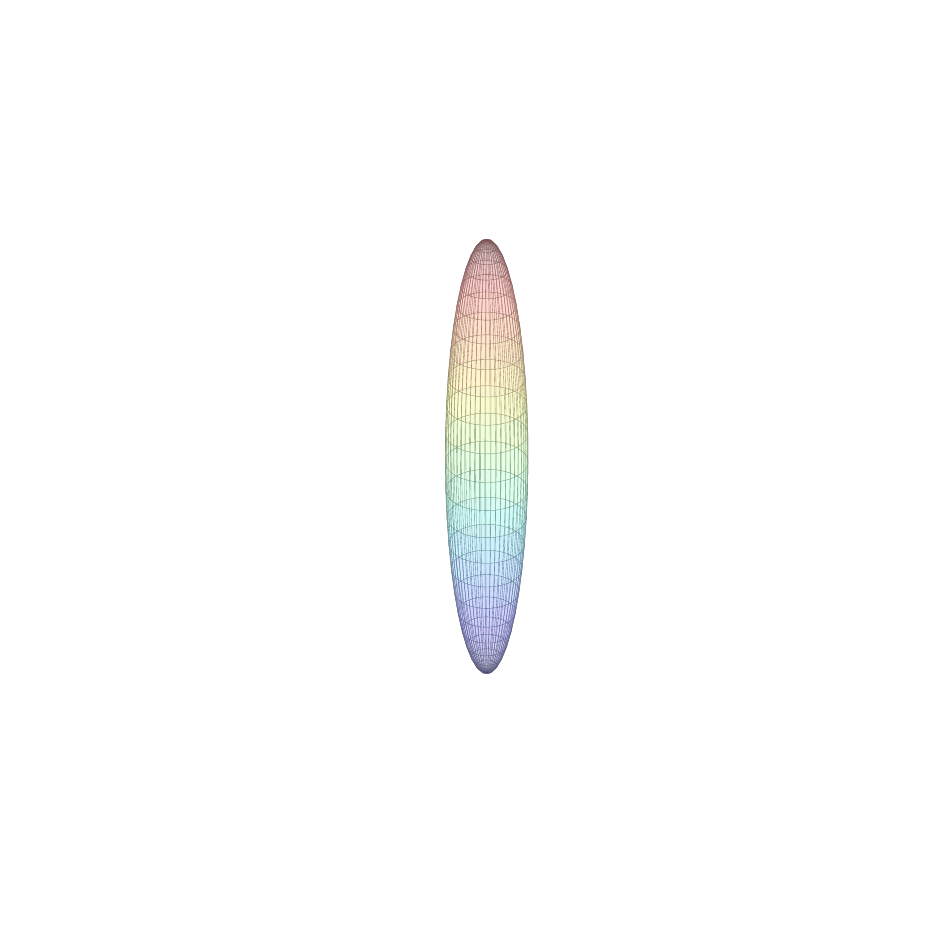}%
    \caption{Cigar type}%
    \label{fig:cigar}
  \end{subfigure}%
  \begin{subfigure}{0.33\textwidth}%
    \centering%
    \includegraphics[trim={2cm 2cm 1.5cm 1.5cm},clip,width=\hsize]{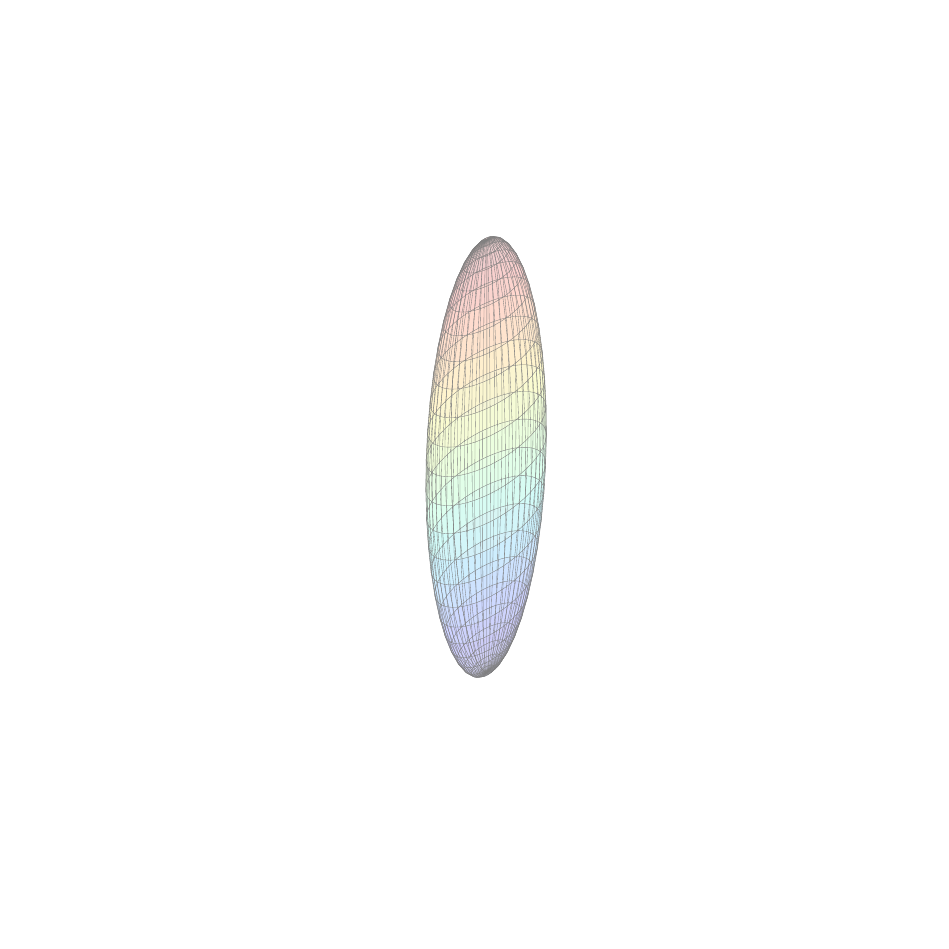}%
    \caption{Ellipsoid type}%
        \label{fig:ellipsoid}
  \end{subfigure}%
  \begin{subfigure}{0.33\textwidth}%
    \centering%
    \includegraphics[trim={2cm 2cm 1.5cm 1.5cm},clip,width=\hsize]{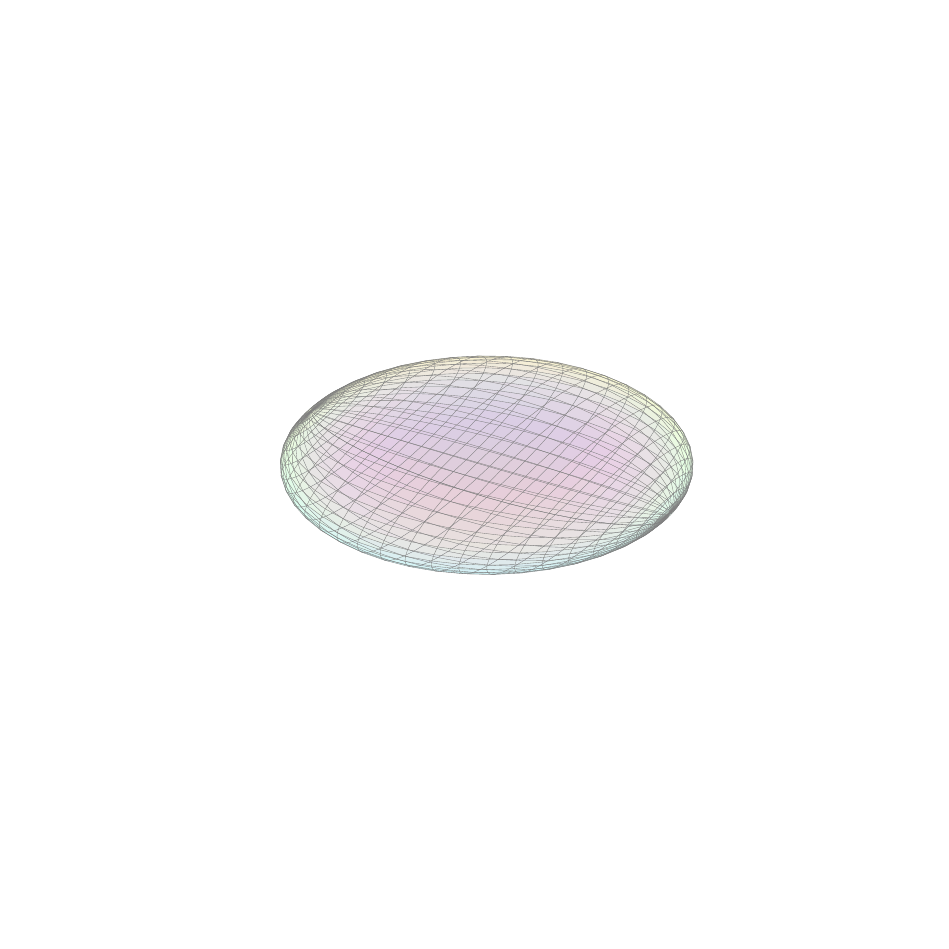}%
    \caption{Discus type}%
    \label{fig:discus}    
  \end{subfigure}%
  \caption{Levelset of three convex quadratic functions $\xx^\T \mathbf{H} \xx$ with Hessian $\mathbf{H}$ of condition number $30$. Cigar type: $\mathbf{H} = \diag(30, 30, 1)$, Ellipsoid type: $\mathbf{H} = \diag(1, \sqrt{30}, 30)$, Discus type: $\mathbf{H} = \diag(1, 1, 30)$.}
  \label{fig:acma}
\end{figure}

Active covariance matrix adaptation (referred to as Active-CMA, \citealp{Jastrebski2006cec}) utilizes the information of unpromising solutions, $\xx_{i:\lambda}$ for $i > \mu$, to update the covariance matrix. The modification is rather simple. We prepare $\lambda$ recombination weights $(\bw_i)_{i=1}^{\lambda}$ for the active covariance matrix update, where $\bw_i$ are not anymore restricted to be positive. For example, the recombination weights used in \citet{Jastrebski2006cec} are
\begin{equation}
  \bw_i = \begin{cases}
    1/\mu & (i \leq \mu) \\
    0 & (\mu < i \leq \lambda - \mu) \\
    - 1/\mu & (\lambda - \mu < i) \\
  \end{cases}
  \label{eq:activemu}
\end{equation}
The update formula \eqref{eq:cma} is then replaced with
\begin{multline}
  \C^{(t+1)} = \left(1 - \cone\gc^{(t+1)} - \cmu \sum_{i=1}^{\newalg{\lambda}} \bw_i\right) \C^{(t)} \\
    + \cone \left(\D^{-1}\pc^{(t+1)}\right) \left(\D^{-1}\pc^{(t+1)}\right)^\T + \cmu \sum_{i=1}^{\newalg{\lambda}} \bw_{i} \yy_{i:\lambda} \yy_{i:\lambda}^\T \enspace,
  \label{eq:acma}
\end{multline}
where shaded areas depict the difference to \eqref{eq:cma}.
The only difference is that we use all $\lambda$ candidate solutions to update the covariance matrix with positive and negative recombination weights.%
\footnote{%
As of 2018, many implementations of CMA-ES feature the active update of \C\ and it should be considered as default.
}

Each component of the covariance matrix adaptation, rank-one update, rank-$\mu$ update, and active update, produces complementary effects. The rank-one update of the covariance matrix (the second term on the RHS of \eqref{eq:acma}, \citealp{Hansen2001ec}) accumulates the successive steps of the distribution mean and increases the eigenvalues of the covariance matrix in the direction of the successive movements. It excels at learning one long axis of the covariance matrix, and is effective on functions with a subspace of a relatively small dimension where the function value is less sensitive than in its orthogonal subspace.
Figure~\ref{fig:cigar} visualizes an example of such function. See also Figure~7 in \citet{Hansen:2013vt} for numerical results. On the other hand, since the update is always of rank one, the learning rate $\cone$ needs to be sufficiently small to keep the covariance matrix regular and stable.

The rank-$\mu$ update (the third term on the RHS of \eqref{eq:acma} with positive $w_i$, \citealp{Hansen2003ec}) utilizes the information of $\mu$ successful candidate solutions in a different way than the rank-one update. It computes the empirical covariance matrix of successful mutation steps $\yy_i$. The update matrix is with probability one of rank $\min(n, \mu)$, allowing to set a relatively large learning rate $\cmu$. It reduces the number of iterations to adapt the covariance matrix when $\lambda \gg 1$.

Both, rank-one and rank-$\mu$ update, try to increase the variances in the subspace of successful mutation steps.
The eigenvalues corresponding to unsuccessful mutation steps only passively fade out. The active update actively decreases such eigenvalues. It consistently accelerates covariance matrix adaptation, and the improvement is particularly pronounced on functions with a small number of dominating eigenvalues of the Hessian of the objective function. Figure~\ref{fig:discus} is an example of such function.

A disadvantage of the active update with negative recombination weights such as \eqref{eq:activemu} is to have no guarantee of the positive definiteness of the covariance matrix anymore.
It is easy to see that the rank-one and rank-$\mu$ update of CMA in \eqref{eq:cma} guarantee that the minimal eigenvalue of $\C^{(t+1)}$ is no smaller than the minimal eigenvalue of $\C^{(t)}$ times $1 - \cone - \cmu\sum_{i=1}^{\mu}w_i$, since the second and third terms only increase the eigenvalues. However, the introduction of negative recombination weights can violate the positive definiteness because the third term may decrease the minimum eigenvalue arbitrarily.
\citet{Jastrebski2006cec} set a sufficiently small learning rate for the active update, i.e., the absolute values of the negative recombination weights sum up to a smaller value than one. It will prevent the covariance matrix from being non-positive with high probability, but it does not \emph{guarantee} positive definiteness. Moreover, it becomes ineffective when the population size is relatively large and a greater learning rate is desired. 

\citet{Krause2015} apply the active update with positive definiteness guarantee by introducing the exponential covariance matrix update, called xCMA. Instead of updating the covariance matrix in an additive way as in \eqref{eq:acma}, the covariance matrix is updated as
\begin{equation}
  \C^{(t+1)} = \sqrt{\C^{(t)}} \exp\left( \Delta \right) \sqrt{\C^{(t)}}\enspace.
  \label{eq:xcma}
\end{equation}
where $\Delta$ is a symmetric matrix. Since the eigenvalues of the matrix exponential are $e^{\delta_i}$ where $\delta_i$ are the eigenvalues of $\Delta$, the positive definiteness is naturally guaranteed.
\citet{Arnold2010gecco} achieve the positive definiteness guarantee in the $(1+1)$-CMA-ES by rescaling the negative recombination weights depending on the norm of unsuccessful mutation steps $\norm{\zz}$.
In this paper we introduce two strategies that are both considered as generalization of this idea to the $(\mueff, \lambda)$-CMA-ES.\footnote{There are variants of CMA-ES that update a factored matrix $\mathbf{A}$ satisfying $\C = \mathbf{A}\mathbf{A}^\T$ (e.g., eNES by \citealp{Sun2009gecco}). No matter how $\mathbf{A}$ is updated, the positive semi-definiteness of $\C$ is guaranteed since $\mathbf{A}\mathbf{A}^\T$ is always positive semi-definite. However this approach has a potential drawback that a negative update may end up increasing a variance. To see this, consider the case $\mathbf{A} \leftarrow \mathbf{A} (\eye - \eta \bm{e}\bm{e}^\T)$, where $\bm{e}$ is some vector and $\eta > 0$ represents the learning rate times the recombination weight. Then, the covariance matrix follows $\C \leftarrow \mathbf{A} (\eye - \eta \bm{e}\bm{e}^\T)^2  \mathbf{A}^\T$. This update shrinks a variance if $\eta$ is sufficiently small ($\eta < 1/\norm{\bm{e}}^2$), however, it increases the variance if $\eta$ is large ($\eta > 1/\norm{\bm{e}}^2$). Hence, a negative update with a long vector $\bm{e}$ and/or a large $\eta$ will cause an opposite effect. Therefore, the factored matrix update is not a conclusive solution to the positive definiteness issue.}

\subsection{Method 1: Scaling Down the Update Factor}\label{sec:method1}

To guarantee the positive definiteness of the covariance matrix, we rescale the update factor of the covariance matrix so that the changes of the minimum eigenvalue of the covariance matrix is bounded.
We start by introducing the rescaling of unpromising solutions,
\begin{equation}\label{eq:zrescaled}
  \tilde \yy_{i:\lambda} =
  \begin{cases}
    \yy_{i:\lambda} & w_i \geq 0 \\
    \frac{\sqrt n}{\norm{\zz_{i:\lambda}}} \yy_{i:\lambda} & w_i < 0,
  \end{cases}
  \quad \text{and analogously,} \quad
  \tilde \zz_{i:\lambda} =
  \begin{cases}
    \zz_{i:\lambda} & w_i \geq 0 \\
    \frac{\sqrt n}{\norm{\zz_{i:\lambda}}} \zz_{i:\lambda} & w_i < 0.
  \end{cases}
\end{equation}
This rescaling results in projecting unpromising solutions onto the surface of the hyper-ellipsoid defined by $\yy^\T\C^{-1}\yy = n$.
By this projection we achieve three desired effects. First, the update becomes bounded which makes it easier to control positive definiteness. Second, short steps are elongated, enhancing their effect in the update. Third, long steps are shortened, reducing their effect in the update.
The two latter effects counter the correlation between rank and length of steps in unfavorable directions. For any given unfavorable direction, longer steps in this direction are most likely ranked worse than shorter steps.

With these scaled solutions,  the covariance matrix is updated as
\begin{equation}
  \begin{aligned}
  \C^{(t+1)} = \C^{(t)}  
  + \newalg{\alpha^{(t)}}\Bigg(
  &\cone \left[\left(\D^{-1}\pc^{(t+1)}\right) \left(\D^{-1}\pc^{(t+1)}\right)^\T - \gc^{(t+1)} \C^{(t)} \right] \\
  &+  \cmu \sum_{i=1}^{\lambda} \bw_{i} \bigg[\newalg{\tilde\yy_{i:\lambda}\tilde\yy_{i:\lambda}^\T} - \C^{(t)}\bigg] \Bigg)\enspace,
  \end{aligned}
  \label{eq:aacma}
\end{equation}
where shaded areas highlight the difference to \eqref{eq:acma} and $\alpha^{(t)} \leq 1$ is the scaling factor to guarantee the positive definiteness of the covariance matrix.
Note that if $\alpha^{(t)} = 1$, it is equivalent to \eqref{eq:acma} except for the rescaling of the unpromising samples.
Importantly, the rescaling of unpromising solutions does not affect the stationarity of the parameter update, i.e., $\E[\C^{(t+1)} \mid \C^{(t)}] = \C^{(t)}$ under a random function such as $f(\xx) \sim \mathcal{U}(0, 1)$. This is shown as follows. First, given $\pc^{(t)} \sim \mathcal{N}(\bm{0}, \gc^{(t)} \D\C^{(t)}\D)$, it is easy to see that $\pc^{(t+1)} \sim \mathcal{N}(\bm{0}, \gc^{(t+1)} \D\C^{(t)}\D)$. Second, using $\E[\zz\zz^\T / \norm{\zz}^2] = \eye / n$ (Lemma~1 of \citealp{Heijmans1999}) we have $\E\big[\sum_{i=1}^{\lambda} \bw_{i} \tilde \yy_{i:\lambda} \tilde \yy_{i:\lambda}^\T \mid \C^{(t)}\big] = \big(\sum_{i=1}^{\lambda} \bw_i\big)\C^{(t)}$. Finally combining them, we obtain $\E[\C^{(t+1)} \mid \C^{(t)}] = \C^{(t)}$.

The main idea is to scale down, if necessary, the update of the covariance matrix by setting $\alpha^{(t)}<1$ in \eqref{eq:aacma}. To provide a better intuition, we start by considering the case $t_\mathrm{eig} = 1$, i.e., $\C^{(t)} = \sqrt{\C}^2$ for every iteration $t$.
Equation \eqref{eq:aacma} can be written as
\begin{equation}
  \C^{(t+1)} = \sqrt{\C}\Bigg[ \eye
  + \alpha^{(t)}\mathbf{Z}^{(t)} \Bigg]\sqrt{\C} 
  \label{eq:aacma2}
\end{equation}
with
\begin{multline}
  \mathbf{Z}^{(t)} = \cone \left[\left(\sqrt{\C}^{-1}\D^{-1}\pc^{(t+1)}\right) \left(\sqrt{\C}^{-1}\D^{-1}\pc^{(t+1)}\right)^\T - \gc^{(t+1)} \eye \right] \\
  +  \cmu \sum_{i=1}^\lambda \bw_{i} \bigg[\tilde\zz_{i:\lambda}\tilde\zz_{i:\lambda}^\T - \eye\bigg] \enspace.
  \label{eq:zmat}
\end{multline}
Then, the scaling down parameter $\alpha^{(t)}$ in \eqref{eq:aacma2} is taken so that $\eye + \alpha^{(t)} \mathbf{Z}^{(t)}$ is positive definite.
Then, with
maximum and minimum eigenvalue of a
matrix $\mathbf{A}$ denoted by $d_1(\mathbf{A})$ and $d_n(\mathbf{A})$, respectively, we have that
\begin{equation}
   d_1(\eye + \alpha^{(t)} \mathbf{Z}^{(t)}) \C  
  \succcurlyeq \sqrt{\C}\big[\eye + \alpha^{(t)} \mathbf{Z}^{(t)} \big]\sqrt{\C}
  \succcurlyeq d_n(\eye + \alpha^{(t)} \mathbf{Z}^{(t)}) \C \enspace,
\end{equation}
where $\mathbf{A} \succcurlyeq \mathbf{B}$ mean that $\mathbf{A} - \mathbf{B}$ is positive semi-definite (i.e., all eigenvalues are non-negative)\footnote{Some references (e.g., \citet{Harville2008book}) use the term ``positive semi-definite matrix'' for a matrix with positive and zero eigenvalues and ``non-negative definite matrix'' is used for matrices with non-negative eigenvalues, whereas in other references these terms are used for matrices with non-negative eigenvalues. In this paper, we apply the latter terminology.} for any two square matrices $\mathbf{A}$ and $\mathbf{B}$. 
Moreover, $d_n(\eye + \alpha^{(t)} \mathbf{Z}^{(t)}) = 1 + \alpha^{(t)} d_n(\mathbf{Z}^{(t)})$. Therefore, if $\alpha^{(t)} < 1 / \abs{d_n(\mathbf{Z}^{(t)})}$, the resulting covariance matrix is guaranteed to be positive definite.%

For the general case ($t_\mathrm{eig} \geq 1$), let $t$ be the iteration at which the last eigen decomposition was performed, i.e., $\C^{(t)} = \sqrt{\C}^2$. For iterations $k \in [t, t+t_\mathrm{eig})$, we store the intermediate update matrix $\mathbf{Z}^{(k)}$. The covariance matrix is updated only when the eigen decomposition is required, i.e., at iteration $t + t_\mathrm{eig}$, as
\begin{equation}
  \C^{(t+t_\mathrm{eig})} = \sqrt{\C}\Bigg[ \eye + \alpha^{(t+t_\mathrm{eig})} \sum_{k=t}^{t+t_\mathrm{eig}-1}\mathbf{Z}^{(k)}\Bigg]\sqrt{\C} \enspace.
  \label{eq:aaacma}
\end{equation}
Analogously to the above argument, the resulting covariance matrix is positive definite if the inside of the brackets is positive definite. We set the scaling down parameter as
\begin{equation}
  \alpha^{(t + t_\mathrm{eig})} = \min\left( 
    \frac{0.75}{\abs*{d_n\big(\sum_{k=t}^{t+t_\mathrm{eig}-1}\mathbf{Z}^{(k)}\big)}},\
    1
  \right) \enspace.
  \label{eq:alpha1}
\end{equation}
The first argument guarantees that the minimum eigenvalue of $\C^{(t + t_\mathrm{eig})}$ is greater than or equal to $1/4$th of the minimum eigenvalue of $\C^{(t)}$. More concisely, we have $\C^{(t + t_\mathrm{eig})} \succcurlyeq \frac14 \C^{(t)}$. 
The last argument, which is the most frequent case in practice, implies that the covariance matrix update does not need to be scaled down. 

\subsection{Method 2: Scaling Down Negative Weights}\label{sec:method2}

An alternative way to guarantee the positive definiteness of the covariance matrix is to scale down the sum of the absolute values of the negative weights, combined with the projection of unpromising solutions introduced above. We use the same update formula \eqref{eq:aaacma}, but $\alpha^{(t+t_\mathrm{eig})} = 1$. 

The positive definiteness of the covariance matrix is guaranteed under the condition $1 / t_\mathrm{eig} > \cone + \cmu \sum_{i=1}^{\lambda} \bw_i + n \cmu \sum_{i:\ w_i < 0} \abs{\bw_i}$. More precisely, we have the following claim.
\begin{theorem}\label{thm:pos}
  If $1 / t_\mathrm{eig} > \cone + \cmu \sum_{i=1}^{\lambda} \bw_i + n \cmu \sum_{i:\ w_i < 0} \abs{\bw_i}$, then $\C^{(t+t_\mathrm{eig})} \succcurlyeq\big(1 - t_\mathrm{eig} \big(\cone + \cmu \sum_{i=1}^{\lambda} \bw_i + n \cmu \sum_{i:\ w_i < 0} \abs{\bw_i}\big)\big) \C^{(t)}$.
\end{theorem}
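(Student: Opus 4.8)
The plan is to reduce the Loewner-order statement about $\C^{(t+t_\mathrm{eig})}$ to a pointwise eigenvalue bound on the matrix inside the brackets of \eqref{eq:aaacma}. Since Method~2 fixes $\alpha^{(t+t_\mathrm{eig})}=1$, we have $\C^{(t+t_\mathrm{eig})} = \sqrt{\C}\big(\eye + \mathbf{S}\big)\sqrt{\C}$ with $\mathbf{S} := \sum_{k=t}^{t+t_\mathrm{eig}-1}\mathbf{Z}^{(k)}$ and $\sqrt{\C} = \sqrt{\C^{(t)}}$. Because $\mathbf{A}\succcurlyeq\mathbf{B}$ implies $\sqrt{\C}\,\mathbf{A}\,\sqrt{\C}\succcurlyeq\sqrt{\C}\,\mathbf{B}\,\sqrt{\C}$ (congruence by a symmetric matrix preserves the Loewner order), it is enough to show $\eye + \mathbf{S} \succcurlyeq (1 - t_\mathrm{eig}\beta)\,\eye$, where $\beta := \cone + \cmu\sum_{i=1}^{\lambda}\bw_i + n\cmu\sum_{i:\,w_i<0}\abs{\bw_i}$: conjugating by $\sqrt{\C}$ turns $\eye$ into $\sqrt{\C}^2 = \C^{(t)}$, yielding the asserted inequality, and positive definiteness follows since the hypothesis is precisely $t_\mathrm{eig}\beta<1$.

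The heart of the argument is a per-iteration bound $\mathbf{Z}^{(k)} \succcurlyeq -\beta\,\eye$, obtained by keeping only the ``dangerous'' parts of \eqref{eq:zmat} and discarding the positive semidefinite ones. First, drop the rank-one outer product $\cone(\sqrt{\C}^{-1}\D^{-1}\pc^{(k+1)})(\sqrt{\C}^{-1}\D^{-1}\pc^{(k+1)})^\T\succcurlyeq 0$, and, for the non-rescaled indices, the terms $\bw_i\tilde\zz_{i:\lambda}\tilde\zz_{i:\lambda}^\T = \bw_i\zz_{i:\lambda}\zz_{i:\lambda}^\T \succcurlyeq 0$ (here $w_i\ge0$, hence $\bw_i\ge0$). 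Next, bound $-\cone\gc^{(k+1)}\eye \succcurlyeq -\cone\eye$ using $0\le\gc^{(k+1)}\le1$, which follows by a one-line induction on \eqref{eq:gc} from $\gc^{(0)}=0$ and $(1-\cc)^2+\cc(2-\cc)=1$. For the rescaled indices ($w_i<0$), the projection \eqref{eq:zrescaled} forces $\norm{\tilde\zz_{i:\lambda}}=\sqrt n$ exactly, so the single nonzero eigenvalue of $\tilde\zz_{i:\lambda}\tilde\zz_{i:\lambda}^\T$ equals $n$; thus $\tilde\zz_{i:\lambda}\tilde\zz_{i:\lambda}^\T\preccurlyeq n\eye$ and $\bw_i\tilde\zz_{i:\lambda}\tilde\zz_{i:\lambda}^\T\succcurlyeq -n\abs{\bw_i}\eye$. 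Finally the leftover diagonal term is $-\cmu\big(\sum_{i=1}^{\lambda}\bw_i\big)\eye$. Adding these estimates gives $\mathbf{Z}^{(k)}\succcurlyeq -\beta\,\eye$, and summing over the $t_\mathrm{eig}$ stored matrices gives $\mathbf{S}\succcurlyeq -t_\mathrm{eig}\beta\,\eye$, i.e. $\eye+\mathbf{S}\succcurlyeq(1-t_\mathrm{eig}\beta)\eye$, as required.

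I do not expect a genuine obstacle. The only step with real content is the projection bound $\tilde\zz_{i:\lambda}\tilde\zz_{i:\lambda}^\T\preccurlyeq n\eye$ for negative weights, which is exactly why the rescaling \eqref{eq:zrescaled} was built into Method~2; everything else is bookkeeping, plus the routine check that $\gc^{(k+1)}\in[0,1]$. I would, however, make a point of stating up front the two properties of the Loewner order that are used — closure under addition and invariance under the congruence $\mathbf{A}\mapsto\sqrt{\C}\,\mathbf{A}\,\sqrt{\C}$ — since these are the only matrix-inequality manipulations the proof relies on.
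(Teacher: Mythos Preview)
Your proposal is correct and follows essentially the same route as the paper's proof: drop the positive semidefinite rank-one and positive-weight contributions, bound $-\cone\gc^{(k+1)}\eye\succcurlyeq-\cone\eye$, use that the rescaled $\tilde\zz_{i:\lambda}$ satisfies $\tilde\zz_{i:\lambda}\tilde\zz_{i:\lambda}^\T\preccurlyeq n\eye$ for negative weights, and finish by congruence with $\sqrt{\C}$. The only cosmetic difference is that you work directly in the already-factored $\mathbf{Z}^{(k)}$ coordinates of \eqref{eq:zmat} and handle general $t_\mathrm{eig}$ by summing the per-iteration bound, whereas the paper starts from the $\C$-update, factors midway, and treats $t_\mathrm{eig}=1$ first before invoking ``the analogous argument.''
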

\begin{proof}[Proof of Theorem~\ref{thm:pos}]
  First, we consider the case $t_\mathrm{eig} = 1$. Using the fact that the self outer product $\mathbf{v}\mathbf{v}^\T$ of a vector $\mathbf{v}$ is a matrix of rank $1$ with the only nonzero eigenvalue of $\norm{\mathbf{v}}^2$, we have
\begin{align*}
  \C^{(t+1)}
  &\succcurlyeq \left(1 - \cone - \cmu \sum_{i=1}^{\lambda} \bw_i\right) \C^{(t)} 
    - \cmu \sum_{i:\ w_i<0} \abs{\bw_{i}} \frac{n \yy_{i:\lambda}\yy_{i:\lambda}^\T}{\norm{\zz_{i:\lambda}}^2}\\
  &= \sqrt{\C}\left[ \left(1 - \cone - \cmu \sum_{i=1}^{\lambda} \bw_i\right) \eye
    - \cmu \sum_{i:\ w_i<0} n \abs{\bw_{i}} \frac{\zz_{i:\lambda}\zz_{i:\lambda}^\T}{\norm{\zz_{i:\lambda}}^2} \right]\sqrt{\C}\\
  &\succcurlyeq \sqrt{\C}\left[ \left(1 - \cone - \cmu \sum_{i=1}^{\lambda} \bw_i 
    - n \cmu \sum_{i:\ w_i<0} \abs{\bw_{i}}\right)  \eye\right]\sqrt{\C}\\
  &= \left(1 - \cone - \cmu \sum_{i=1}^{\lambda} \bw_i - n \cmu \sum_{i:\ w_i<0}\abs{\bw_i}\right) \C^{(t)} \enspace.
\end{align*}
The analogous argument holds for $t_\mathrm{eig} > 1$ as well. This completes the proof.
\end{proof}

We show how to construct the recombination weights so that they satisfy the sufficient condition of Theorem~\ref{thm:pos} as follows. Let $(w_i')_{i=1}^\lambda$ be the pre-defined weights that are non-increasing. Without loss of generality, we assume that the first $\mu$ weights are positive and sum to one. The recombination weights are $w_i = w_i'$ for $w_i' \geq 0$ and $w_i = \alpha w_i'$ for $w_i' < 0$, where $\alpha \in (0, 1]$. Then, the sufficient condition in Theorem~\ref{thm:pos} reads $1 / t_\mathrm{eig} > \cone + \cmu + (n-1) \cmu \alpha \sum_{i:\ w_i < 0} \abs{\bw_i'}$. It holds if we set for example $t_\mathrm{eig} < (\cone + \cmu)^{-1}$, as satisfied by the default choice $t_\text{eig} = \max\big(1, \big\lfloor (\beta_\mathrm{eig}  (\cone + \cmu) )^{-1} \big\rfloor\big)$, and
\begin{equation}
  \alpha = \frac{1 / t_\mathrm{eig}  - (\cone + \cmu)}{n \cmu \sum_{i:\ w_i < 0} \abs{w_i'} } \enspace.
  \label{eq:alpha2}
\end{equation}
Then, it is guaranteed that $\C^{(t+t_\mathrm{eig})} \succcurlyeq \frac1n \C^{(t)}$.

\begin{figure}[t]
  \centering
  \includegraphics[width=0.8\hsize]{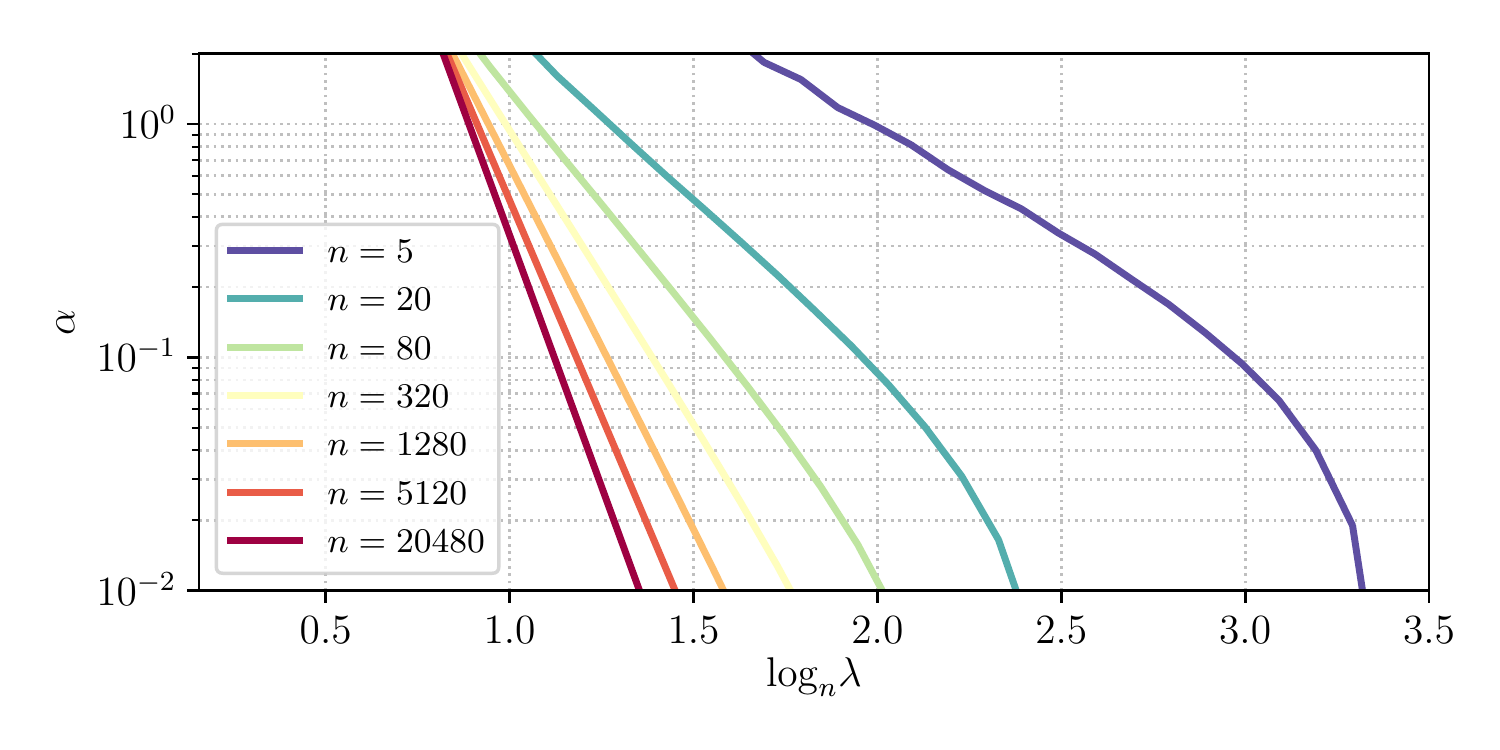}%
  \caption{The correction factor $\alpha$ in \eqref{eq:alpha2}.}
  \label{fig:alpha}
\end{figure}

This method is simpler than the method described in Section~\ref{sec:method1} since it does not require an additional eigenvalue computation. However, to guarantee the positive definiteness in this way, we bound the unrealistic worst case where all unpromising candidate solutions are sampled on a line. Therefore, the scaling down factor is set much smaller than the value that is actually needed in practice. Figure~\ref{fig:alpha} visualizes the correction factor $\alpha$ under our choice of the default pre-weights and learning rates described in Section~\ref{sec:param}. The sum of the negative recombination weights needs to decrease as the population size increases. For $n \geq 320$, the factor is less than $0.1$ for $\lambda \geq n^{1.5}$. Unless the internal computational time becomes a critical bottleneck, we prefer the method described in Section~\ref{sec:method1} over the method in Section~\ref{sec:method2}. 

\subsection{Choice of Recombination Weights}

We first review the rationale for the choice of the positive recombination weights. The default positive recombination weights, $\ln\frac{\lambda+1}{2} - \ln i$ for $i \leq \lambda / 2$, are based on the quality gain analysis of the weighted recombination ES with isotropic Gaussian distribution (i.e., $\C \propto \eye$). \citet{Arnold2006} studied the quality gain on a spherical function, and derived the optimal recombination weights for the infinite dimensional case. They are
\begin{equation}
  w_i \propto - \E[\NN_{i:\lambda}] \quad \text{for all $i = 1,\dots, \lambda$, }
\end{equation}
where $\NN_{1:\lambda} \leq \cdots \leq \NN_{\lambda:\lambda}$ is the ordered statistics from the independent and standard normally distributed random variables $\NN_1, \dots, \NN_\lambda$. Recently, it has been shown that the same recombination weights are optimal for a general convex quadratic function with Hessian matrix $\mathbf{A}$ satisfying $\Tr(\mathbf{A}^2)  / \Tr(\mathbf{A})^2 \ll 1$ for $n$ large \citep{AAH2017TCS}. The value $\ln\frac{\lambda + 1}{2} - \ln i$ approximates the first half of the optimal recombination weights.\footnote{%
  Interesting results are derived from the quality gain (and progress rate) analysis, see e.g.\ Section~4.2 of \citet{Hansen:2013wf}.
Comparing the ($1+1$)-ES and ($\mu$, $\lambda$)-ES with intermediate recombination, we realize that they reach the same quality gain in the limit for $n$ to infinity when $\mu$ is the optimal value, $\mu \approx 0.27  \lambda$ \citep{beyer1995toward}.
That is, a non-elitist strategy with optimal $\mu / \lambda$ for intermediate recombination can reach the same speed as the elitist strategy. With the optimal recombination weights above, the weighted recombination ES is $2.475$ times faster than those algorithms.
If we employ only the positive half of the optimal recombination weights, the speed will be halved, yet it is faster by the factor of $1.237$.}

We propose to use the recombination weights constructed as follows. Let
\begin{equation}
  w_i' = \ln\frac{\lambda+1}{2} - \ln i \qquad \text{for $i = 1,\dots, \lambda$.}
\end{equation}
The variance effective selection mass for the positive and negative weights is computed as
\begin{align}
  \mueff = \frac{(\sum_{i:w_i' > 0} \abs{w_i'})^2}{\sum_{i:w_i' > 0} \abs{w_i'}^2} \qquad\text{and}\qquad
  \mueff^{-} = \frac{(\sum_{i:w_i' < 0} \abs{w_i'})^2 }{\sum_{i:w_i' < 0} \abs{w_i'}^2}  \enspace.
\end{align}
The learning rates $\cone$ and $\cmu$ may be computed depending on the above quantities. The default values for the learning rates are discussed in Section~\ref{sec:param}. The recombination weights $w_i$ are set as follows
\begin{equation}
  w_i =
  \begin{cases}
    \frac{w_i'}{ \sum_{j:w_j' > 0} \abs{w_j'}} & \text{for $i$ with $w_i' \geq 0$,}\\
    \frac{w_i'}{ \sum_{j:w_j' < 0} \abs{w_j'}} \times \min\left( 1 + \frac{\cone}{\cmu}, \ 1 + \frac{2\mueff^{-}}{\mueff + 2}\right) & \text{for $i$ with $w_i' < 0$.}
  \end{cases}
  \label{eq:ourw}
\end{equation}

The positive recombination weights are unchanged from the default settings without active CMA. They are approximately proportional to the positive half of the optimal recombination weights. However, this is not the case for the negative recombination weights. The optimal recombination weights are symmetric about zero, i.e.\ $w_i = - w_{\lambda - i + 1}$, whereas our negative weights tend to level out for the following reasons.
The above mentioned quality gain results consider only the mean update.
The obtained optimal values are not necessarily optimal for the covariance matrix adaptation. Since we use only positive recombination weights for the mean vector update, the negative weights do not need to correspond to these optimal recombination weights. Furthermore, the optimal negative weights---greater absolute values for worse steps---counteract our motivation for rescaling of unpromising steps discussed in Section~\ref{sec:method1}.
Our choice of the negative recombination weights is a natural extension of the default positive recombination weights. The shape of our recombination weights is somewhat similar to the one in xCMA-ES \citep{Krause2015}, where the first half is $w_i - 1/\lambda$ and the last half is $-1/\lambda$. A minor difference is that xCMA-ES assigns negative values even for some of the better half of the candidate solutions, whereas our setting assigns positive values only for the better half and negative values only for the worse half.

Positive and negative recombination weights are scaled differently. Positive weights are scaled to sum to one, whereas negative weights are scaled so that the sum of their absolute values is the minimum of $1 + \frac{\cone}{\cmu}$ and $1 + \frac{2\mueff^{-}}{\mueff + 2}$. The latter corrects for unbalanced positive and negative weights, but is usually greater than the former with our default values. The former makes the decay factor $1 - \cone - \cmu \sum_{i=1}^{\lambda} w_i$ of the covariance matrix update (see \eqref{eq:acma}) to $1$. That is, the covariance matrix does not passively shrink, and only the negative update can shrink the covariance matrix. \citet{Krause2015} mention to have no passive shrinkage by setting the sum of the weights to zero, but the effect of the rank-one update was not taken into account and thus in xCMA the passive decay factor of $1 - \cone$ remains.

\section{Adaptive Diagonal Decoding (\DDCMA)}\label{sec:add}

\newcommand{\coned}{\ensuremath{c_{1,D}}}
\newcommand{\cmud}{\ensuremath{c_{\mu,D}}}
\newcommand{\ccd}{\ensuremath{c_{c,D}}}
\newcommand{\pcd}{\ensuremath{\ve{p}_{c,D}}}

The separable CMA-ES \citep{Ros2008ppsn} enables to adapt a diagonal covariance matrix faster than the standard CMA-ES because the learning rates are $O(n)$ times greater than in standard CMA-ES.
This works well on separable objective functions, where separable CMA-ES adapts the (co)variance matrix by a factor of $O(n)$ faster than CMA-ES. To accelerate standard CMA, we combine separable CMA and standard CMA. We adapt both $\D$ and $\C$ at the same time\footnote{%
  Considering the eigen decomposition of the covariance matrix, $\mathbf{E}\Lam\mathbf{E}^\T$, instead of the decomposition $\D\C\D$, our attempts to additionally adapt \Lam\ were unsuccessful. We attribute the failure to the strong interdependency between \Lam\ and $\mathbf{E}$. Compared to the eigen decomposition, the diagonal decoding model $\D\C\D$ also has the advantage to be interpretable as a rescaling of variables. We never considered the decomposition $\sqrt{\C}\D^2\sqrt{\C}$ as proposed by one of the reviewers of this paper, however, at first sight, we do not see any particular advantage in favor of this decomposition.
},
where $\D$ is updated with \emph{adaptive} learning rates which can be much greater than those used to update $\C$.
 
\subsection{Algorithm}
\newcommand{\betathresh}{\ensuremath{\beta_\mathrm{thresh}}}
The update of $\D$ is similar to separable CMA, but is done in local exponential coordinates. The update of $\D$ is as follows
\begin{gather}
  [\Delta_D]_{k,k}
  = \coned \big(\big[\sqrt{\C}^{-1}(\D^{(t)})^{-1}\pcd\big]_{k}^2 - \gamma_{c,D}\big)
  + \cmud \sum_{i= 0}^{\lambda} \bw_{i,D} \big(\big[\tilde\zz_{i:\lambda}\big]_{k}^2 - 1\big)
   \label{eq:deltad}\\
  \D^{(t+1)} \leftarrow \D^{(t)} \cdot \exp\left(\frac{\Delta_D}{2\beta^{(t)}} \right) \enspace,\label{eq:dup}
\end{gather}
%
where $[\Delta_D]_{k,k}$ is the $k$-th diagonal element of the diagonal matrix $\Delta_D$, the evolution path $\pcd$ feeds the rank-one $\D$ update, $\coned$ and $\cmud$ are the learning rates for the rank-one and rank-$\mu$ $\D$ updates, respectively, the recombination weights $w_{i,D}$ are computed by \eqref{eq:ourw} using $\coned/\cmud$ instead of $\cone/\cmu$, $\tilde\zz_{i:\lambda}$ is defined in \eqref{eq:zrescaled}, and $\beta^{(t)}$ is the damping factor for the diagonal matrix update given in \eqref{eq:ddamp} below.
The evolution path $\pcd$ and its normalization factor $\gamma_{c,D}$ are updated in the same way as \eqref{eq:pc} and \eqref{eq:gc} with the cumulation factor $\ccd$ rather than $\cc$.
The matrix exponential of a diagonal matrix is $\exp(\diag(d_1, \dots, d_n)) = \diag(\exp(d_1), \dots, \exp(d_n))$.

Using the correlation matrix of the covariance matrix $\C$ of the last time the matrix was decomposed,
\begin{equation}
\Ccorr := \sqrt{\diag(\C)}^{-1}\C \sqrt{\diag(\C)}^{-1}
        \enspace,
\end{equation}
where $\diag(\C)$ is the diagonal matrix whose diagonal elements are the diagonal elements of $\C$, the damping factor is based on the square root of the condition number of this correlation matrix as
\begin{equation}
  \beta^{(t)} = \max\left(1,\ 
  \sqrt{\Cond\left(\Ccorr\right)}
  - \betathresh + 1\right)
  \enspace,
  \label{eq:ddamp}
\end{equation}
where 
$\betathresh:=2$ is the threshold parameter at which $\beta^{(t)}$ becomes larger than one. We remark that $\beta^{(t)}$ changes only every $t_\text{eig}$ iterations.

The $\D$-update is multiplicative as in
\citet{Ostermeier1994ec} or \citet{Krause2015} to be unbiased on the log-scale and to simply guarantee the positive definiteness of $\D$.
Note that the first order approximation of the update formula $\D^{(t)} \exp((2\beta)^{-1} \Delta_D)$ gives
\begin{equation*}
  \left(\D^{(t+1)}\right)^2 \approx \left(\D^{(t)}\right)^2  + \frac{1}{\beta^{(t)}} \D^{(t)} \Delta_D \D^{(t)} \enspace,
\end{equation*}
which is the update in separable CMA. 

Importantly, we set the learning rates for the $\D$ update, $\coned$ and $\cmud$, to be about $n$ times greater than the learning rates for the $\C$ update, $\cone$ and $\cmu$. Moreover, we maintain an additional evolution path, $\pcd$, for the rank-one update of $\D$ since an adequate cumulation factor, $\ccd$, may be different from the one for $\pc$.

\begin{figure}[t]
  \centering
  \begin{subfigure}{0.33\textwidth}%
    \centering%
    \includegraphics[trim={1.0cm 1.0cm .8cm .8cm},clip,width=\hsize]{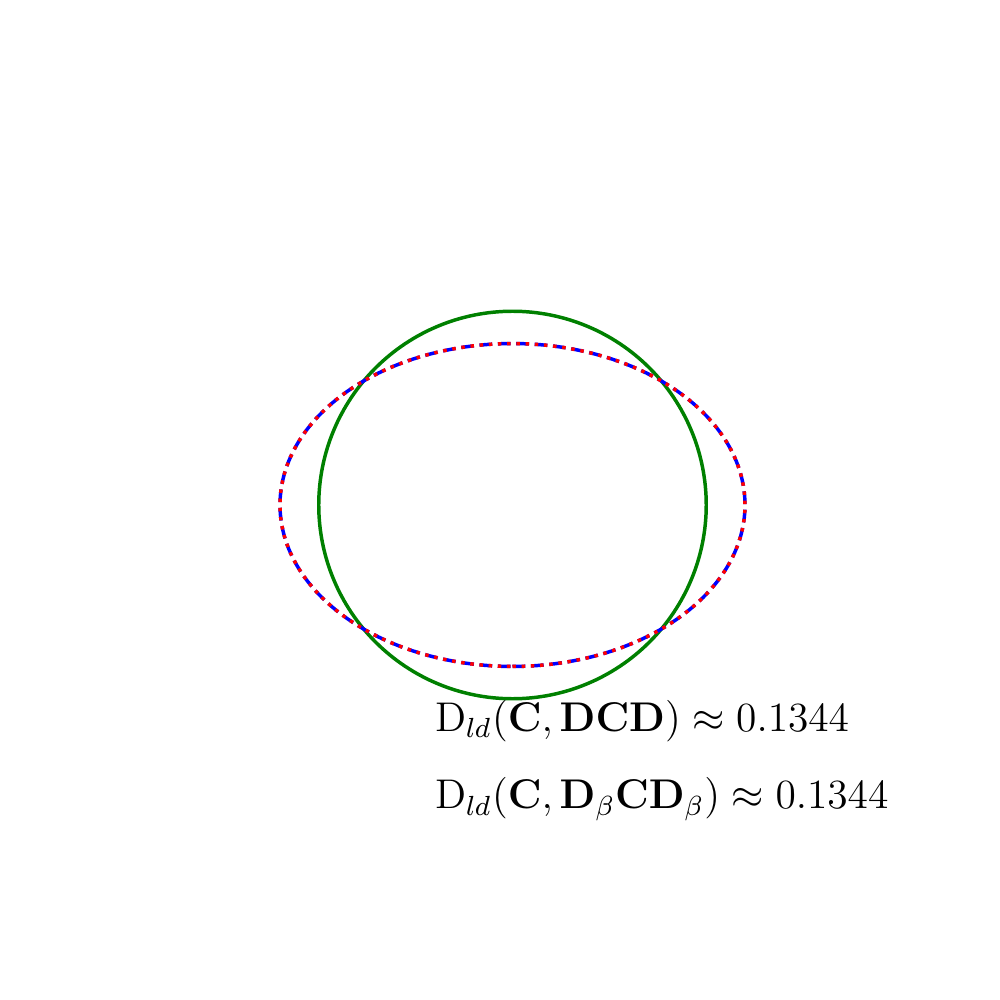}%
    \caption{axis ratio: $1$}%
  \end{subfigure}%
  \begin{subfigure}{0.33\textwidth}%
    \centering%
    \includegraphics[trim={1.0cm 1.0cm .8cm .8cm},clip,width=\hsize]{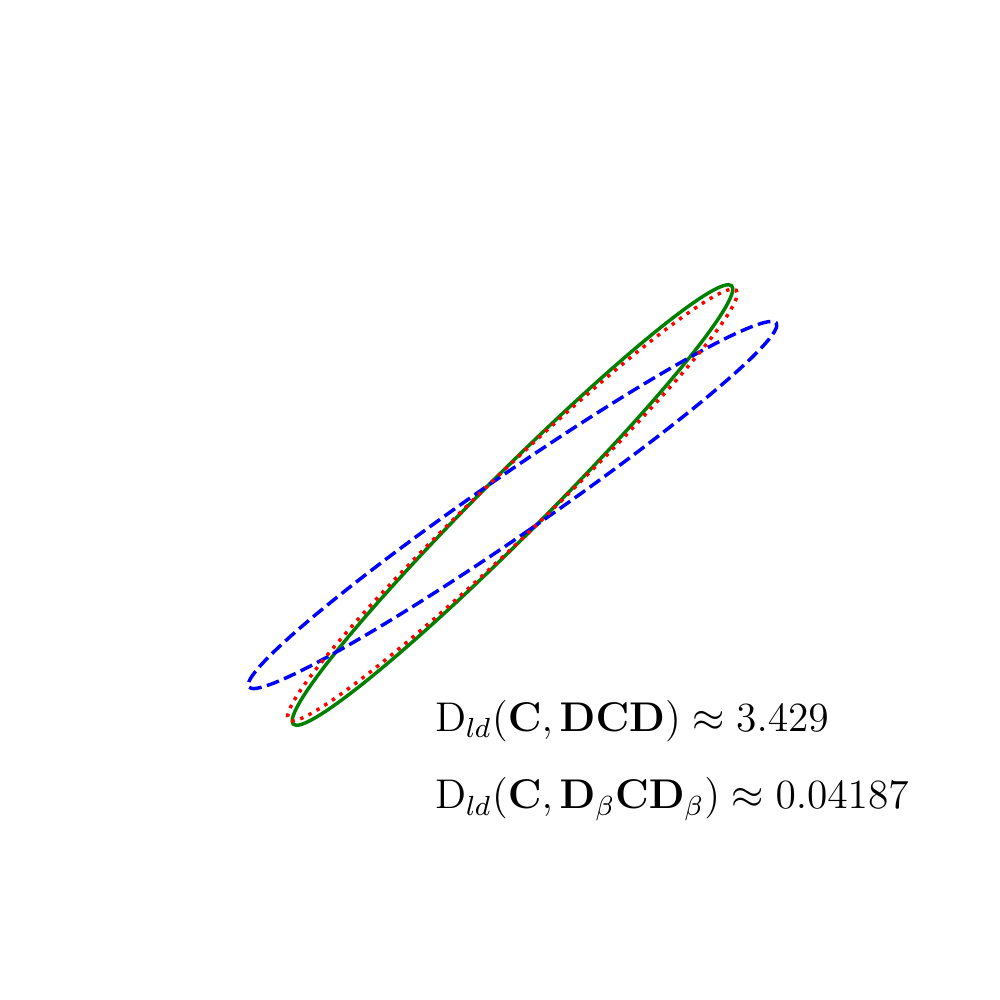}%
    \caption{axis ratio: $10$}%
  \end{subfigure}%
  \begin{subfigure}{0.33\textwidth}%
    \centering%
    \includegraphics[trim={1.0cm 1.0cm .8cm .8cm},clip,width=\hsize]{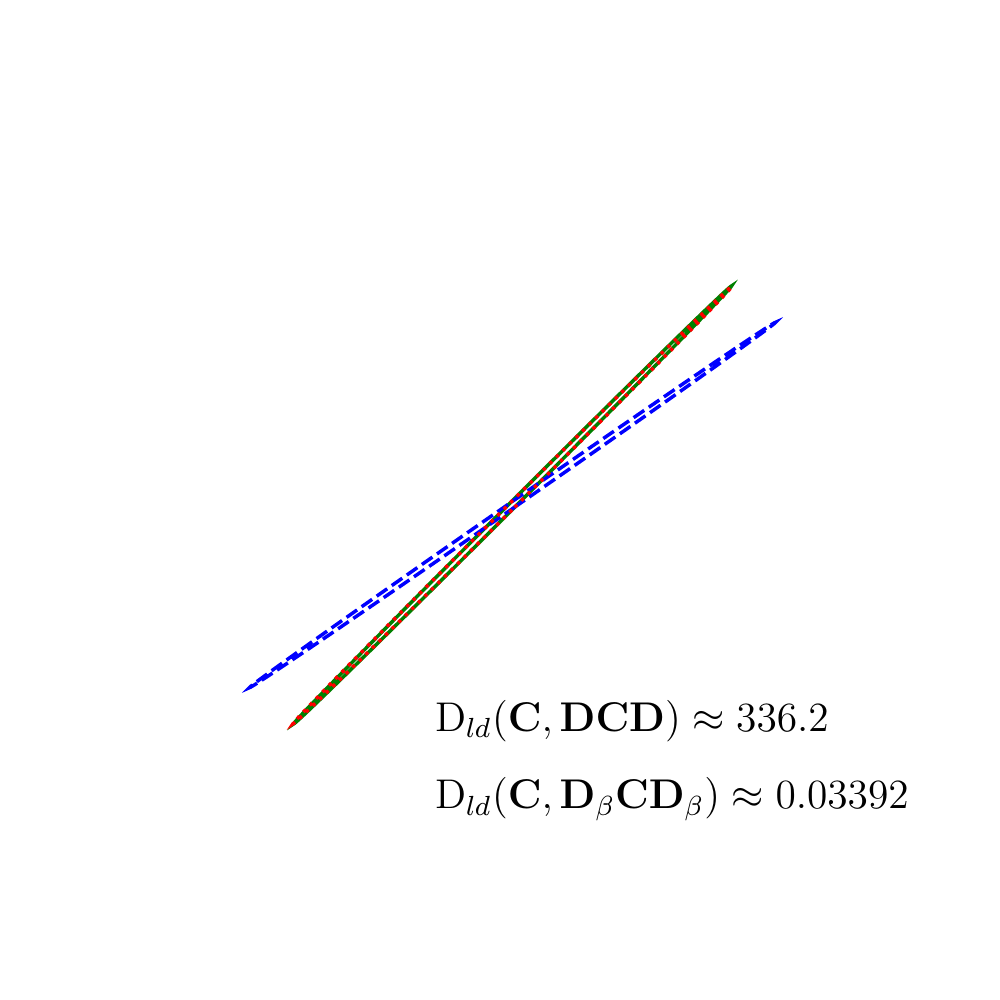}%
    \caption{axis ratio: $100$}%
  \end{subfigure}%
  \caption{Equiprobability ellipse defined by $\C$, $\D\C\D$, and $\D_{\beta}\C\D_{\beta}$. Green ellipses represent $\mathcal{N}(\bm{0}, \C)$ with axis ratio of $1$, $10$, $100$ and inclination angle $\pi / 4$. Blue dashed ellipses represent $\mathcal{N}(\bm{0}, \D\C\D)$ with $\D = [1.2, 1/1.2]$. Red ellipses represent $\mathcal{N}(\bm{0}, \D_{\beta}\C\D_{\beta})$ with $\D_{\beta}$ damped by using $\beta$ in \eqref{eq:ddamp}.}
  \label{fig:add}
\end{figure}

\subsection{Damping factor}

The \emph{dynamic} damping factor $\beta^{(t)}$ is
the crucial novelty that prevents diagonal decoding to disrupt the adaptation of the covariance matrix $\C$ in CMA-ES. The damping factor is introduced to prevent the sampling distributions from drastically changing due to the diagonal matrix update. Figure~\ref{fig:add} visualizes three example cases with different $\C$. When the diagonal decoding matrix changes from the identity matrix to $\D$, the change of the distribution from $\mathcal{N}(\bm{0}, \C)$ to $\mathcal{N}(\bm{0}, \D\C\D)$ is minimal when $\C$ is diagonal, and is comparatively large if $\C$ is non-diagonal. It will be greater as the condition number of the correlation matrix $\Ccorr$ increases. In the worst case in Figure~\ref{fig:add}, we see that two distributions are overlapping only at the center of distribution. 

The damping factor $\beta^{(t)}$ computed in \eqref{eq:ddamp} is motivated from the KL divergence between the sampling distributions before and after the $\D$ update.
The KL divergence between two multivariate normal distributions that have the same mean vectors is equivalent to the half of the Log-Determinant divergence between the covariance matrices, defined as%
\begin{equation}
  D_{ld}(\mathbf{A}, \mathbf{B}) = \Tr(\mathbf{A}\mathbf{B}^{-1}) - \log\det(\mathbf{A}\mathbf{B}^{-1}) - n \enspace,
\end{equation}
where $\mathbf{A}$ and $\mathbf{B}$ are two positive definite symmetric matrices. Let $\mathbf{A} = \D' \C \D'$ and $\mathbf{B} = \D\C\D$, i.e., $\mathbf{A}$ and $\mathbf{B}$ are the covariance matrices after and before the $\D$ update, respectively. The divergence $D_{ld}$ is asymmetric, and the value changes if we exchange $\mathbf{A}$ and $\mathbf{B}$, but we will obtain a symmetric approximation in the end. To analyze the divergence between $\mathbf{A}$ and $\mathbf{B}$ due to the difference in shape and put aside the effect of a global scaling, we scale both to be of determinant one. We denote the resulting divergence by $\bar D_{ld}(\mathbf{A}, \mathbf{B}) = D_{ld}(\mathbf{A} / \det(\mathbf{A}), \mathbf{B} / \det(\mathbf{B}))$. Then, we obtain

\begin{align*}
  \bar D_{ld}(\mathbf{A}, \mathbf{B})
  &= \Tr\left(\exp\left(\frac{1}{2\beta}\bar\Delta_D \right)\C\exp\left(\frac{1}{2\beta} \bar\Delta_D \right)\C^{-1}\right) - n      
   \enspace,
\end{align*}
where $\bar\Delta_D = \Delta_D - (\Tr(\Delta_D ) / n) \eye$.
With the approximation $\exp\left(\frac{1}{2\beta} \bar\Delta_D \right) \approx \eye + \left(\frac{1}{2\beta}\right) \bar\Delta_D + \frac12 \left(\frac{1}{2\beta}\right)^2 \bar\Delta_D^2$ and neglecting $(\Delta_D / \beta)^3$ and higher terms, we have\footnote{Let $\mathbf{A}$ and $\mathbf{B}$ be an arbitrary positive definite symmetric matrix and an arbitrary symmetric matrix, respectively. Let $\otimes$ and $\mathrm{vec}$ denote the Kronecker product of two matrices and the matrix-vector rearrangement operator that successively stacks the columns of a matrix. From Theorem~16.2.2~of \citet{Harville2008book}, we have $\Tr(\mathbf{A}\mathbf{B}\mathbf{A}^{-1}\mathbf{B}) = \mathrm{vec}(\mathbf{B})^\T(\mathbf{A} \otimes \mathbf{A}^{-1}) \mathrm{vec}(\mathbf{B})$. The eigenvalues of the Kronecker product $\mathbf{A} \otimes \mathbf{A}^{-1}$ are the products of the eigenvalues of $\mathbf{A}$ and $\mathbf{A}^{-1}$ (Theorem~21.11.1~of \citet{Harville2008book}). They are all positive and upper bounded by the condition number of $\mathbf{A}$. Therefore, we have $\mathrm{vec}(\mathbf{B})^\T(\mathbf{A} \otimes \mathbf{A}^{-1}) \mathrm{vec}(\mathbf{B}) \leq \Cond(\mathbf{A}) \norm{\mathrm{vec}(\mathbf{B})}^2 = \Cond(\mathbf{A}) \Tr(\mathbf{B}^2)$. Letting $\mathbf{A} = \Ccorr$ and $\mathbf{B} = \bar\Delta_D$, we have $\Tr\left( \bar\Delta_D\C\bar\Delta_D\C^{-1}\right) = \Tr(\mathbf{A}\mathbf{B}\mathbf{A}^{-1}\mathbf{B}) \leq (\beta+\betathresh-1)^{2}\Tr(\bar\Delta_D^2)$. }
\begin{align*}
  \bar D_{ld}(\mathbf{A}, \mathbf{B})
  &\approx \left(\frac{1}{2\beta}\right)^2 \Tr\left( \bar\Delta_D\C\bar\Delta_D\C^{-1}\right) + \left(\frac{1}{2\beta}\right)^2 \Tr\left(\bar\Delta_D^2\right) \\
  &\leq \frac{(\beta+\betathresh-1)^{2}+1}{4\beta^2} \Tr\left(\bar\Delta_D^2\right) 
  \enspace.
\end{align*}
Note that $\bar D_{ld}((\D^{(t+1)})^2, (\D^{(t)})^2) \approx 2^{-1} \Tr\left(\bar\Delta_D^2\right)$ if $\C$ is diagonal and $\beta = 1$. Therefore, with $\beta$ computed in \eqref{eq:ddamp}, we bound the realized divergence approximately by the divergence of \D\ like
\begin{equation}
  \bar D_{ld}(\D^{(t+1)} \C \D^{(t+1)}, \D^{(t)}\C \D^{(t)}) \leq \bar D_{ld}((\D^{(t+1)})^2, (\D^{(t)})^2)
  \label{eq:divineq}
\end{equation}
for large $\beta$.

In a nutshell, we have quantified the distribution change from changing \D\ by measuring its KL-divergence. We derive that $\beta$ in \eqref{eq:ddamp} upper bounds the KL-divergence due to changes of \D\ approximately by the KL-divergence from the same change of \D\ when $\C=\eye$. The latter is directly determined by the learning rates \coned\ and \cmud.

\subsection{Implementation Remark}

To avoid unnecessary numerical errors and additional computational effort for eigen decompositions in the adaptive diagonal decoding mechanism \eqref{eq:deltad}, \eqref{eq:dup}, \eqref{eq:ddamp}, we force the diagonal elements of $\C$ to be all one by assigning
\begin{gather}
  \D^{(t)} \leftarrow \D^{(t)} \diag(\C^{(t)})^{\frac12} \label{eq:dc1}\\
  \C^{(t)} \leftarrow \mathrm{corr}(\C^{(t)}) = \diag(\C^{(t)})^{-\frac12} \C^{(t)} \diag(\C^{(t)})^{-\frac12} \enspace.\label{eq:dc2}
\end{gather}
These lines are performed just before the eigen decomposition. It means that $\D$ and $\C$ are the variance matrix and the correlation matrix of the sampling distribution, respectively.
This reparametrization does not change the algorithm itself, it only improves the numerical stability of the implementation.
Note that if $\C$ is constrained to be a correlation matrix, then $\D\C\D$ is a \emph{unique} parametrization for the covariance matrix.

The eigen decomposition of $\C^{(t)}$ is performed every $t_\text{eig}$ iterations. We keep $\sqrt{\C}$ and $\sqrt{\C}^{-1}$ until the next matrix decomposition is performed. Suppose that the eigen decomposition $\C^{(t)} = \mathbf{E} \Lambda \mathbf{E}^\T$ is performed at iteration $t$.
Then, the above matrices are
$\sqrt{\mathbf{C}} = \mathbf{E} \Lambda^{\frac12} \mathbf{E}^\T$ and $\sqrt{\mathbf{C}}^{-1} = \mathbf{E} \Lambda^{-\frac12} \mathbf{E}^\T$. Then, we can compute $\beta$ in \eqref{eq:ddamp} as $\beta = \max(1, (\max_i([\Lambda]_{i,i}) / \min_i([\Lambda]_{i,i}))^\frac12 - \beta_\mathrm{thresh} + 1)$.
This $\beta$ is kept until the next eigen decomposition is performed. Then,
the additional computational cost for adaptive diagonal decoding is $O(n^2 / t_\mathrm{eig} + \lambda n)$,
which is smaller than the computational cost for the other parts of the CMA-ES, $O(n^3 / t_\mathrm{eig}  + \lambda n^2)$ per iteration.

One might think that maintaining $\D$ is not necessary and $\C$ could be updated by pre- and post-multiplying by a diagonal matrix absorbing the effect of the $\D$ update. However, because diagonal decoding may lead to a fast change of the distribution shape,  the decomposition of $\C$ then needs to be done every iteration. The chosen parametrization circumvents frequent matrix decompositions despite changing the sampling distribution rapidly.

\section{Algorithm Summary and Strategy Parameters}\label{sec:param}

The \DDCMA-ES combines weighted recombination, active covariance matrix update with positive definiteness guarantee (described in Section~\ref{sec:method1}), adaptive diagonal decoding (described in Section~\ref{sec:add}) and CSA, and is summarized in Algorithm~\ref{alg:ddcma}.\footnote{Its python code is available at\\  \url{https://gist.github.com/youheiakimoto/1180b67b5a0b1265c204cba991fa8518}}
\begin{algorithm}[t]
\begin{algorithmic}[1]
  \Require{$\m$, $\sigma$}\Comment{initial distribution parameters}
  \State $\D = \sqrt{\C} = \sqrt{\C}^{-1} = \eye$, $\mathbf{K} = \mathbf{O}$ ($\mathbf{O}$: zero matrix)
  , $\pc = \ps = \pcd = \bm{0}$, $\gc = \gs = 0$, $\beta = 1$, $t = 0$
  \Repeat
  \State sample $\lambda$ points $(\zz_i,\ \yy_i,\ \xx_i)_{i=1}^{\lambda}$ by \eqref{eq:sampling}
  \State evaluate $f(\xx_i)$  for all $i=1,\dots,\lambda$
  \State sort in ascending order of $f(\xx_i)$ and denote $i$th best point as $\zz_{i:\lambda}$, $\yy_{i:\lambda}$, and $\xx_{i:\lambda}$
  \State update $\m$ by \eqref{eq:m}
  \State update $\ps$, $\gs$ and $\sigma$ by \eqref{eq:ps}, \eqref{eq:gs} and \eqref{eq:sigma}, then compute $\hsig$ by \eqref{eq:hs}
  \State update $\pc$ and $\pcd$ by \eqref{eq:pc} with factors $\cc$ and $\ccd$, resp., and update $\gc$ by \eqref{eq:gc}
  \State compute $\mathbf{Z}^{(t)}$ by \eqref{eq:zmat} and add it to $\mathbf{K}$ \Comment{$\mathbf{K} = \sum_{k=t - (t \mod\teig)}^{t} \mathbf{Z}^{(k)}$}
  \State update $\D$ by \eqref{eq:deltad} and \eqref{eq:dup} with $\beta^{(t)} = \beta$
  \State $t \leftarrow t + 1$
  \If{$(t \mod t_\mathrm{eig}) = 0$}
  \State compute $\C$ by \eqref{eq:aaacma} and  $\alpha$ by \eqref{eq:alpha1} using $\mathbf{K}$
  \State $\D\leftarrow \D \sqrt{\diag(\C)}$ and $\C \leftarrow \sqrt{\diag(\C)}^{-1} \C\sqrt{\diag(\C)}^{-1}$\Comment{\eqref{eq:dc1} and \eqref{eq:dc2}}
  \State perform eigen decomposition $\C = \mathbf{E}\Lambda\mathbf{E}^\T$
  \State $\beta \leftarrow \max(1, \sqrt{\max_{i}[\Lambda]_{i,i} / \min_{j}[\Lambda]_{j,j}} - \beta_\mathrm{thresh}+1)$ \Comment{\eqref{eq:ddamp}}
  \State $\sqrt{\C} \leftarrow \mathbf{E}\Lambda^\frac{1}{2}\mathbf{E}^\T$, $\sqrt{\C}^{-1} \leftarrow \mathbf{E}\Lambda^{-\frac{1}{2}}\mathbf{E}^\T$, $\mathbf{K} \leftarrow \mathbf{O}$
  \EndIf
  \Until{termination condition are met}
\end{algorithmic}
\caption{\DDCMA-ES: CMA-ES with adaptive diagonal decoding}
\label{alg:ddcma}
\end{algorithm}

\begin{algorithm}[t]
\begin{algorithmic}[1]
  \Require{$n$ and optionally $\lambda$}
  \State $\lambda = 4 + \lfloor 3 \ln n \rfloor$ if not given
  \State $w_i' = \ln\frac{\lambda + 1}{2} - \ln i$ for $i = 1, \dots, \lambda$
  \State $\mueff = \frac{(\sum_{i:w_i' > 0} \abs{w_i'})^2}{\sum_{i:w_i' > 0} \abs{w_i'}^2}$ and $\mueff^{-} = \frac{(\sum_{i:w_i' < 0} \abs{w_i'})^2}{ \sum_{i:w_i' < 0} \abs{w_i'}^2}$
  \State $\cm = 1$
  \State $\cs = \frac{\mueff + 2}{n + \mueff + 5}$ and $\ds = 1 + \cs + 2 \max\left( 0,\ \sqrt{\frac{\mueff - 1}{n + 1}} - 1\right)$
  \State $\cone,\ \coned = \frac{1}{ 2 (m / n + 1)(n + 1)^{3/4} + \mueff/2}$ with $m = n(n+1)/2$ and $n$, resp.
  \State $\cmu,\ \cmud = \min(\mu'\cone, 1 - \cone),\ \min(\mu'\coned, 1 - \coned)$ with $\mu'= \mueff + \frac{1}{\mueff} - 2 + \frac12\frac{\lambda}{\lambda + 5}$
  \State $\cc,\ \ccd = \frac{\sqrt{\mueff\cone}}{2}, \frac{\sqrt{\mueff\coned}}{2}$
  \State $w_i, w_{i,D} =
  \begin{cases}
    \frac{w_i'}{ \sum_{j:w_j' > 0} \abs{w_j'}} & \text{for $w_i' \geq 0$}\\
    \frac{w_i' \times \min\left( 1 + r, \ 1 + 2\mueff^{-} / (\mueff + 2)\right)}{ \sum_{j:w_j' < 0} \abs{w_j'}}  & \text{for $w_i' < 0$}
  \end{cases} \text{ with $r = \frac{\cone}{\cmu}$ and $\frac{\coned}{\cmud}$, resp.}$
  \State $t_\text{eig} = \max\big(1, \big\lfloor (\beta_\mathrm{eig}  (\cone + \cmu) )^{-1} \big\rfloor\big)$ with $\beta_\mathrm{eig} = 10n$
  \State $\beta_\mathrm{thresh} = 2$
\end{algorithmic}
\caption{Default parameter computation for \DDCMA-ES}
\label{alg:param}
\end{algorithm}
Providing good default values for the strategy parameters (aka hyper parameters) is, needless to say, essential for the success of a novel algorithm. Especially in a black-box optimization scenario, parameter tuning for an application relies on trial-and-error and is usually prohibitively time consuming. The computation of the default parameter values is summarized in Algorithm~\ref{alg:param}. Since we have $\cone / \cmu = \coned / \cmud$ as long as $\cone + \cmu < 1$ (see below), the recombination weights for the update of $\C$ and of $\D$ are the same, $w_i = w_{i,D}$.

\begin{figure}\centering
  \includegraphics[width=0.5\hsize]{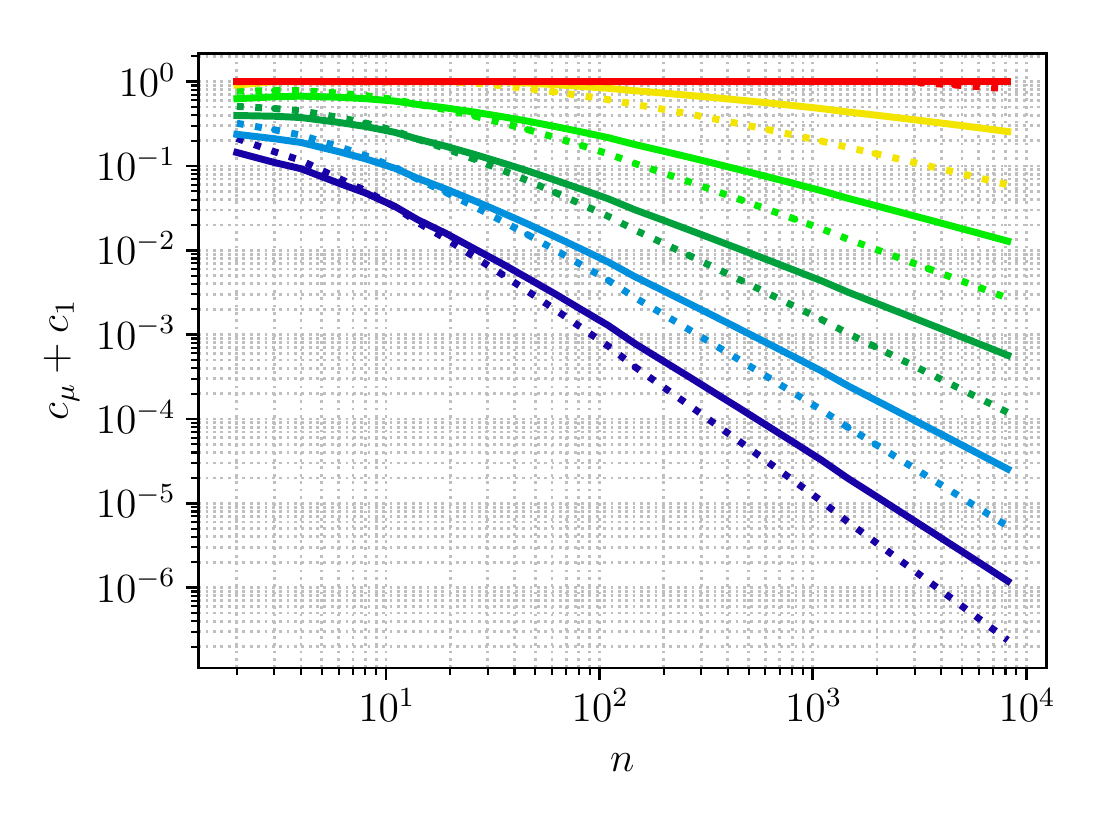}%
  \includegraphics[width=0.5\hsize]{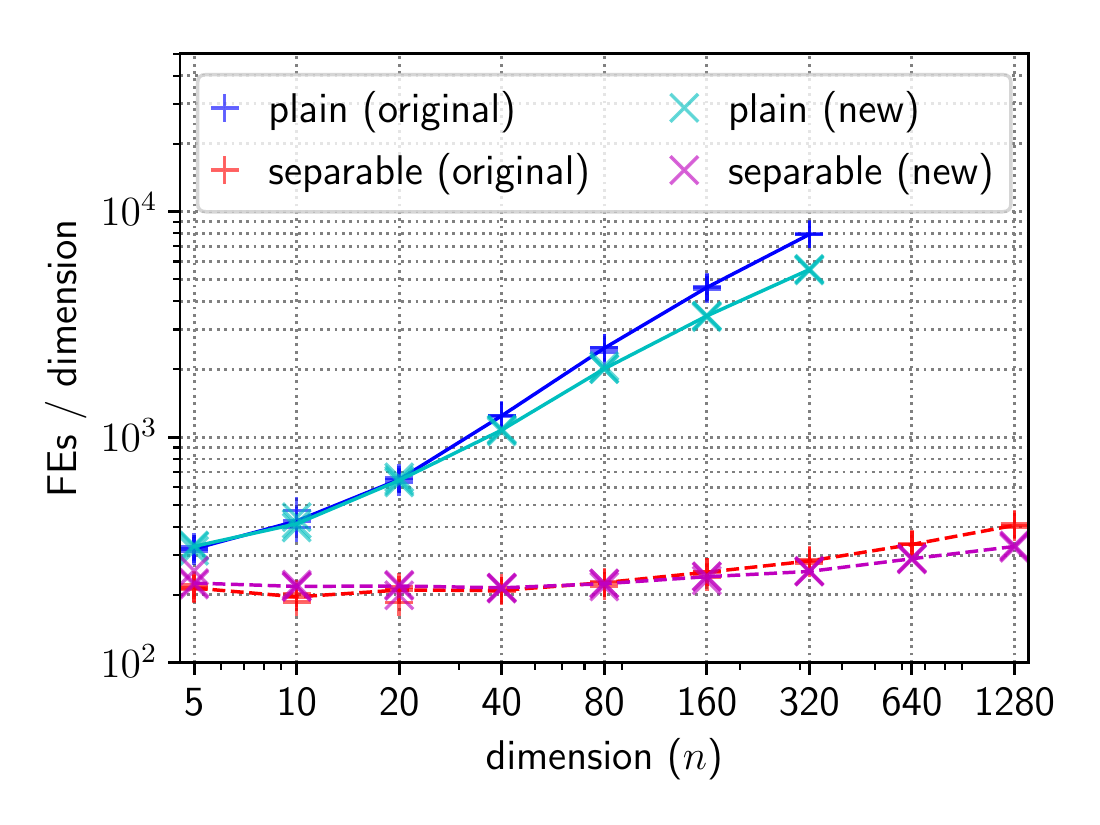}%
  \caption{Left: Scaling of $\cone + \cmu$ with the dimension $n$ for different population sizes. Solid lines are for the new setting from Table~\ref{alg:param}, $\Theta(n^{1/4}\lambda/m)$ and dashed lines are for the original setting, $\Theta(\lambda/m)$. 
For $n<10$, the new learning rates are slightly more conservative, for $n>20$ they are more ambitious.
Shown are six population sizes equally log-spaced between $\lambda = 4 + \lfloor 3 \log(n) \rfloor$ (bottom) and $\lambda = 64n^2$ (top). Right: Function evaluations on the ellipsoid function for plain and separable CMA with original and new parameter settings. Three independent trials have been conducted for each setting, and the median of each setting is indicated by a line (solid: plain CMA, dashed: separable CMA). See Section~\ref{sec:exp} for details.}
\label{fig:lrcomp}
\end{figure}

The learning rates for $\D$ and $\C$ are modified to improve the adaptation speed especially in high dimension.
Let $m$ be the degrees of freedom of the matrix to be adapted, i.e., $m = (n+1)n/2$ for the $\C$-update and $m = n$ for the $\D$-update. The learning rate parameters and the cumulation factors are set to the following values
\begin{align}
    \cone,\ \coned &= \frac{1}{ 2 (m / n + 1)(n + 1)^{3/4} + \mueff/2} \label{eq:cone}\\
    \cmu,\ \cmud &= \min(\mu'\cone, 1 - \cone),\
                    \min(\mu'\coned, 1 - \coned) \label{eq:cmu}\\
     &\text{with~} \mu' = 
    \mueff + \frac{1}{\mueff} - 2 + \frac{\lambda}{2(\lambda + 5)}\nonumber\\
  \cc,\ \ccd &=
               \frac{\sqrt{\mueff \cone}}{2},\ \frac{\sqrt{\mueff \coned}}{2}\enspace.\label{eq:cc}
\end{align}

The first important change is the scaling of the learning rate with the dimension~$n$. In previous works \citep{Ros2008ppsn,Hansen:2013vt,akimoto2016gecco}, the default learning rates were set inversely proportional to $m$, i.e., $\Theta(n^{-2})$ for the original CMA and $\Theta(n^{-1})$ for the separable CMA. Our choice is based on empirical observations that $\Theta(n^{-2})$ is exceedingly conservative for higher dimensional problems, say $n > 100$:
in experiments to identify $\cmu$ for the original CMA-ES, we vary $\cmu$ and investigate the number of evaluations to reach a given target.
We find that the location of the minimum and the shape of the dependency remains for a wide range of different dimensions roughly the same when the evaluations are taken against $\cmu m / n^{0.35}$ (see also Figure~\ref{fig:cc} below).
The observation suggests in particular that $\cmu= \Theta(\sqrt{n}/m)$ is likely to fail with increasing dimension, whereas
$\cmu=\Theta(n^{0.35}/m)$ is a stable setting  such that the new setting of $\Theta(n^{0.25}/m)$ (replacing $\Theta(n^{0}/m)$) is still sufficiently conservative.
Figure~\ref{fig:lrcomp} depicts the difference between the default learning rate values in \citet{Hansen:2013vt} and the above formulas.\footnote{The learning rate for the rank-$\mu$ update is usually $\mu'$ times greater than the learning rate for the rank-one update, where $\mu' \in (\mueff - 2, \mueff - 1/2)$ is monotonous in \mueff\ and approaches $\mueff - 3/2$ for $\lambda \to \infty$. Using $\mu'$ instead of $\mueff$ is a correction for small $\mueff$. When $\mueff = 1$, we have $\mu' < 1/2$ (the first three terms cancel out). In this case, because $\mu=1$, the rank-one update contains more information than the rank-$\mu$ update as the evolution path accumulates information over time. Using $\mueff$ instead of $\mu'$ would result in the same learning rates for both updates, whereas the learning rate for the rank-$\mu$ update should be smaller in this case.}

 Another important change in the default parameter values from previous studies~\citep{Ros2008ppsn,Hansen:2013vt} is in the cumulation factor $\cc$. Previously, the typical choices were $\cc = \frac{4}{n+4}$ or $\cc = \frac{4 + \mueff/n}{n+4 + 2 \mueff/n}$.
\begin{figure}[t]\centering
  \begin{subfigure}{0.485\textwidth}%
    \includegraphics[trim={0.4cm 0.5cm 0.4cm 0.4cm},clip,width=\hsize]{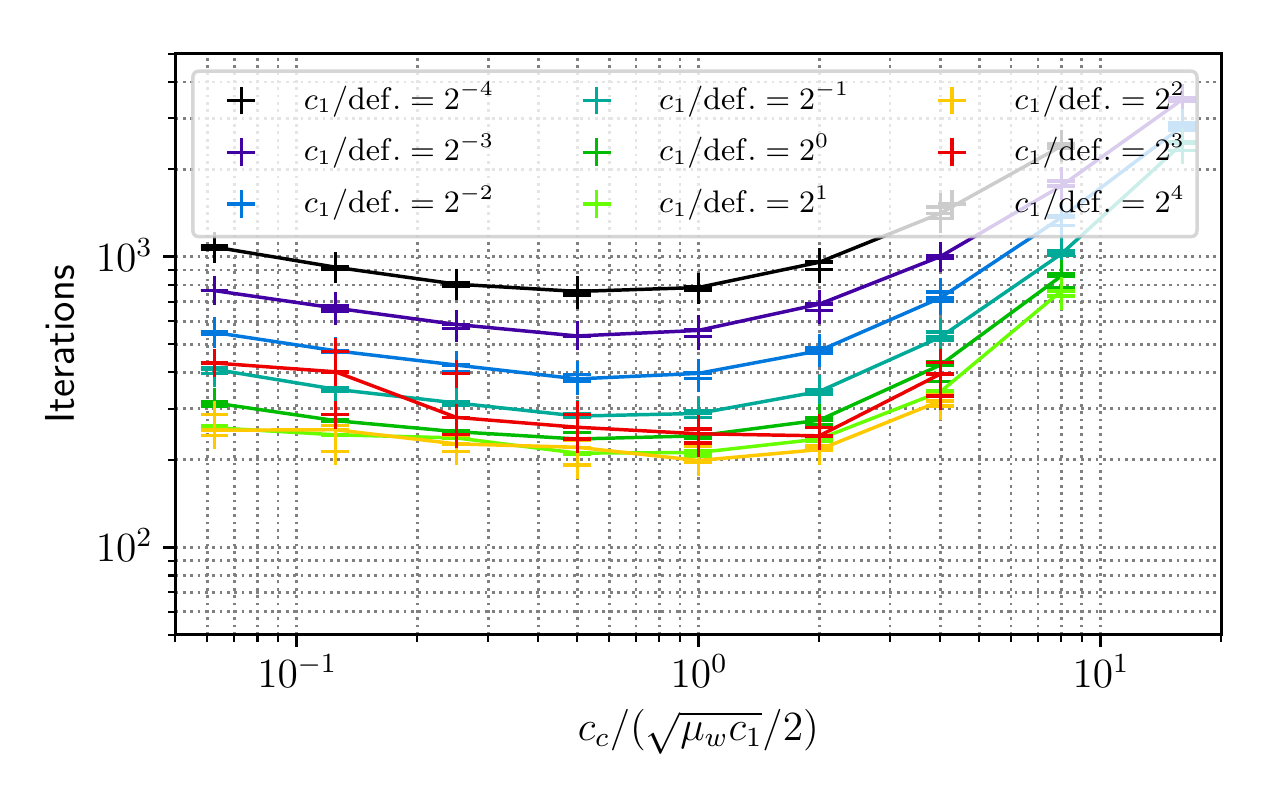}%
    \caption{CMA ($\lambda = 12$)}\label{fig:cc-cma12}%
  \end{subfigure}%
  \begin{subfigure}{0.485\textwidth}%
    \includegraphics[trim={0.4cm 0.5cm 0.4cm 0.4cm},clip,width=\hsize]{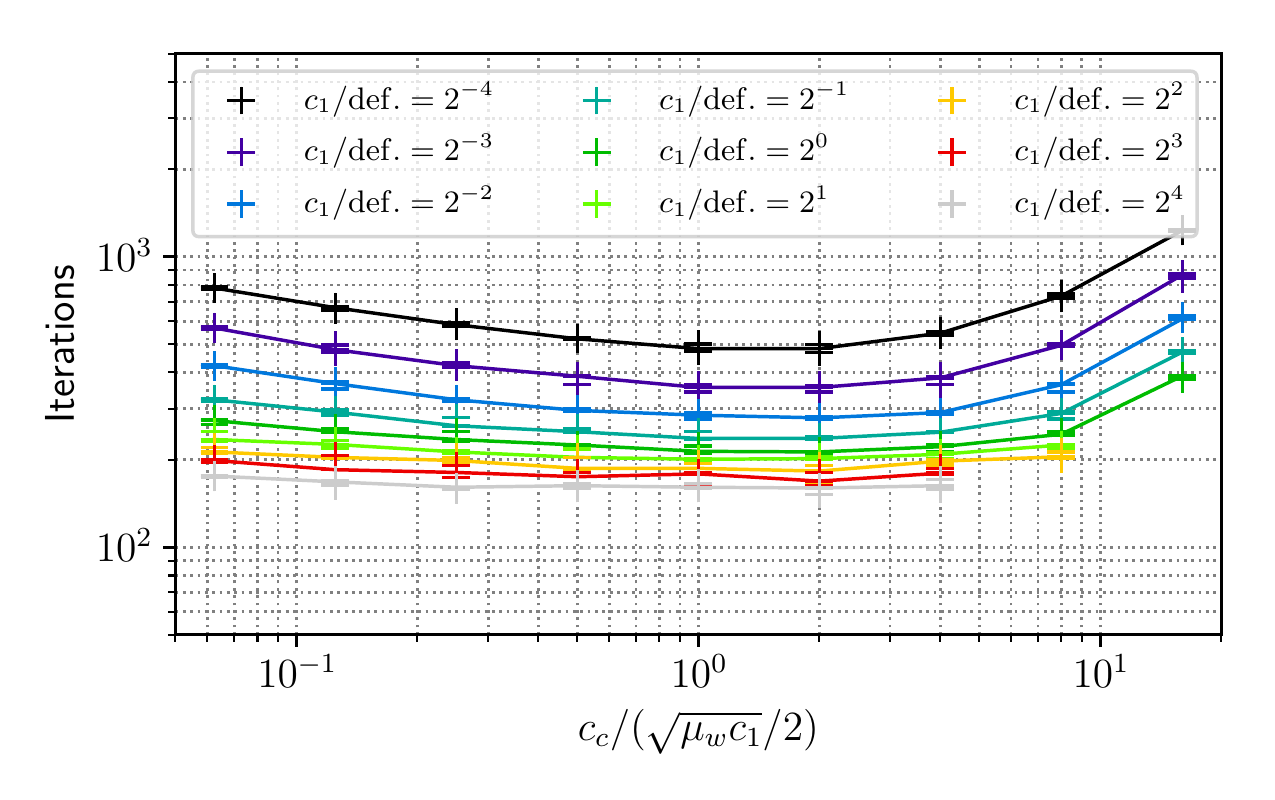}%
    \caption{CMA ($\lambda = 1200$)}\label{fig:cc-cma80}%
  \end{subfigure}%
  \\
  \begin{subfigure}{0.485\textwidth}%
    \includegraphics[trim={0.4cm 0.5cm 0.4cm 0.4cm},clip,width=\hsize]{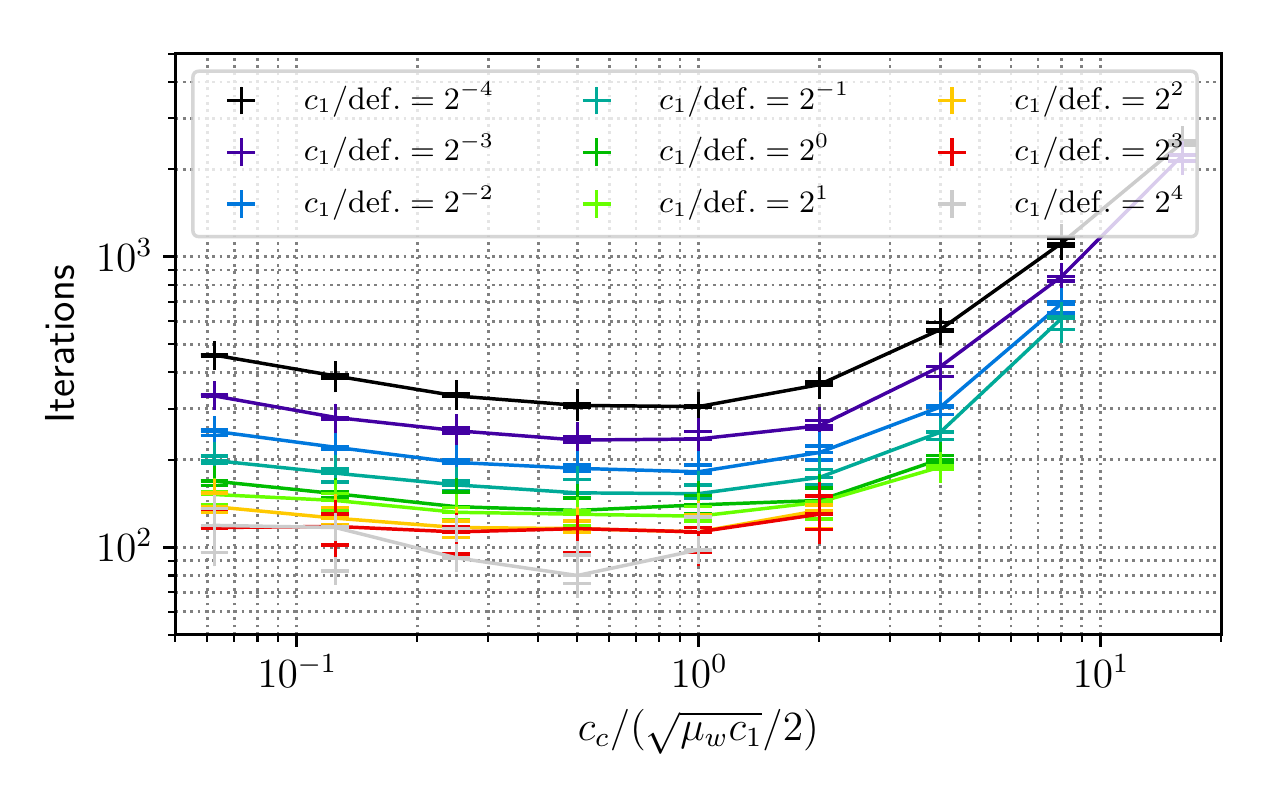}%
    \caption{SEP ($\lambda = 12$)}\label{fig:cc-sep12}%
  \end{subfigure}%
  \begin{subfigure}{0.485\textwidth}%
    \includegraphics[trim={0.4cm 0.5cm 0.4cm 0.4cm},clip,width=\hsize]{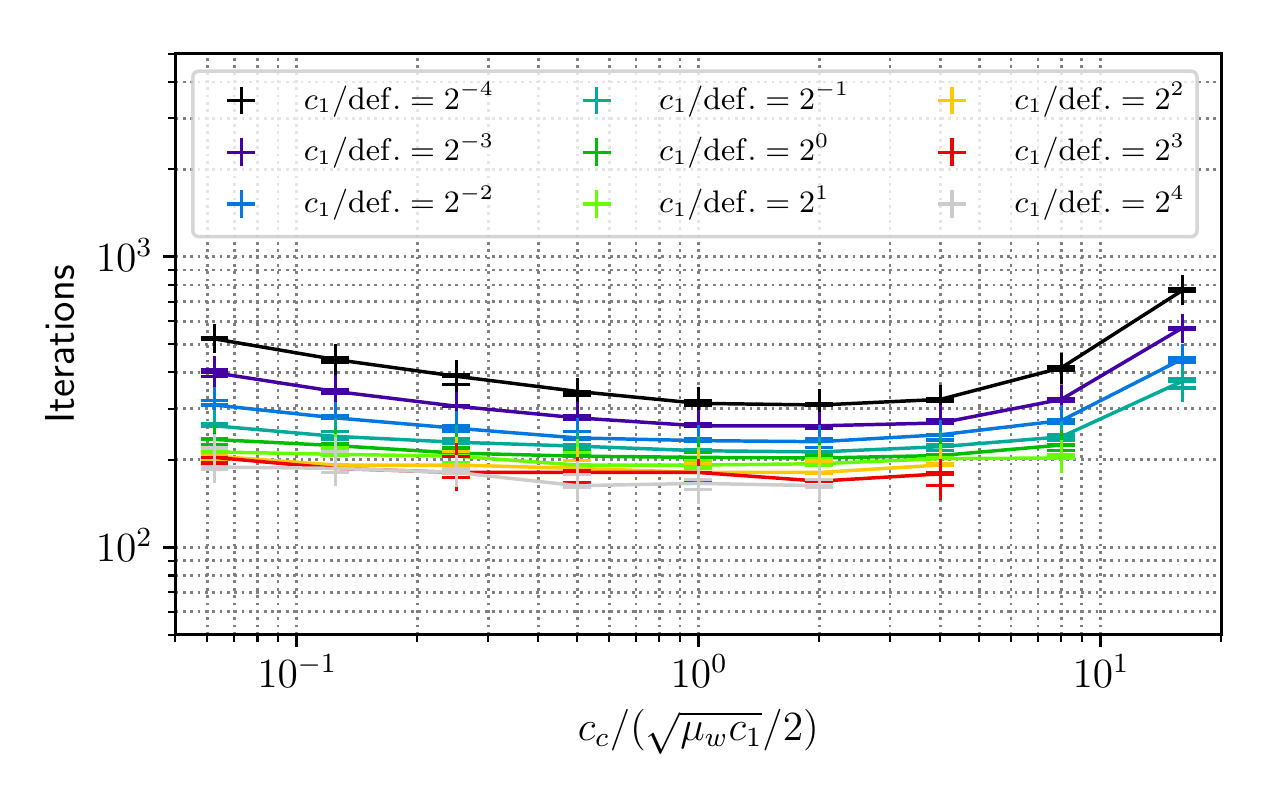}%
    \caption{SEP ($\lambda = 1200$)}\label{fig:cc-sep80}%
  \end{subfigure}%
  \caption{Function evaluations on the 20-D cigar function until the target $f$-value of $10^{-2}$ is achieved for different $\cone$ (or $\coned$) and different $\cc$ (or $\ccd$). Three independent trials have been conducted for each setting, and the median of each setting is indicated by a line. The label \emph{def} is the default value computed in \eqref{eq:cone}. Missing points indicate more than $5000$ iterations or invalid $c$-values, e.g., $\cone > 1$ or $\cc > 1$.}\label{fig:cc}
\end{figure}
The new and simpler default value for the cumulation factor\footnote{The appearance of $\sqrt{\mueff}$ in the cumulation factor is motivated as follows.
\citet{Hansen2001ec} analyzed the situation where the selection vector $\sqrt{\mueff}\sum_{i=1}^{\mu}w_i \zz_{i:\lambda}$ alternates between two vectors $\bm{v}$ and $\bm{w}$ over iterations and showed the squared Euclidean norm of the evolution path, i.e.\ the eigenvalue of $\pc\pc^\T$, remains in $O(\norm{\bm{v} + \bm{w}}^2 / \cc)$ if $\norm{\bm{v} + \bm{w}}>0$.
On the other hand, \citet{Akimoto2008gecco} showed that the above selection vector can potentially be proportional to $\sqrt{\mueff}$ when $\sigma$ is (too) small.
Altogether, we have $\cone \norm{\pc}^2 \in O(\cone \mueff / \cc)$, where $\cone\mueff \to 2$ and $\cc\to\sqrt{2}/2$ as $\mueff \to \infty$. Therefore, the eigenvalue added by the rank-one update is bounded by a constant with overwhelming probability.} is
motivated by an experiment on the Cigar function, investigating
the dependency of $\cc$ on $\cone$.
The effect of the rank-one update of the covariance matrix $\C$ is most pronounced when the covariance matrix needs to increase a single eigenvalue.
To skip over the usual initial phase where sigma decreases without changing the covariance matrix and emphasize on the effect of $\cc$, the mean vector is initialized at $\m^{(0)} = (1, 0, \dots, 0)$ and $\sigma^{(0)} = 10^{-6}$.
The rank-$\mu$ update is off, i.e., $\cmu = \cmud = 0$. We compared the number of $f$-calls until the target function value of $10^{-2}$ is reached. Note that $\sigma^{(0)}$ is chosen to avoid $\sigma$ adaptation at the beginning and the target value is chosen so that we stop the optimization once the covariance matrix learned the right scaling.
Figure~\ref{fig:cc} compares the number of $f$-calls spent by the original CMA and the separable CMA on the $20$ dimensional Cigar function.
Because the $x$-axis is normalized with the new default value for \cc,
the shape of the graph and the location of the minimum
remains roughly the same for different values of $\cone$ and $\lambda$.
The adaptation speed of the covariance matrix tends to be the best around $\cc = \sqrt{\mueff\cone}/2$ or $\sqrt{\mueff\coned}/2$.
The minimum is more pronounced for smaller $\cone$ (or $\coned$).
Nevertheless, we observe little improvement with the new setting in practice since more $f$-calls are usually spent to adapt the overall covariance matrix and even to adapt the step-size to converge. We have done the same experiments in dimension $80$ and observe the same trend.

None of the strategy parameters are meant to be tuned by users, except the population size $\lambda$. A larger population size typically results in finding a better local minimum on rugged functions such as multimodal or noisy functions \citep{Hansen2004ppsn}. It is also advantageous when the objective function values are evaluated in parallel \citep{Hansen2003ec}.
Restart strategies that perform independent restarts with increasing population size such as IPOP strategy \citep{harik1999parameter, Auger2005cecgs} or BIPOP strategy \citep{Hansen2009geccobbobbi} or NIPOP strategy \citep{loshchilov2012alternative} are handy policies that automate the parameter tuning.
They can be applied to the CMA-ES with diagonal acceleration in a straight-forward way. We leave research in this line as future work.

\section{Experiments}\label{sec:exp}

We conduct numerical simulations to see the effect of the algorithmic components of CMA-ES with diagonal acceleration, \DDCMA-ES---namely the active update with positive definiteness guarantee and the adaptive diagonal decoding. Since the effect of the active covariance matrix update with default population size has been investigated by \citet{Jastrebski2006cec} and our contribution is a positive definiteness guarantee for the covariance matrix especially for large population size, we investigate how the effect of the active update scales as the population size increases.
Our main focus is however on the effect of adaptive diagonal decoding.
Particularly, we investigate (i) how much better \DDCMA{} scales on separable functions than the CMA without diagonal decoding (plain CMA), (ii) how closely \DDCMA\ matches the performance of separable CMA on separable functions, and (iii) how the scaling of \DDCMA{} compares to the plain CMA on various non-separable functions. Moreover, we investigated how effective \DDCMA{} is when the population size is increased.

\subsection{Common Setting}
\newcommand{\commonmsig}{& $3 \cdot \bm{1}$ & $1$}
\begin{table}[t]
  \centering
  \caption{Test function definitions and initial conditions. The unit vector $\bm{u}$ is either $\bm{e}_1 = (1, 0, \dots, 0)$ for separable scenarios, or random vectors drawn uniformly on the unit sphere for non-separable scenarios or for Ell-Cig and Ell-Dis. A vector $\zz$ represents an orthogonal transformation $\mathbf{R} \xx$ of the input vector $\xx$, where the orthogonal matrix $\mathbf{R}$ is the identity matrix for separable scenarios, or is constructed by generating normal random elements and applying the Gram-Schmidt procedure. The diagonal matrix $\D_\mathrm{ell} = \diag(1, \cdots, 10^{\frac{i-1}{n-1}}, \cdots, 10)$ represents a coordinate-wise scaling and the vector $\mathbf{y}$ is a coordinate-wisely transformed input vector $\yy = \D_\mathrm{ell}^2 \xx$. }
  \label{tbl:func}
  \small
  \begin{tabular}{lccc}
  & $f(\xx)$ & $\m^{(0)}$ & $\sigma^{(0)}$ \\
    \toprule
    Sphere & $\norm{\xx}^2$\commonmsig\\
    \midrule
    Cigar & $\inner{\bm{u}}{\xx}^2 + 10^6 (\norm{\xx}^2 - \inner{\bm{u}}{\xx}^2)$\commonmsig\\
    Discus& $10^6\inner{\bm{u}}{\xx}^2 + (\norm{\xx}^2 - \inner{\bm{u}}{\xx}^2)$\commonmsig\\     
    Ellipsoid & $\norm{\D_\mathrm{ell}^3 \zz}^2$\commonmsig\\
    TwoAxes & $10^{6} \sum_{i=1}^{n/2}[\zz]_{i}^2 + \sum_{i=n/2 + 1}^{n}[\zz]_{i}^2$ \commonmsig\\        
    \midrule
    Ell-Cig & $10^{-4} \inner{\bm{u}}{\yy}^2 + (\norm{\yy}^2 - \inner{\bm{u}}{\yy}^2)$ \commonmsig\\
    Ell-Dis & $10^{4} \inner{\bm{u}}{\yy}^2+ (\norm{\yy}^2 - \inner{\bm{u}}{\yy}^2)$ \commonmsig\\
    \midrule
    Rosenbrock & $\sum_{i=1}^{n-1} 100 ( [\zz]_{i}^2 - [\zz]_{i+1})^2 + ([\zz]_{i} - 1)^2 $
    & $\bm{0}$ & $0.1$\\
    Bohachevsky & $\sum_{i=1}^{n-1} ([\zz]_{i}^2 + 2[\zz]_{i+1}^2 - 0.3\cos(3\pi[\zz]_{i}) \dots$
    & $\mathcal{N}(\bm{0}, 8^2\eye)$ & $7$ \\
    &\hfill ${} - 0.4\cos(4\pi[\zz]_{i+1}) + 0.7)$ \\    
    Rastrigin & $\sum_{i=1}^{n} [\zz]_{i}^2 + 10(1 - \cos(2\pi[\zz]_{i}))$ 
    & $\mathcal{N}(\bm{0}, 3^2\eye)$ & $2$\\ 
    \bottomrule
  \end{tabular}
\end{table}

Table~\ref{tbl:func} summarizes the test functions used in the following experiments together with the initial $\m^{(0)}$ and $\sigma^{(0)}$.
The initial covariance matrix $\C^{(0)}$ and the diagonal decoding matrix $\D^{(0)}$ are always set to the identity matrix. The random vectors and the random matrix appearing in Table~\ref{tbl:func} are initialized randomly for each problem instance, but the same values are used for different algorithms for fair comparison. A trial is considered as success if the target function value of $10^{-8}$ is reached before the algorithm spends $5\times 10^4 n$ function evaluations, otherwise regarded as failure. For $n \leq 40$ we conducted $20$ independent trials for each setting, $10$ for $80 \leq n \leq 320$, and $3$ for $n \geq 640$. When the computational effort of evaluating the test function scales worse than linear with the dimension (i.e., on rotated functions) we may omit dimensions larger than $320$.

We compare the following CMA variants:
\begin{description}
\item[Plain CMA]  updates $\C$  while $\D$ is kept the identity matrix, with or without active update described in Section~\ref{sec:active} (Algorithm~\ref{alg:ddcma} without $\D$-update);
\item[Separable CMA] updates $\D$ as described in Section~\ref{sec:add} while $\C$ is kept the identity matrix, with active update or without active update, where negative recombination weights are set to zero (Algorithm~\ref{alg:ddcma} without $\C$-update);
\item[\DDCMA{}] as summarized in Algorithm~\ref{alg:ddcma},  updates $\C$ as described in Section~\ref{sec:cma} with active update described in Section~\ref{sec:add}, and updates $\D$ as described in Section~\ref{sec:add}.
\end{description}
All strategy parameters such as the learning rates are set to their default value presented in Section~\ref{sec:param}. Note that the separable CMA is different from the original publication \citep{Ros2008ppsn} in that the matrix is updated in multiplicative form which however barely affects the performance.

\subsection{Active CMA}

First, the effect of the active update is investigated. The plain and separable CMA with and without active update are compared. 

\begin{figure}[t]\centering
  \begin{subfigure}{0.33\textwidth}%
    \includegraphics[trim={0.4cm 0.5cm 0.4cm 0.4cm},clip,width=\hsize]{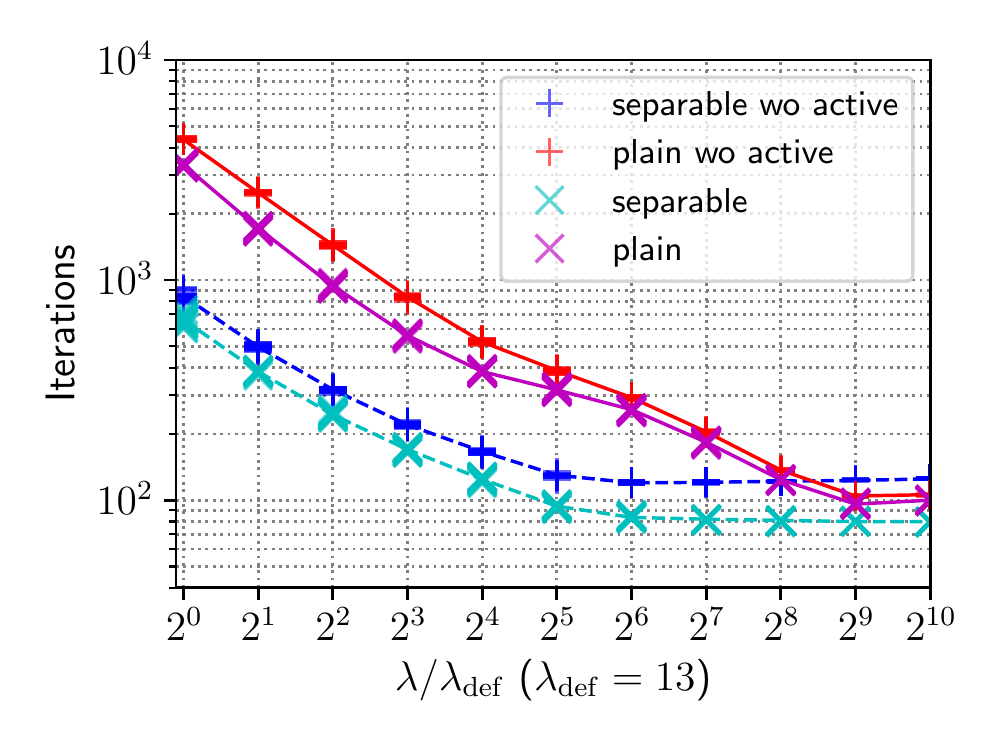}%
    \caption{Ellipsoid}\label{fig:active-ell}%
  \end{subfigure}%
  \begin{subfigure}{0.33\textwidth}%
    \includegraphics[trim={0.4cm 0.5cm 0.4cm 0.4cm},clip,width=\hsize]{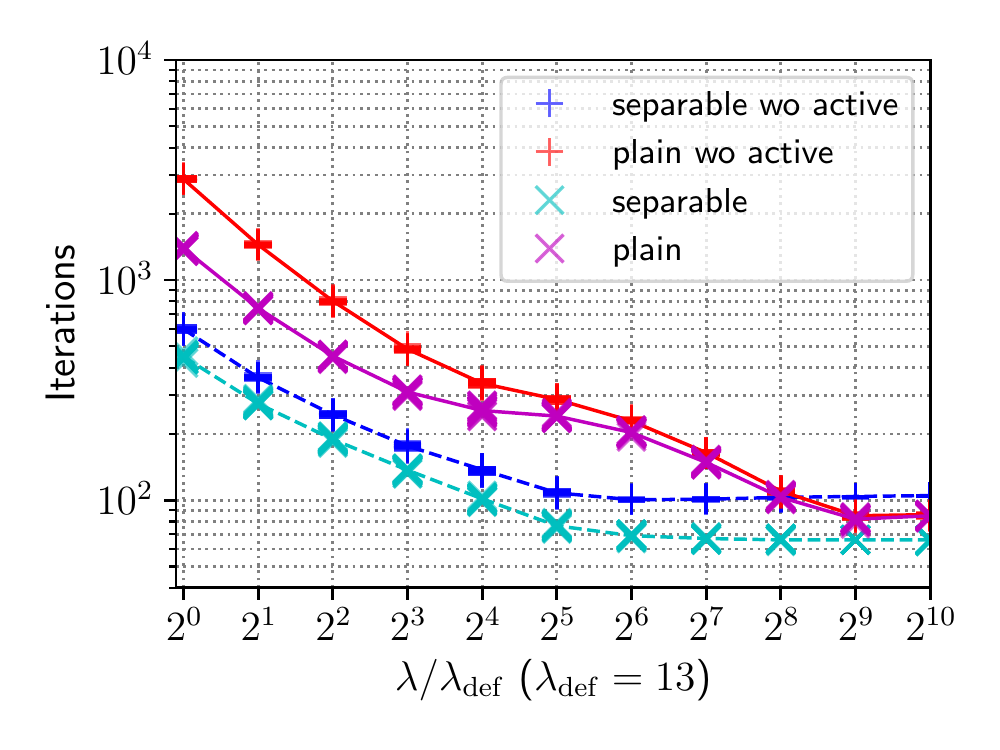}%
    \caption{Discus}\label{fig:active-dis}%
  \end{subfigure}%
  \begin{subfigure}{0.33\textwidth}%
    \includegraphics[trim={0.4cm 0.5cm 0.4cm 0.4cm},clip,width=\hsize]{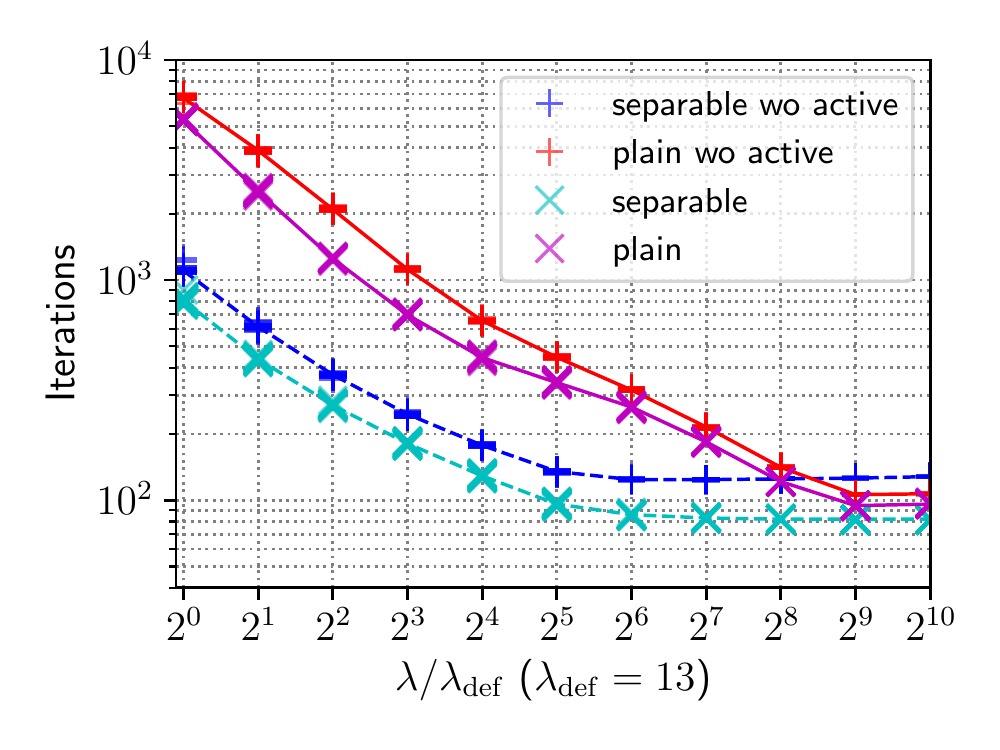}%
    \caption{TwoAxes}\label{fig:active-two}%
  \end{subfigure}%
  \caption{Number of iterations spent by plain and separable CMA with and without active update to reach the target $f$-value of $10^{-8}$ on the 40 dimensional separable Ellipsoid, Discus, and TwoAxes functions. Each line indicates the median of each setting (solid: plain CMA, dashed: separable CMA).
  }\label{fig:active}
\end{figure}

Figure~\ref{fig:active} shows the number of iterations spent by each algorithm plotted against population size $\lambda$.
The number of iterations to reach the target function value decreases as $\lambda$ increases and tends to level out. 
The plain CMA is consistently faster with active update than without.
As expected from the results of \citet{Jastrebski2006cec}, the effect of active covariance matrix update is most pronounced on functions with a small number of sensitive variables such as the Discus function.
The advantage of the active update diminishes as $\lambda$ increases in plain CMA, whereas the speed-up in separable CMA becomes even slightly more pronounced.

\subsection{Adaptive Diagonal Decoding}

Next, we compare \DDCMA{} with the plain and the separable CMA with active update.

\begin{figure}[t]\centering
  \begin{subfigure}{0.25\textwidth}\centering%
    \includegraphics[trim={0.4cm 0.5cm 0.4cm 0.4cm},clip,width=\hsize]{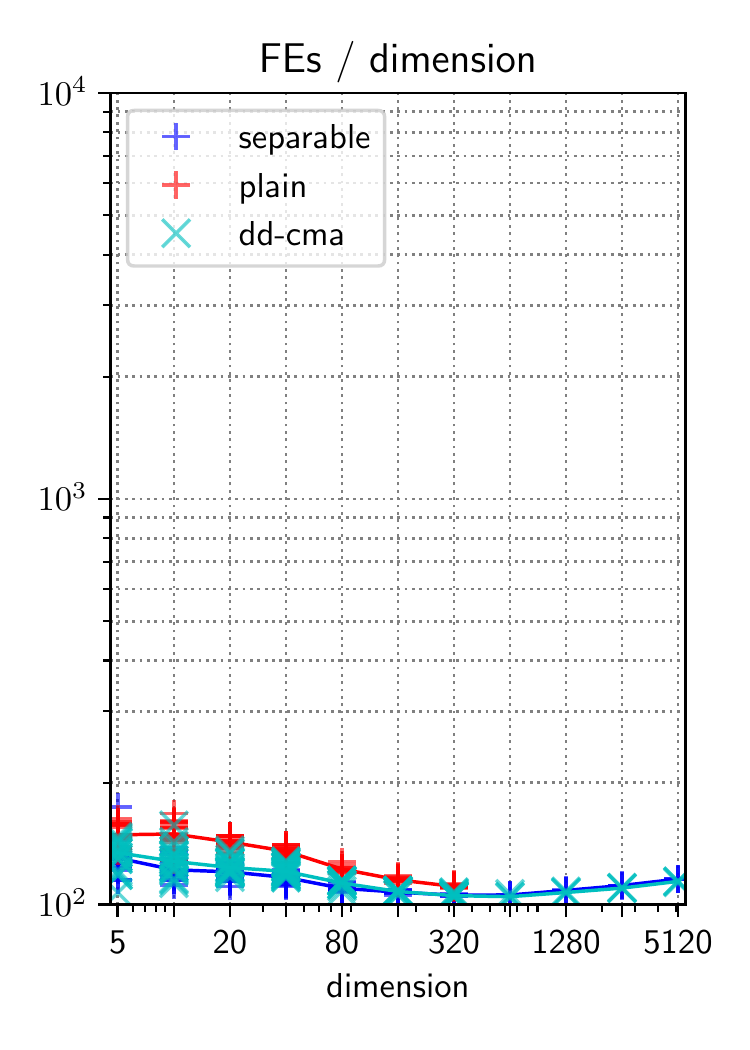}%
    \caption{Sphere}\label{fig:fes-sphere}%
  \end{subfigure}%
  \begin{subfigure}{0.25\textwidth}\centering%
    \includegraphics[trim={0.4cm 0.5cm 0.4cm 0.4cm},clip,width=\hsize]{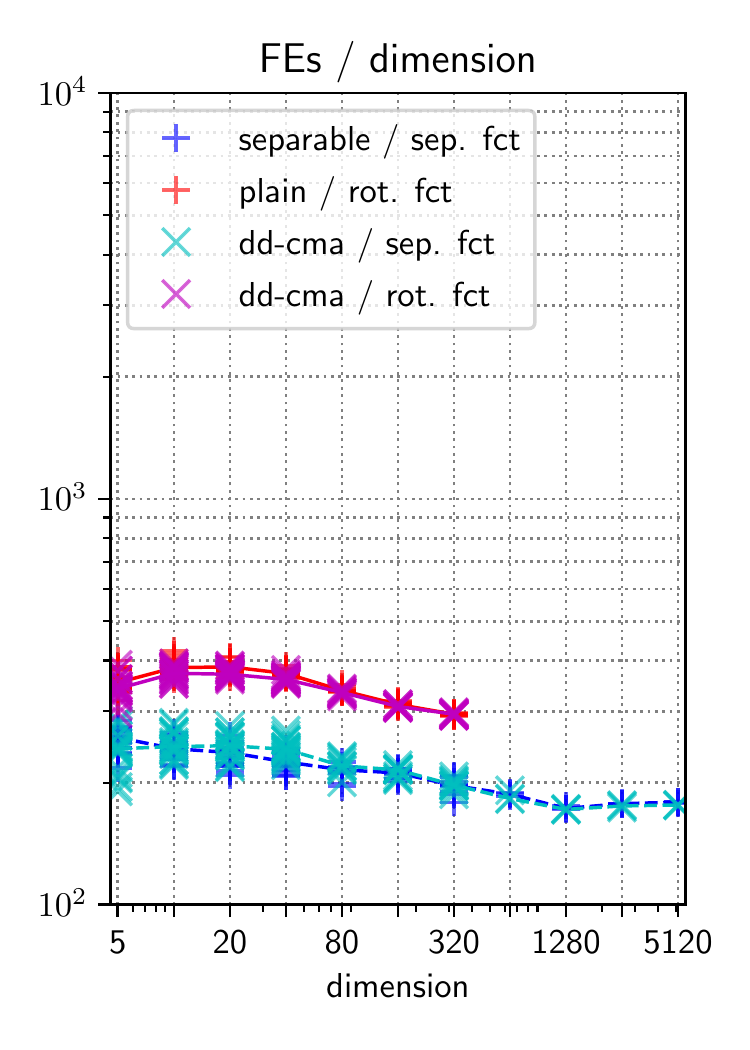}%
    \caption{Cigar}\label{fig:fes-cigar}%
  \end{subfigure}%
  \begin{subfigure}{0.25\textwidth}\centering%
    \includegraphics[trim={0.4cm 0.5cm 0.4cm 0.4cm},clip,width=\hsize]{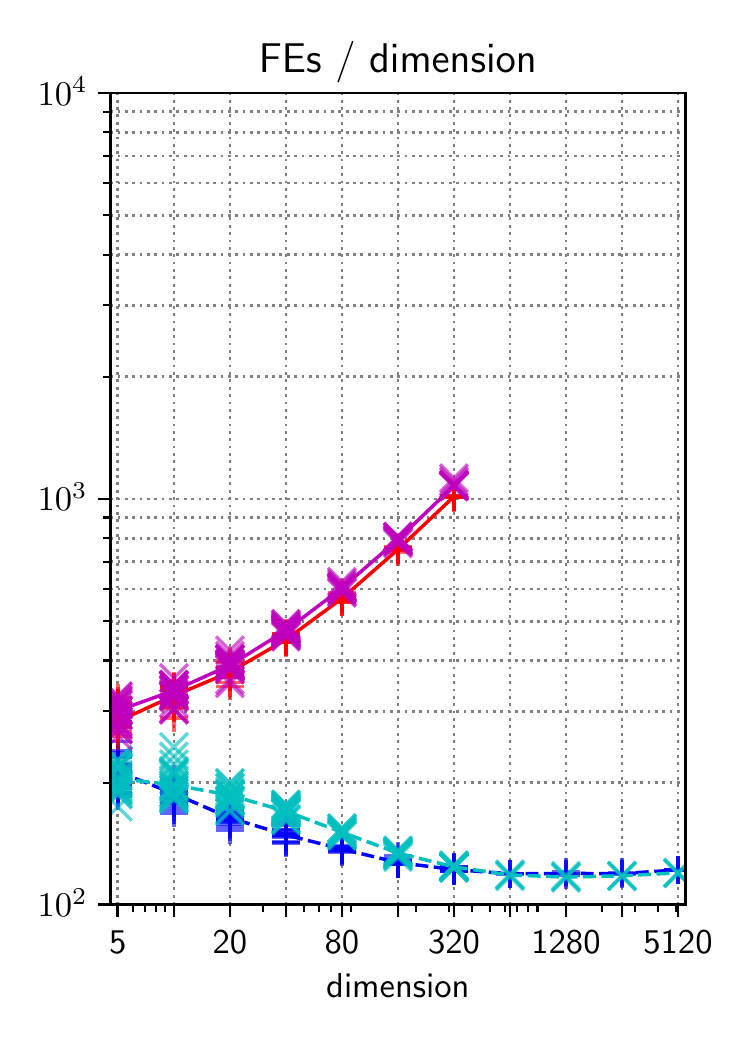}%
    \caption{Discus}\label{fig:fes-discus}%
  \end{subfigure}%
  \begin{subfigure}{0.25\textwidth}\centering%
    \includegraphics[trim={0.4cm 0.5cm 0.4cm 0.4cm},clip,width=\hsize]{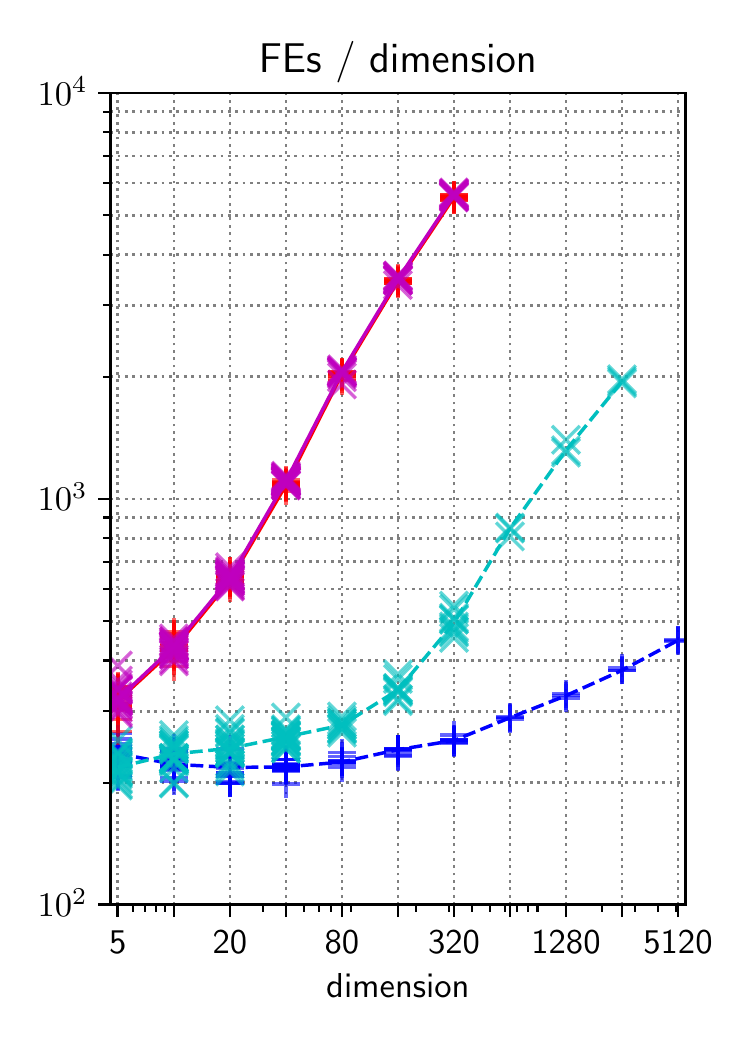}%
    \caption{Ellipsoid}\label{fig:fes-ellipsoid}%
  \end{subfigure}%
  \caption{Function evaluations (divided by $n$) spent by each algorithm until it reaches the target $f$-value of $10^{-8}$ and their median values (solid line: rotated function; dashed line: non-rotated, separable function). Data for \DDCMA{} on the separable Ellipsoid in dimension 5120 are missing since they did not finish in a limited CPU time.
 }\label{fig:fes}
\end{figure}

Figure~\ref{fig:fes} compares \DDCMA{} with the plain CMA on the rotated Cigar, Ellipsoid, and Discus functions, and with the separable CMA on the separable Cigar, Ellipsoid, and Discus functions. Note that the plain CMA scales equally well on separable functions and rotated functions, while the separable CMA will not find the target function value on rotated functions before the maximum budget is exhausted \citep{Ros2008ppsn}. Displayed are the number of function evaluations to reach the target function value on each problem instance. The results of \DD, plain and separable CMA on the Sphere function are displayed for reference.

On the Sphere function, no covariance matrix adaptation is required, and all algorithms perform quite similarly. On the Cigar function, the rank-one covariance matrix update is known to be rather effective, and even the plain CMA scales linearly on the rotated Cigar function. On both, separable and rotated Cigar functions, \DDCMA{} is competitive with separable and plain CMA thereby combining the better performance of the two.


The discrepancy between plain and separable CMA is much more pronounced on Ellipsoid and Discus functions, where the plain CMA scales super-linearly, whereas the separable CMA exhibits linear or slightly worse than linear scaling on separable instances and fails to find the optimum within the given budget on rotated instances.
On the rotated functions, \DDCMA{} is competitive with the plain CMA, whereas it significantly reduces the number of function evaluations on the separable functions compared to plain CMA, e.g., ten times less $f$-calls are required on the $160$ dimensional separable Ellipsoid function.

The \DDCMA{} is competitive with separable CMA on the separable Discus function up to $5120$ dimension, which is the ideal situation since we can not expect faster adaptation of the distribution shape on separable functions.
On the separable Ellipsoid function, the performance curve of \DDCMA{} starts deviating from that of the separable CMA around $n = 320$, and scales more or less the same as the plain CMA afterwards, yet it requires ten times less $f$-calls.
To improve the scaling of \DDCMA{} on the separable Ellipsoid function, we might need to set $\betathresh$ depending on $n$, or use another monotone transformation of the condition number of the correlation matrix in \eqref{eq:ddamp}. Performing the eigen decomposition of $\C$ less frequently (i.e., setting $\beta_\mathrm{eig}$ smaller) is another possible way to improve the performance of \DDCMA{} on the separable Ellipsoid function, while compromising the performance on the rotated Ellipsoid function\footnote{If we set $\beta_\text{eig} = 10$ instead of $\beta_\text{eig} = 10n$, \DDCMA{} spends about $1.5$ times more FEs on the $5120$-dimensional separable Ellipsoid function than separable CMA as displayed in Figure~\ref{fig:fes-ellipsoid}, whereas it spends about $15\%$ more FEs on the $320$-dimensional rotated Ellipsoid function than plain CMA as displayed in Figure~\ref{fig:fes-ellipsoid}. This might be a more practical choice, but further investigation is required.}.


\begin{figure}[t]\centering
  \begin{subfigure}{0.33\textwidth}%
    \includegraphics[trim={0.4cm 0.5cm 0.4cm 0.4cm},clip,width=\hsize]{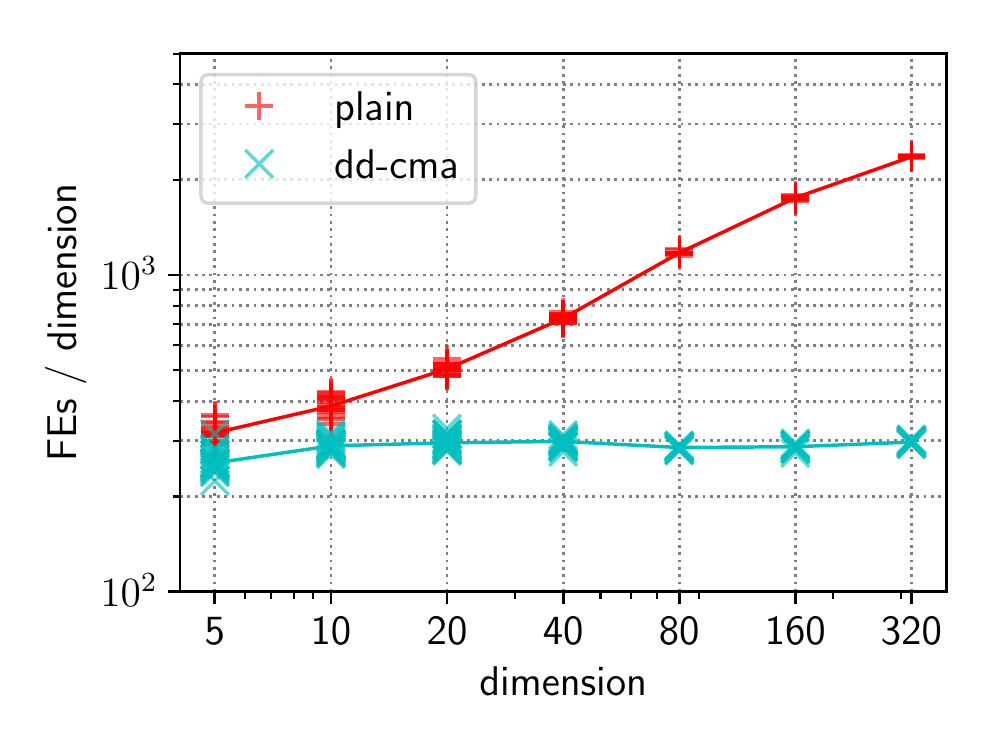}%
    \caption{Ell-Cig}\label{fig:fes-ellcig}%
  \end{subfigure}%
  \begin{subfigure}{0.33\textwidth}%
    \includegraphics[trim={0.4cm 0.5cm 0.4cm 0.4cm},clip,width=\hsize]{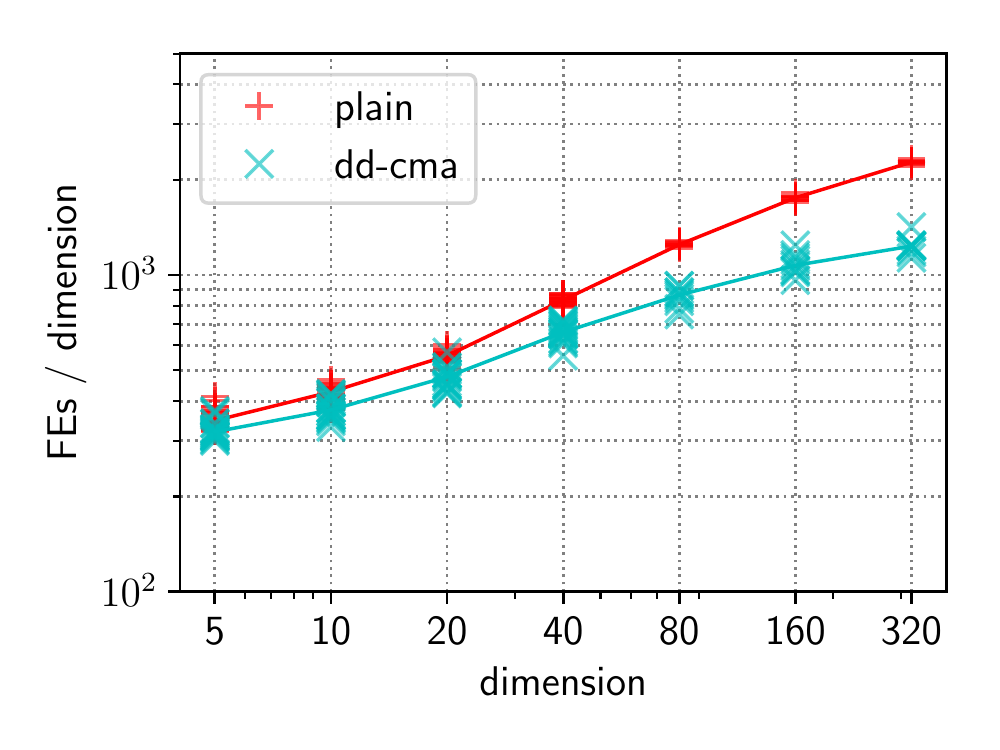}%
    \caption{Ell-Dis}\label{fig:fes-elldis}%
  \end{subfigure}%
  \begin{subfigure}{0.33\textwidth}%
    \includegraphics[trim={0.4cm 0.5cm 0.4cm 0.4cm},clip,width=\hsize]{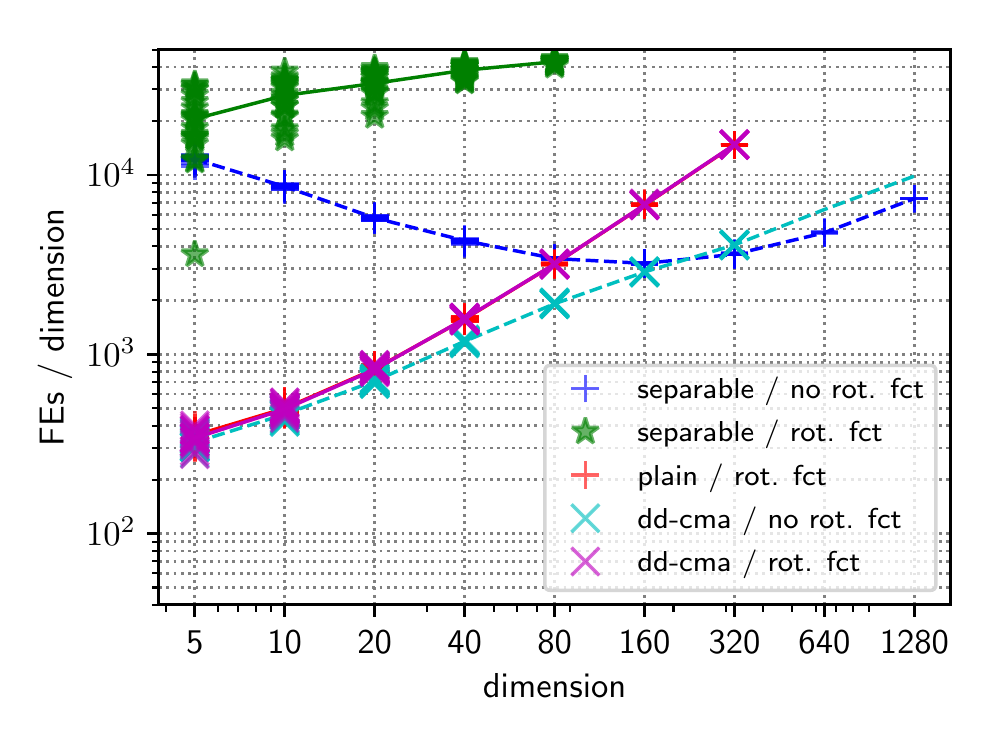}%
    \caption{Rosenbrock}\label{fig:fes-ros}%
  \end{subfigure}%
  \caption{%
  Examples where \DDCMA{} not only performs on par with the better of
       plain and separable CMA, but outperforms both at least in some dimension up to $n=320$ (up to $n = 1280$ only for separable CMA on the non-rotated Rosenbrock function).
       Function evaluations to reach the target $f$-value of $10^{-8}$ divided by $n$. Each line indicates the median of each setting (solid: rotated function, dashed: non-rotated function).
       The line for \DDCMA{} is extrapolated up to $n=1280$ with the least square log-log-linear regression on the median values.
}\label{fig:fes2}
\end{figure}

Figures~\ref{fig:fes-ellcig} and \ref{fig:fes-elldis} show the results on Ell-Cig and Ell-Dis functions, which are non-separable ill-conditioned functions with additional coordinate-wise scaling, Figure~\ref{fig:fes-ros} shows the results on non-rotated and rotated Rosenbrock functions.
The separable CMA locates the target $f$-value within the given budget only on the non-rotated Rosenbrock function in smaller dimension.
The experiment on the Ell-Cig function reveals that diagonal decoding can improve the scaling of the plain CMA \emph{even on non-separable functions}.
The improvement on the Ell-Dis function is much less pronounced.
The reason why, compared to the plain CMA, \DDCMA{} is more suitable for Ell-Cig than for Ell-Dis is discussed in relation with Figure~\ref{fig:conv} below.
On the non-rotated Rosenbrock function, \DDCMA{} becomes advantageous as the dimension increases, just as the separable CMA is faster than the plain CMA when $n>80$.

\begin{figure}[t]\centering
  \begin{subfigure}{\textwidth}%
    \includegraphics[width=\hsize]{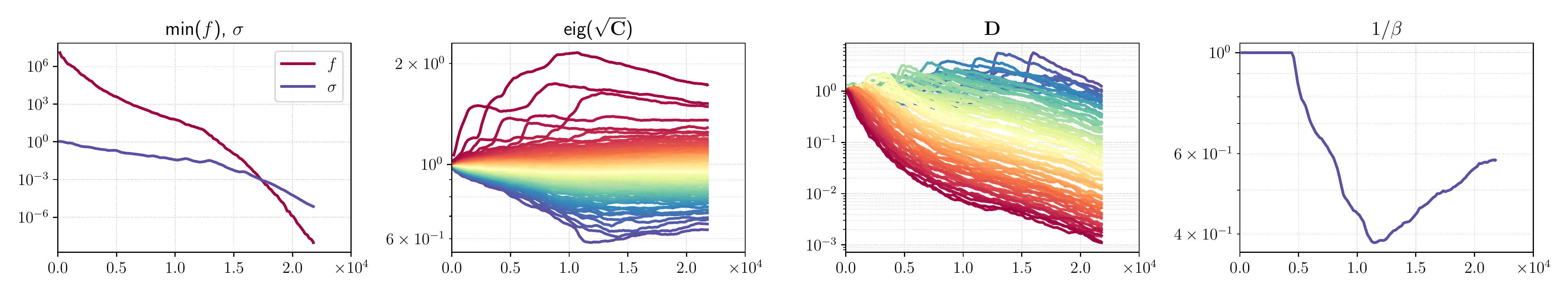}%
    \caption{separable Ellipsoid}\label{fig:sepell}%
  \end{subfigure}%
  \\
  \begin{subfigure}{\textwidth}%
    \includegraphics[width=\hsize]{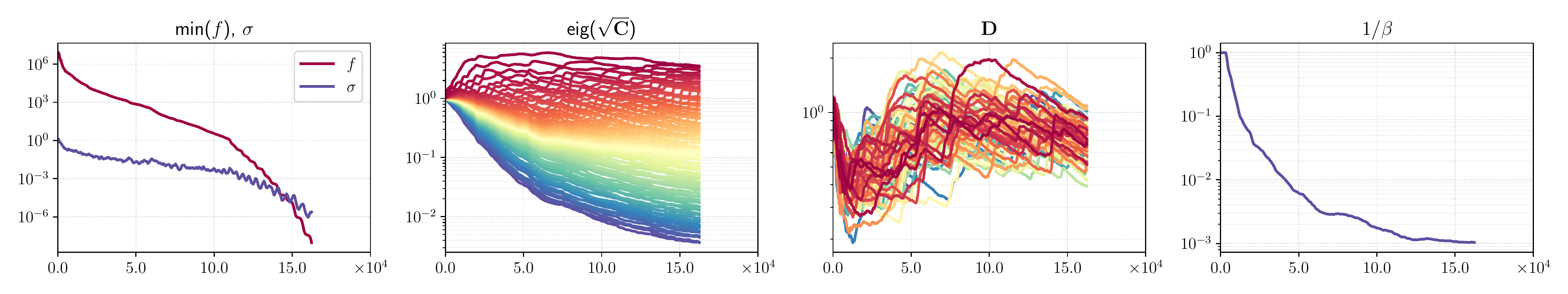}%
    \\
    \includegraphics[width=\hsize]{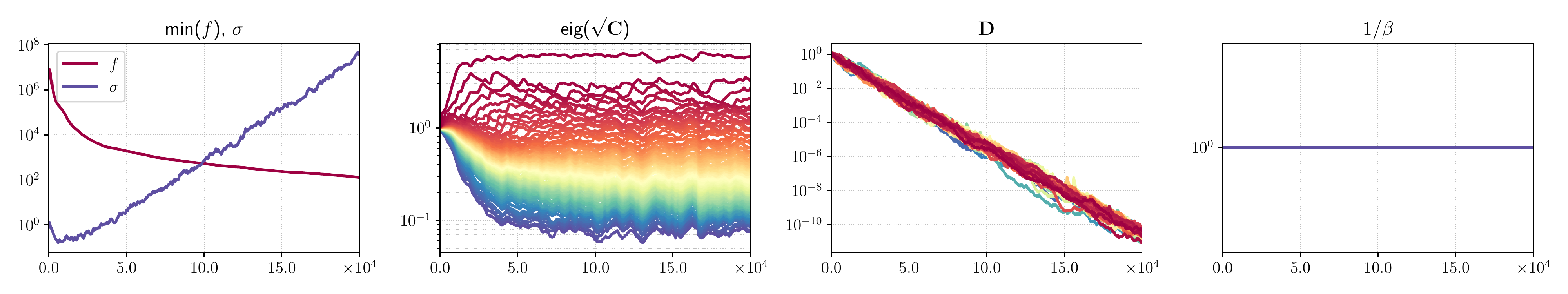}%
    \caption{rotated Ellipsoid, below with fixed $\beta=1$
    instead of \eqref{eq:ddamp}}\label{fig:rotell}%
  \end{subfigure}%
  \\
  \begin{subfigure}{\textwidth}%
    \includegraphics[width=\hsize]{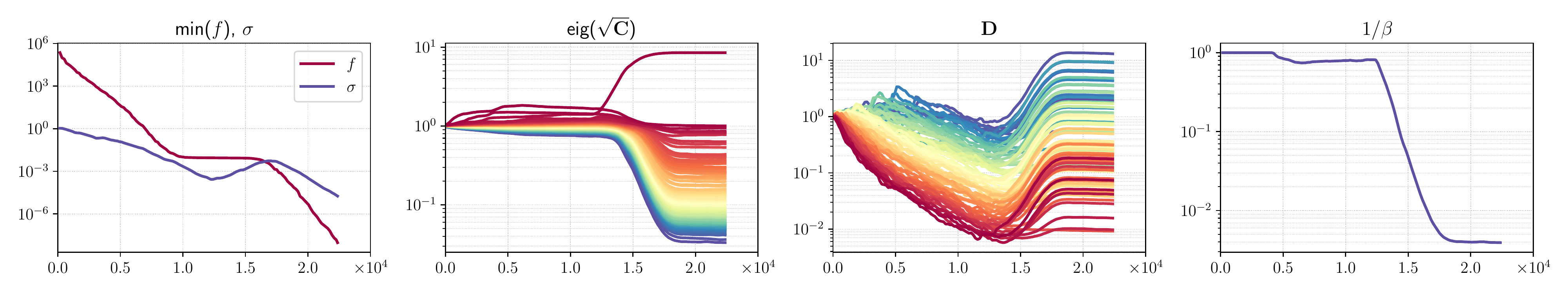}%
    \caption{Ellipsoid-Cigar}\label{fig:ellcig}%
  \end{subfigure}%
  \\
  \begin{subfigure}{\textwidth}%
    \includegraphics[width=\hsize]{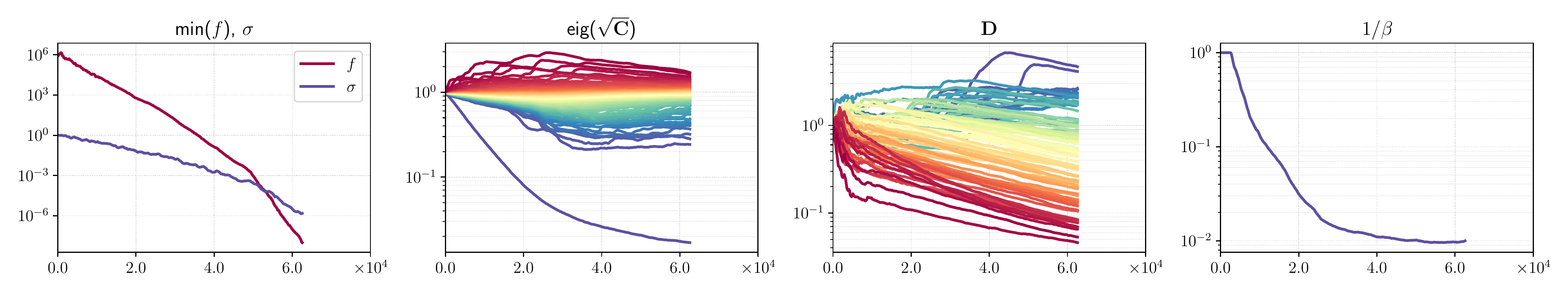}%
    \caption{Ellipsoid-Discus}\label{fig:elldis}%
  \end{subfigure}%
    
  \caption{Typical runs of \DDCMA{}-ES on 80 dimensional test problems (x-axis: function evaluations). Note that the change of $\D$ partially comes from the update of $\C$.}\label{fig:conv}
\end{figure}

Figure~\ref{fig:conv} visualizes insights in the typical behaviour of \DDCMA{} on different functions. On the separable Ellipsoid, the correlation matrix deviates from the identity matrix due to stochastics and the initial parameter setting, and the inverse damping decreases marginally.
This effect is more pronounced for $n > 320$ and is the reason for the impaired scaling of \DDCMA{} on the separable Ellipsoid in Figure~\ref{fig:fes-ellipsoid}.
The diagonal decoding matrix, i.e., variance matrix, adapts the coordinate-wise scaling efficiently as long as the condition number of the correlation matrix $\C$ is not too high.
On the rotated Ellipsoid function, where the diagonal elements of the inverse Hessian of the objective function, $\frac12 \mathbf{R}^\T\D_\mathrm{ell}^{-6}\mathbf{R}$, likely have similar values,
the diagonal decoding matrix remains close to the identity and the correlation matrix learns the ill-conditioning.
The inverse damping parameter decreases so that the adaptation of the diagonal decoding matrix does not disturb the adaptation of the distribution shape. Figure~\ref{fig:rotell}, below, shows that adaptive diagonal decoding without the dynamic damping factor \eqref{eq:ddamp} severely disturbs the adaptation of $\C$ on the rotated Ellipsoid.

Ideally, the diagonal decoding matrix $\D$ adapts the coordinate-wise standard deviation and $\C$ adapts the correlation matrix of the sampling distribution.
If the correlation matrix $\C$ is easier to adapt than the full covariance matrix $\D\C\D$, we expect a speed-up of the adaptation of the distribution shape. The functions Ell-Cig and Ell-Dis in Figures~\ref{fig:ellcig} and \ref{fig:elldis} are such examples. Once $\D$ becomes inversely proportional to $\D_\text{ell}^2$, the problems become rotated Cigar and rotated Discus functions, respectively.
The run on the Ell-Cig function in \ref{fig:ellcig} depicts the ideal case. The coordinate-wise scaling is first adapted by $\D$ and then the correlation matrix learns a long Cigar axis. The inverse damping factor is kept at a high value similar to the one observed on the separable Ellipsoid and it starts decreasing only after $\D$ is adapted. 
On the Ell-Dis function (\ref{fig:elldis}) however,
the short non-coordinate axis is learned (too) quickly by the correlation matrix.
Therefore, the inverse damping factor decreases before $\D$ has adapted the full coordinate-wise scaling. Since the function value is more sensitive in the subspace corresponding to great eigenvalues of the Hessian of the objective and the ranking of candidate solutions is mostly determined by this subspace, the CMA-ES first shrinks the distribution in this subspace. This is the main reason why \DDCMA{} is more efficient on Ell-Cig than on Ell-Dis.

\begin{figure}[t]\centering
  \begin{subfigure}{0.33\textwidth}%
    \includegraphics[trim={0.4cm 0.5cm 0.4cm 0.4cm},clip,width=\hsize]{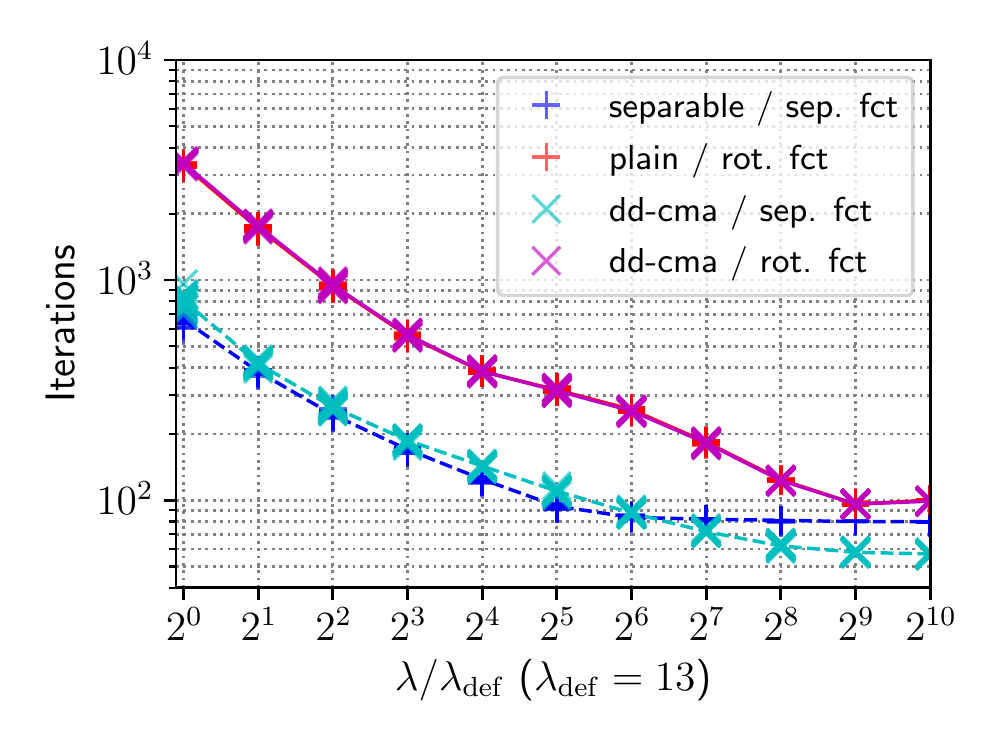}%
    \caption{Ellipsoid}\label{fig:lam-ell}%
  \end{subfigure}%
  \begin{subfigure}{0.33\textwidth}%
    \includegraphics[trim={0.4cm 0.5cm 0.4cm 0.4cm},clip,width=\hsize]{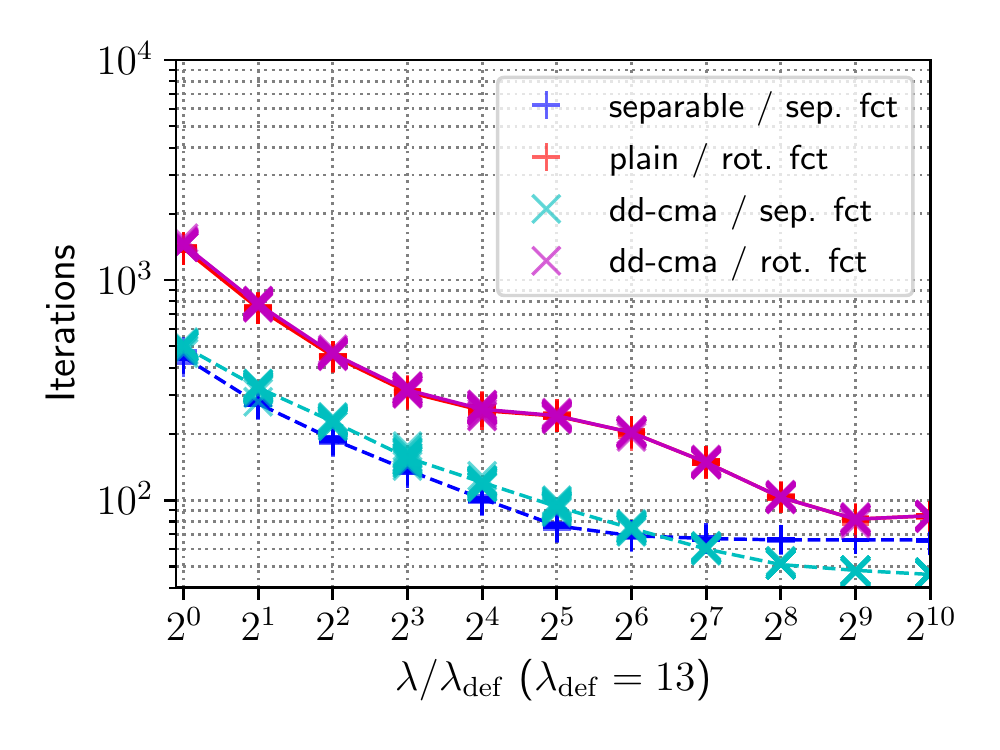}%
    \caption{Discus}\label{fig:lam-dis}%
  \end{subfigure}%
  \begin{subfigure}{0.33\textwidth}%
    \includegraphics[trim={0.4cm 0.5cm 0.4cm 0.4cm},clip,width=\hsize]{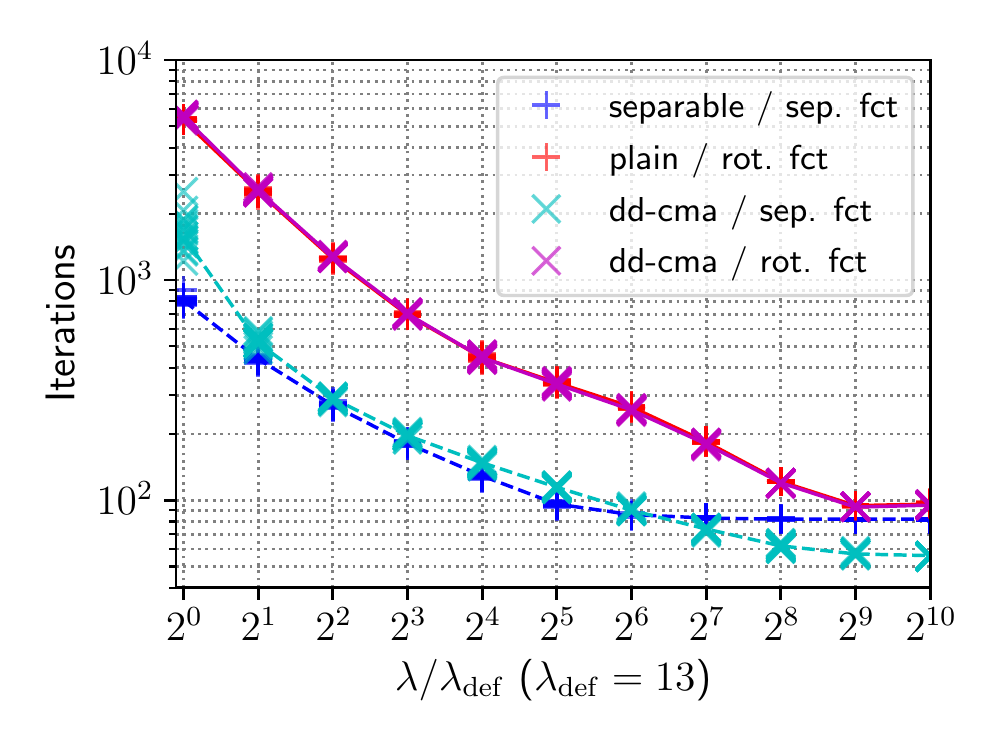}%
    \caption{TwoAxes}\label{fig:lam-two}%
  \end{subfigure}%
  \caption{Number of iterations to reach the target $f$-value of $10^{-8}$ and its median (solid line: rotated function, dashed line: non-rotated function) versus population size on the 40 dimensional Ellipsoid, Discus, and TwoAxes functions.}
\label{fig:lam}
\end{figure}

Figure~\ref{fig:lam} shows, similar to Figure~\ref{fig:active}, the number of iterations versus the population size $\lambda$ on three functions in dimension 40.
For all larger populations sizes up to $13312$, \DDCMA{} performs on par with the better of plain and separable CMA, as ideally to be expected.

We remark that \DDCMA{} with default $\lambda=13$ has a relatively large variance in performance on the separable TwoAxes function.
Increasing $\lambda$ from its default value reduces this variance and even reduces the number of $f$-calls to reach the target.
This defect of \DDCMA{} is even more pronounced for higher dimensions. It may suggest to increase the default $\lambda$ for \DDCMA{}. We leave this to be addressed in future work.

\providecommand{\figlamprobras}{0.45}
\begin{figure}[t]\centering
  \begin{subfigure}{\figlamprobras\textwidth}%
  \centering\sf\small Bohachevsky\\
    \includegraphics[trim={0.4cm 0.5cm 0.4cm 0.4cm},clip,width=\hsize]{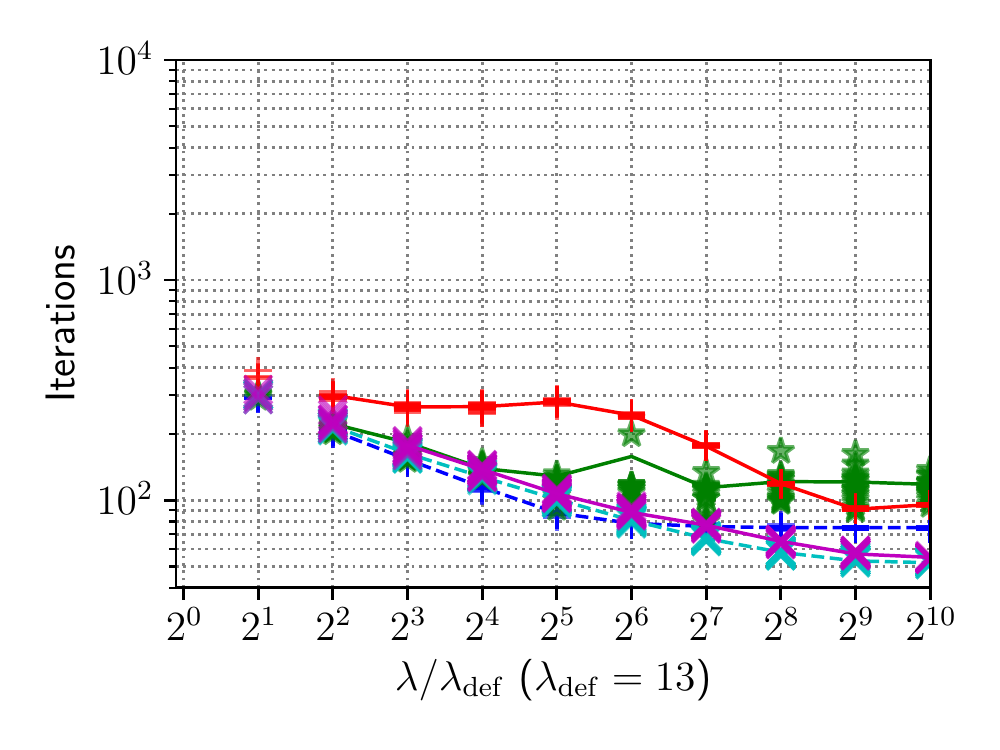}%
    \label{fig:lam-boh}%
  \end{subfigure}%
  \qquad%
  \begin{subfigure}{\figlamprobras\textwidth}%
  \centering\sf\small Rastrigin\\
    \includegraphics[trim={0.4cm 0.5cm 0.4cm 0.4cm},clip,width=\hsize]{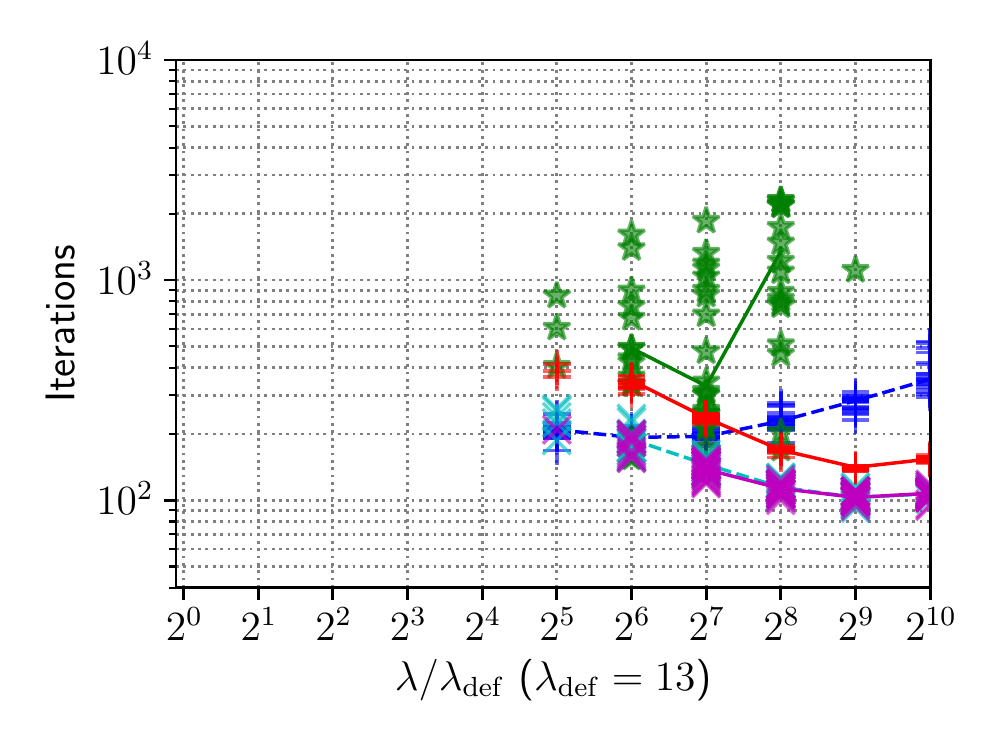}%
    \label{fig:lam-ras}%
  \end{subfigure}%
  \\%
  \begin{subfigure}{\figlamprobras\textwidth}%
    \includegraphics[trim={0.4cm 0.5cm 0.4cm 0.4cm},clip,width=\hsize]{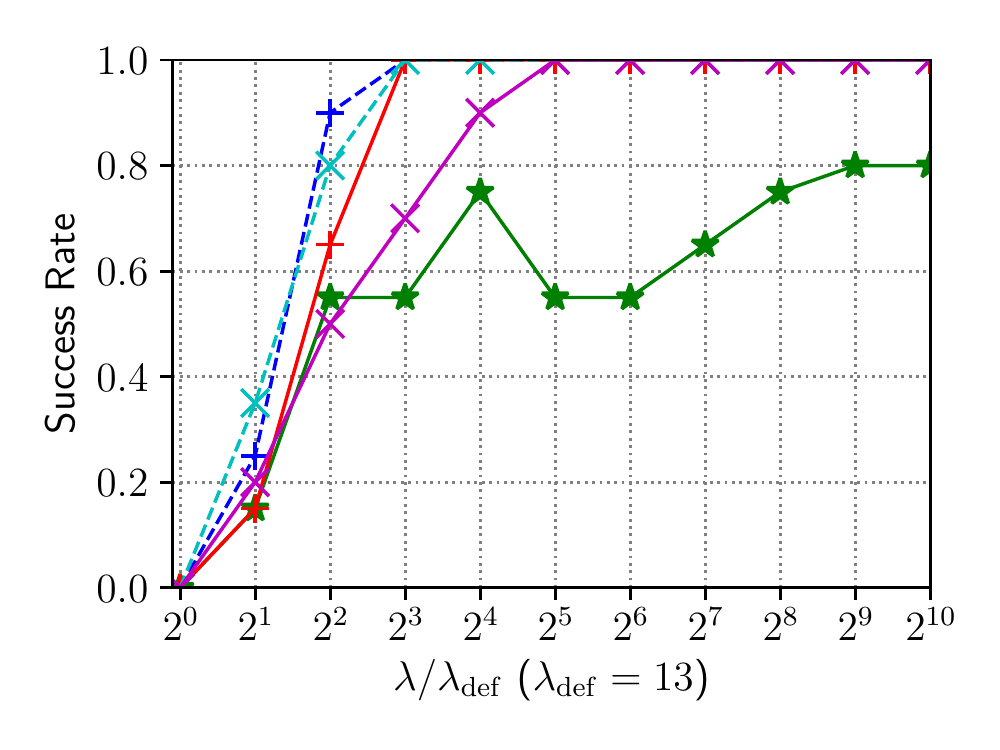}%
    \label{fig:lamprob-boh}%
  \end{subfigure}%
  \qquad%
  \begin{subfigure}{\figlamprobras\textwidth}%
    \includegraphics[trim={0.4cm 0.5cm 0.4cm 0.4cm},clip,width=\hsize]{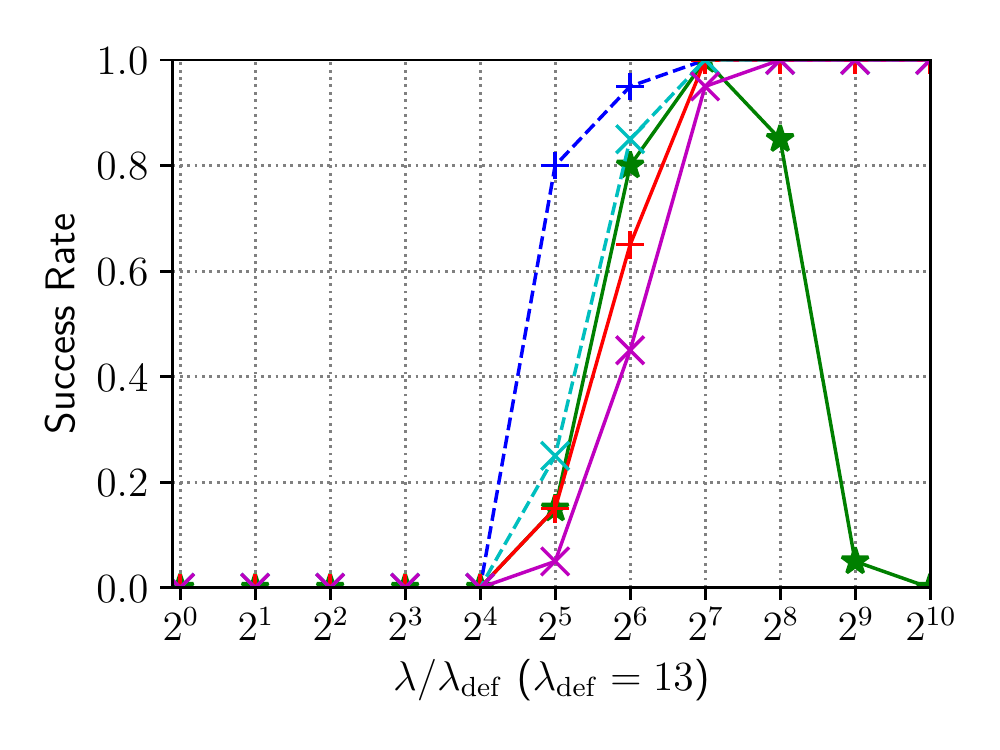}%
    \label{fig:lamprob-ras}%
  \end{subfigure}%
  \\%
  \begin{subfigure}{\figlamprobras\textwidth}%
    \includegraphics[trim={0.4cm 0.5cm 0.4cm 0.4cm},clip,width=\hsize]{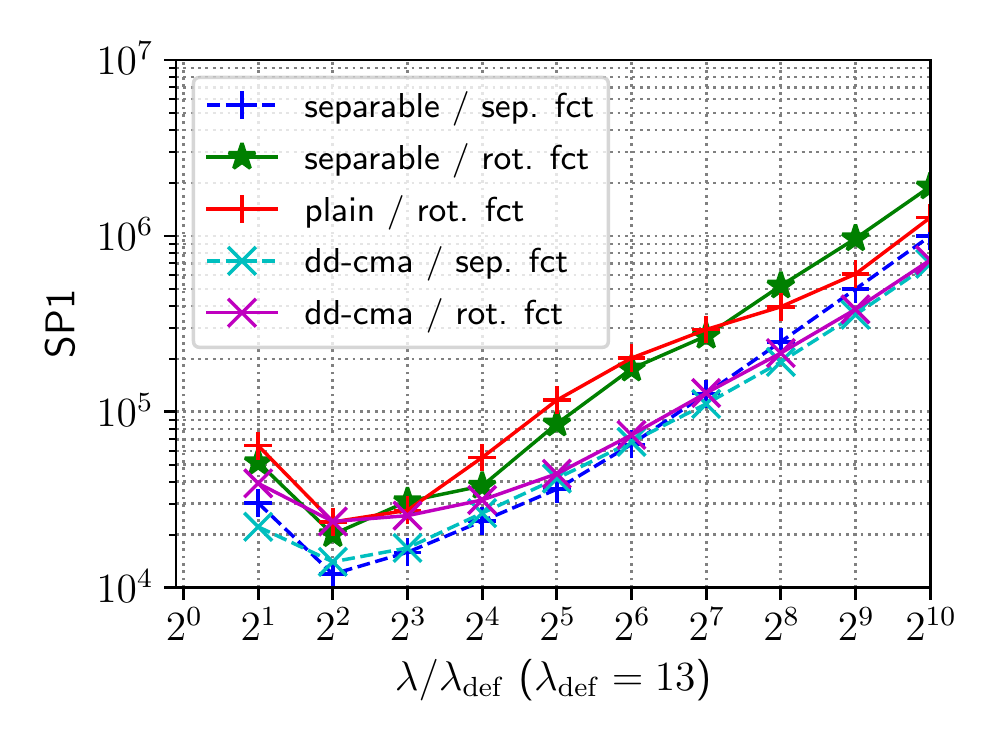}%
    \label{fig:lamsp1-boh}%
  \end{subfigure}%
  \qquad%
  \begin{subfigure}{\figlamprobras\textwidth}%
    \includegraphics[trim={0.4cm 0.5cm 0.4cm 0.4cm},clip,width=\hsize]{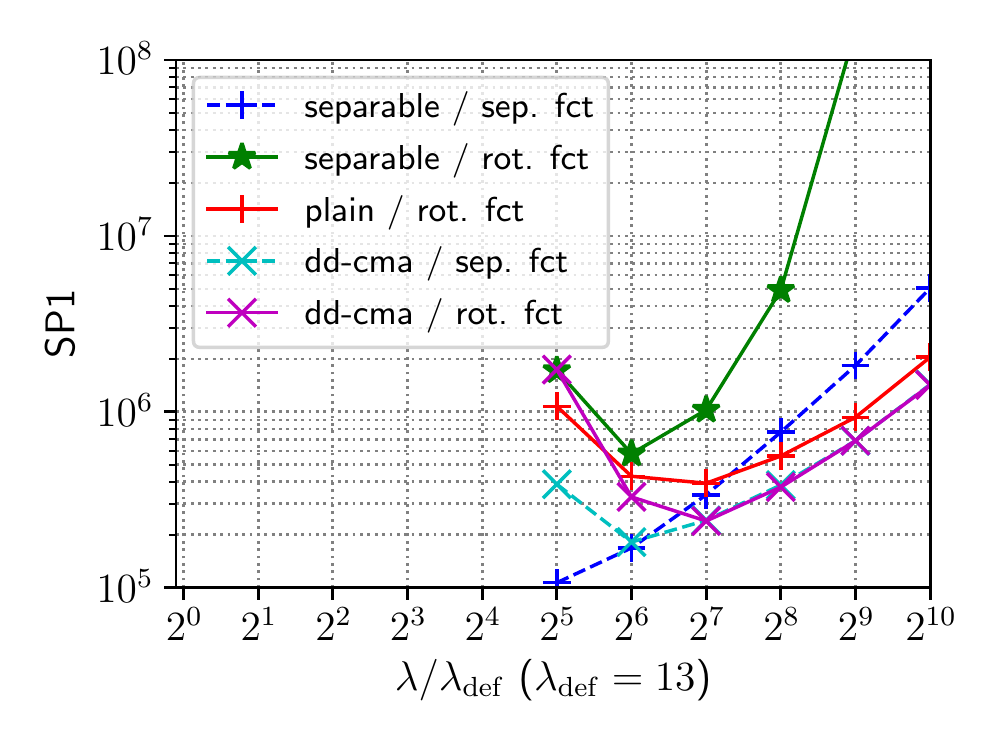}%
    \label{fig:lamsp1-ras}%
  \end{subfigure}%
  \caption{Number of iterations (top) spent by each algorithm until it reaches the target $f$-value of $10^{-8}$ on the 40 dimensional Bohachevsky and Rastrigin functions and its median (line), success rates (middle), and the average number of evaluations in successful runs divided by the success rate (SP1, bottom).
}\label{fig:rastrigin}
\end{figure}

Figure~\ref{fig:rastrigin} shows the number of iterations in successful runs, the success rate and the average number of evaluations in successful runs divided by the success rate \citep{price1997differential, auger2005performance} on two multimodal $40$-dimensional functions.
The maximum numbers of $f$-calls are $5n \times 10^4$ on Bohachevsky and $2n \times 10^5$ on Rastrigin.
Comparing separable to \DDCMA{} on separable functions and plain to \DDCMA{} on rotated functions, \DDCMA{} tends to spend less iterations but to require greater $\lambda$ to reach the same success rate. The latter is attributed to doubly shrinking the overall variance by updating $\C$ and $\D$. For $\lambda \gg n$, we have $\cmu \approx 1$ and $\cmud \approx 1$ and the effect of shrinking the overall variance due to $\C$ and $\D$ updates is not negligible. Then, the overall variance is smaller than in plain and separable CMA, which also leads to faster convergence.
This effect disappears if we force the $\D$ update to keep its determinant unchanged, which can be realized by replacing $\Delta_D$ in \eqref{eq:dup} with $\Delta_D - (\Tr(\Delta_D)/n) \eye$.

\section{Summary and Conclusion}\label{sec:conc}

In this paper we have put forward several techniques to improve multi-recombinative non-elitistic covariance matrix adaptation evolution strategies without changing their internal computation effort.


The first component is concerned with the active covariance matrix update, which utilizes unsuccessful candidate solutions to actively shrink the sampling distribution in unpromising directions. We propose two ways to guarantee the positive definiteness of the covariance matrix by rescaling unsuccessful candidate solutions.

The second and main component is a diagonal acceleration of CMA by adaptive diagonal decoding, \DDCMA.
The covariance matrix adaptation is accelerated by adapting a coordinate-wise scaling separately from the positive definite symmetric covariance matrix.
This drastically accelerates the adaptation speed on high-dimensional functions with coordinate-wise ill-conditioning, whereas it does not significantly influence the adaptation of the covariance matrix on highly ill-conditioned non-separable functions without coordinate-wise scaling component.

The last component is a set of improved default parameters for CMA.
The scaling of the learning rates are relaxed from $\Theta(1/n^2)$ to $\Theta(1/n^{1.75})$ for default CMA and from $\Theta(1/n)$ to $\Theta(1/n^{0.75})$ for separable CMA. This contributes to accelerate the learning in all investigated CMA variants on high dimensional problems, say $n \geq 100$.

Algorithm selection as well as hyper-parameter tuning of an algorithm is a troublesome issue in black-box optimization. For CMA-ES, we needed to make a decision whether we use the plain CMA or the separable CMA, based on the limited knowledge on a problem of interest. If we select the separable CMA but the objective function is non-separable and highly ill-conditioned, we will not obtain a reasonable solution within most given function evaluation budgets. Therefore, plain CMA is a safer choice, though it may be slower on nearly separable functions.
The \DDCMA{} automizes this decision and achieves faster adaptation speed on functions with ill-conditioned variables usually without compromising the performance on non-separable and highly ill-conditioned functions.
Moreover, the advantage of \DDCMA{} is not limited to separable problems.
We found a class of non-separable problems with variables of different sensitivity on which \DDCMA-ES decidedly outperforms CMA-ES \emph{and} separable CMA-ES (see Figure~\ref{fig:fes-ellcig}).
As \DDCMA{}-ES improves the performance on a relatively wide range of problems with mis-scaling of variables, it is a prime candidate to become the default CMA-ES variant.

\section{Acknowledgements}
  The authors thank the associate editor and the anonymous reviewers for their valuable comments and suggestions.
  This work is partly supported by JSPS KAKENHI Grant Number 19H04179.

{\small
\bibliographystyle{apalike}
\bibliography{paper}

\begin{thebibliography}{}

\bibitem[Akimoto et~al., 2014]{Akimoto2014a}
Akimoto, Y., Auger, A., and Hansen, N. (2014).
\newblock Comparison-based natural gradient optimization in high dimension.
\newblock In {\em Proceedings of Genetic and Evolutionary Computation
  Conference}, pages 373--380.

\bibitem[Akimoto et~al., 2018]{AAH2017TCS}
Akimoto, Y., Auger, A., and Hansen, N. (2018).
\newblock Quality gain analysis of the weighted recombination evolution
  strategy on general convex quadratic functions.
\newblock {\em Theoretical Computer Science}.

\bibitem[Akimoto and Hansen, 2016]{akimoto2016gecco}
Akimoto, Y. and Hansen, N. (2016).
\newblock Projection-based restricted covariance matrix adaptation for high
  dimension.
\newblock In {\em Genetic and Evolutionary Computation Conference, {GECCO}
  2016, Denver, Colorado, USA, July 20-24, 2016}, pages 197--204. ACM.

\bibitem[Akimoto et~al., 2010]{Akimoto2010ppsn}
Akimoto, Y., Nagata, Y., Ono, I., and Kobayashi, S. (2010).
\newblock Bidirectional relation between {CMA} evolution strategies and natural
  evolution strategies.
\newblock In {\em Parallel Problem Solving from Nature - PPSN XI}, number 6238
  in LNCS, pages 154--163. Springer-Verlag.

\bibitem[Akimoto et~al., 2008]{Akimoto2008gecco}
Akimoto, Y., Sakuma, J., Ono, I., and Kobayashi, S. (2008).
\newblock Functionally specialized {CMA-ES}: a modification of {CMA-ES} based
  on the specialization of the functions of covariance matrix adaptation and
  step size adaptation.
\newblock In {\em GECCO '08: Proceedings of the 10th annual conference on
  Genetic and evolutionary computation}. ACM.

\bibitem[Arnold, 2006]{Arnold2006}
Arnold, D.~V. (2006).
\newblock Weighted multirecombination evolution strategies.
\newblock {\em Theoretical Computer Science}, 361:18--37.

\bibitem[Arnold and Hansen, 2010]{Arnold2010gecco}
Arnold, D.~V. and Hansen, N. (2010).
\newblock Active covariance matrix adaptation for the {(1+1)-{CMA-ES}}.
\newblock In {\em Proceedings of Genetic and Evolutionary Computation
  Conference}, pages 385--392, Portland, Oregon. ACM.

\bibitem[Auger and Hansen, 2005a]{auger2005performance}
Auger, A. and Hansen, N. (2005a).
\newblock Performance evaluation of an advanced local search evolutionary
  algorithm.
\newblock In {\em 2005 IEEE Congress on Evolutionary Computation}, pages
  1777--1784. Ieee.

\bibitem[Auger and Hansen, 2005b]{Auger2005cecgs}
Auger, A. and Hansen, N. (2005b).
\newblock A restart {CMA} evolution strategy with increasing population size.
\newblock In {\em 2005 IEEE Congress on Evolutionary Computation}, pages
  1769--1776. Ieee.

\bibitem[Beyer, 1995]{beyer1995toward}
Beyer, H.-G. (1995).
\newblock Toward a theory of evolution strategies: On the benefits of sex—the
  ($\mu$/$\mu$, $\lambda$) theory.
\newblock {\em Evolutionary Computation}, 3(1):81--111.

\bibitem[Glasmachers et~al., 2010]{Glasmachers2010gecco}
Glasmachers, T., Schaul, T., Yi, S., Wierstra, D., and Schmidhuber, J. (2010).
\newblock Exponential natural evolution strategies.
\newblock In {\em Proceedings of Genetic and Evolutionary Computation
  Conference}, pages 393--400. ACM.

\bibitem[Hansen, 2000]{Hansen2000ppsn}
Hansen, N. (2000).
\newblock Invariance, self-adaptation and correlated mutations in evolution
  strategies.
\newblock In Schoenauer, M., Deb, K., Rudolph, G., Yao, X., Lutton, E., Guerv~o
  s, J. J.~M., and Schwefel, H.-P., editors, {\em Parallel Problem Solving from
  Nature - PPSN VI}, pages 355--364. Springer.

\bibitem[Hansen, 2009]{Hansen2009geccobbobbi}
Hansen, N. (2009).
\newblock Benchmarking a bi-population {CMA-ES} on the bbob-2009 function
  testbed.
\newblock In {\em Workshop Proceedings of the GECCO Genetic and Evolutionary
  Computation Conference}, pages 2389--2395, New York, New York, USA. ACM
  Press.

\bibitem[Hansen et~al., 2015]{Hansen:2013wf}
Hansen, N., Arnold, D.~V., and Auger, A. (2015).
\newblock Evolution strategies.
\newblock In Kacprzyk, J. and Pedrycz, W., editors, {\em Handbook of
  Computational Intelligence}, pages 871--898. Springer.

\bibitem[Hansen and Auger, 2014]{Hansen:2013vt}
Hansen, N. and Auger, A. (2014).
\newblock Principled design of continuous stochastic search: From theory to
  practice.
\newblock In Borenstein, Y. and Moraglio, A., editors, {\em Theory and
  Principled Methods for the Design of Metaheuristics}. Springer.

\bibitem[Hansen and Kern, 2004]{Hansen2004ppsn}
Hansen, N. and Kern, S. (2004).
\newblock Evaluating the {CMA} evolution strategy on multimodal test functions.
\newblock In {\em Parallel Problem Solving from Nature - PPSN VIII}, pages
  282--291. Springer.

\bibitem[Hansen et~al., 2003]{Hansen2003ec}
Hansen, N., Muller, S.~D., and Koumoutsakos, P. (2003).
\newblock Reducing the time complexity of the derandomized evolution strategy
  with covariance matrix adaptation ({CMA-ES}).
\newblock {\em Evolutionary Computation}, 11(1):1--18.

\bibitem[Hansen and Ostermeier, 2001]{Hansen2001ec}
Hansen, N. and Ostermeier, A. (2001).
\newblock Completely derandomized self-adaptation in evolution strategies.
\newblock {\em Evolutionary Computation}, 9(2):159--195.

\bibitem[Hansen et~al., 2011]{Hansen2011impacts}
Hansen, N., Ros, R., Mauny, N., Schoenauer, M., and Auger, A. (2011).
\newblock Impacts of invariance in search: When {CMA-ES} and {PSO} face
  ill-conditioned and non-separable problems.
\newblock {\em Applied Soft Computing}, 11(8):5755--5769.

\bibitem[Harik and Lobo, 1999]{harik1999parameter}
Harik, G.~R. and Lobo, F.~G. (1999).
\newblock A parameter-less genetic algorithm.
\newblock In {\em Proceedings of the 1st Annual Conference on Genetic and
  Evolutionary Computation - Volume 1}, GECCO'99, pages 258--265, San
  Francisco, CA, USA. Morgan Kaufmann Publishers Inc.

\bibitem[Harville, 2008]{Harville2008book}
Harville, D.~A. (2008).
\newblock {\em Matrix Algebra from a Statistician's Perspective}.
\newblock Springer-Verlag.

\bibitem[Heijmans, 1999]{Heijmans1999}
Heijmans, R. (1999).
\newblock When does the expectation of a ratio equal the ratio of expectations?
\newblock {\em Statistical Papers}, 40:107--115.

\bibitem[Jastrebski and Arnold, 2006]{Jastrebski2006cec}
Jastrebski, G. and Arnold, D.~V. (2006).
\newblock Improving evolution strategies through active covariance matrix
  adaptation.
\newblock In {\em 2006 IEEE Congress on Evolutionary Computation}, pages
  9719--9726. {IEEE}.

\bibitem[Karafotias et~al., 2015]{Karafotias2015}
Karafotias, G., Hoogendoorn, M., and Eiben, A.~E. (2015).
\newblock Parameter control in evolutionary algorithms: Trends and challenges.
\newblock {\em IEEE Transactions on Evolutionary Computation}, 19(2):167--187.

\bibitem[Knight and Lunacek, 2007]{Knight:2007}
Knight, J.~N. and Lunacek, M. (2007).
\newblock Reducing the space-time complexity of the cma-es.
\newblock In {\em Proceedings of the 9th Annual Conference on Genetic and
  Evolutionary Computation}, GECCO '07, pages 658--665, New York, NY, USA. ACM.

\bibitem[Krause et~al., 2016]{Krause2016nips}
Krause, O., Arbonès, D.~R., and Igel, C. (2016).
\newblock Cma-es with optimal covariance update and storage complexity.
\newblock In {\em Advances in Neural Information Processing Systems}, pages
  370--378.

\bibitem[Krause and Glasmachers, 2015]{Krause2015}
Krause, O. and Glasmachers, T. (2015).
\newblock A {CMA-ES} with multiplicative covariance matrix updates.
\newblock In {\em Proceedings of the 2015 Annual Conference on Genetic and
  Evolutionary Computation}, GECCO '15, pages 281--288, New York, NY, USA. ACM.

\bibitem[Loshchilov, 2017]{Loshchilov:2017}
Loshchilov, I. (2017).
\newblock {LM-CMA}: An alternative to {L-BFGS} for large-scale black box
  optimization.
\newblock {\em Evol. Comput.}, 25(1):143--171.

\bibitem[Loshchilov et~al., 2012]{loshchilov2012alternative}
Loshchilov, I., Schoenauer, M., and Sebag, M. (2012).
\newblock Alternative restart strategies for cma-es.
\newblock In {\em International Conference on Parallel Problem Solving from
  Nature}, pages 296--305. Springer.

\bibitem[Ollivier et~al., 2017]{Ollivier2017jmlr}
Ollivier, Y., Arnold, L., Auger, A., and Hansen, N. (2017).
\newblock Information-geometric optimization algorithms: A unifying picture via
  invariance principles.
\newblock {\em Journal of Machine Learning Research}, 18(18):1--65.

\bibitem[Ostermeier et~al., 1994]{Ostermeier1994ec}
Ostermeier, A., Gawelczyk, A., and Hansen, N. (1994).
\newblock A derandomized approach to self-adaptation of evolution strategies.
\newblock {\em Evol. Comput.}, 2(4):369--380.

\bibitem[Price, 1997]{price1997differential}
Price, K.~V. (1997).
\newblock Differential evolution vs. the functions of the 2nd {ICEO}.
\newblock In {\em Evolutionary Computation, 1997., IEEE International
  Conference on}, pages 153--157. IEEE.

\bibitem[Rios and Sahinidis, 2013]{rios2013derivative}
Rios, L.~M. and Sahinidis, N.~V. (2013).
\newblock Derivative-free optimization: A review of algorithms and comparison
  of software implementations.
\newblock {\em Journal of Global Optimization}, 56(3):1247--1293.

\bibitem[Ros, 2009]{Ros:2009:BSB}
Ros, R. (2009).
\newblock Benchmarking {sep-CMA-ES} on the {BBOB-2009} function testbed.
\newblock In {\em Proceedings of the 11th Annual Conference Companion on
  Genetic and Evolutionary Computation Conference: Late Breaking Papers}, GECCO
  '09, pages 2435--2440, New York, NY, USA. ACM.

\bibitem[Ros and Hansen, 2008]{Ros2008ppsn}
Ros, R. and Hansen, N. (2008).
\newblock A simple modification in {CMA-ES} achieving linear time and space
  complexity.
\newblock In {\em Parallel Problem Solving from Nature - PPSN X}, pages
  296--305. Springer.

\bibitem[Sun et~al., 2009]{Sun2009gecco}
Sun, Y., Wierstra, D., Schaul, T., and Schmidhuber, J. (2009).
\newblock Efficient natural evolution strategies.
\newblock In {\em Proceedings of the 11th Annual conference on Genetic and
  evolutionary computation - GECCO '09}, pages 539--545, New York, New York,
  USA. ACM Press.

\bibitem[Wierstra et~al., 2014]{wierstra2014jmlr}
Wierstra, D., Schaul, T., Glasmachers, T., Sun, Y., Peters, J., and
  Schmidhuber, J. (2014).
\newblock Natural evolution strategies.
\newblock {\em Journal of Machine Learning Research}, 15:949--980.

\bibitem[Wierstra et~al., 2008]{Wierstra2008cec}
Wierstra, D., Schaul, T., Peters, J., and Schmidhuber, J. (2008).
\newblock Natural evolution strategies.
\newblock In {\em IEEE Congress on Evolutionary Computation}, pages 3381--3387.

\end{thebibliography}
}

\end{document}